\DeclareMathOperator*{\argmin}{argmin}
\theoremstyle{plain}
\newtheorem{theorem}{Theorem}[]
\newtheorem{lemma}[]{Lemma}
\newtheorem*{cor}{Corollary}
\begin{document}

\title{Efficacy of regularized multi-task learning based on SVM models}



\author{Shaohan~Chen,~Zhou~Fang,~Sijie~Lu,~and~Chuanhou~Gao,~\IEEEmembership{Senior Member,~IEEE}
\thanks{
This work was supported by the National Natural Science Foundation of China under grants 62111530247 and 12071428, and the Zhejiang Provincial Natural Science Foundation of China under grant LZ20A010002.}
\thanks{S. Chen, S. Lu and C. Gao are with the School of Mathematical Sciences, Zhejiang
University, Hangzhou 310027, China (Corresponding e-mail: gaochou@zju.edu.cn (C. Gao)).}
\thanks{Z. Fang is with the Department of Biosystems Science and Engineering, ETH Zurich, Switzerland.}}

\IEEEtitleabstractindextext{
\begin{abstract}
This paper investigates the efficacy of a regularized multi-task learning (MTL) framework based on SVM (M-SVM) to answer whether MTL always provides reliable results and how MTL outperforms independent learning. We first find that M-SVM is Bayes risk consistent in the limit of large sample size. This implies that despite the task dissimilarities, M-SVM always produces a reliable decision rule for each task in terms of misclassification error when the data size is large enough. 
Furthermore, we find that the task-interaction vanishes as the data size goes to infinity, and the convergence rates of M-SVM and its single-task counterpart have the same upper bound.
The former suggests that M-SVM cannot improve the limit classifier's performance; based on the latter, we conjecture that the optimal convergence rate is not improved when the task number is fixed. As a novel insight of MTL, our theoretical and experimental results achieved an excellent agreement that the benefit of the MTL methods lies in the improvement of the pre-convergence-rate factor (PCR, to be denoted in Section III) rather than the convergence rate. Moreover, this improvement of PCR factors is more significant when the data size is small. 

\end{abstract}

\begin{IEEEkeywords}
 Multi-task learning, error analysis, learning theory, regularization method, pre-convergence-rate factor.
\end{IEEEkeywords}}

\maketitle

\IEEEdisplaynontitleabstractindextext

\IEEEpeerreviewmaketitle

\ifCLASSOPTIONcompsoc
\IEEEraisesectionheading{\section{Introduction}\label{sec:introduction}}
\else
\section{Introduction}
\label{sec:introduction}
\fi

\IEEEPARstart{R}{ecent} years have seen a considerable success of machine learning technologies, including deep neural networks (DNNs), SVMs, decision trees, etc. These methods usually work well when the training and test data are drawn from the same distribution and ``big data'' is available (e.g., DNNs). 
However, the conditions of the same distribution and ``big data'' cannot always hold in many real-world problems—for example, building self-driving systems and modeling rare diseases in healthcare records. 
Thus, developing methods to learn multiple tasks simultaneously and efficiently is significant. In this case, multi-task learning (MTL) \cite{caruana1998multitask} provides a general way to accommodate these situations. {A vast of elaborate MTL methods have been designed in various settings to solve specific problems \cite{Chen2018,Li2018,Wang2019,wang2018sparse,gu2014multitask,jiang2015multi,jiang2016novel,evgeniou2005learning,yu2005learning,zhang2021survey}}. 

{Also, there have been a lot of notable theoretical justifications for the superiority of MTL over independent learning. Ando et al. \cite{ando2005framework} studied MTL based on structural learning and showed that it estimates the shared hypothesis space more reliably if the number of tasks $T$ is large enough. Baxter \cite{baxter1995learning,baxter2000model} investigated MTL in the framework of bias learning and showed that it significantly reduces the sampling burden for good generalization on novel tasks if $T$ is arbitrarily large. He also illustrated that MTL (within a Bayesian context) efficiently decays the information required to learn each task as $T$ grows considerably \cite{baxter1997bayesian}. Maurer et. al. \cite{maurer2016benefit} discussed MTL in representation learning and showed that it vanishes the cost to learn the representation in the multi-task limit ($T\rightarrow \infty$). Liu et. al. \cite{liu2017algorithm} analyzed MTL using the sample average stability measure and showed that it estimates the shared parameter more accurately when $T$ is large enough. Zhao et. al. \cite{zhang2020generalization} viewed MTL as a vector-valued function learning problem and showed that it requires a smaller sample size of each task to achieve the same performance.}


{The existing theoretical analysis of MTL relies on the pursuit of tighter error bounds, which is not enough to guarantee the superiority of MTL. 
Moreover, as introduced above, 
the advantages of MTL can so far only be seen when $T$ is sufficiently large, which makes MTL challenging to be trusted by users. The arbitrarily large $T$ requirement is rarely satisfied in real applications, e.g., it is often difficult to access a dataset with large $T$ in the healthcare and industrial systems. Therefore, it is crucial to investigate the intrinsic benefits of MTL by applying a more elaborate analysis method to it when $T$ is fixed.}

This paper moves the steps forward and tries to explore the essential benefits of MTL with respect to the misclassification error when $T$ is fixed. {Particularly, we concern with a popular regularized multi-task SVM (M-SVM) proposed by \cite{evgeniou2004regularized}, which provides an avenue to share useful knowledge among tasks through parameters. }
Note that the analysis of the misclassification error has been intensively researched in the single-task learning contexts \cite{wu2006analysis,chen2004support,ying2006online,ying2007learnability,lin2016iterative,fan2017learning}. {This work solved two main technical difficulties of adapting the misclassification error analysis to MTL: considering the interactions between tasks carefully and estimating the additional error caused by the randomness of the sampling frequencies for each task (see \Cref{section 3}).}

We first analyze the asymptotic performance of M-SVM to show that it is Bayes risk consistent in the limit of large sample size. This implies that despite the task dissimilarities, M-SVM always produces a reliable decision rule for each task in terms of the misclassification error when the data size is large enough. 
Second, we find that the task-interaction vanishes as the data size goes to infinity, and the convergence rates of M-SVM and its single-task counterpart have the same upper bound. The former suggests that M-SVM cannot improve the limit classifier's performance; based on the latter, we conjecture that the optimal convergence rate is not improved when the number of tasks is fixed. The intuition of this conjecture is that the generalization capacity mainly depends on the hypothesis space's complexity which MTL cannot change. 

{As a novel insight of MTL, our theoretical (on a one-dimensional classification problem) and experimental results achieved an excellent agreement that the benefit of M-SVM lies in improving the pre-convergence-rate (PCR) factor (which is denoted as the ratio between the excess misclassification error and its real convergence rate in Section III) rather than the convergence rate. Moreover,
our simulation results of several other MTL methods, including L{21}, Lasso, and SRMTL (these models can be found and solved by \cite{zhou2011malsar}) also demonstrate the generality of this new insight in MTL.
As shown in \Cref{theory PCR}, the PCR factor is a function of data size, the similarity of the tasks, and regularization parameters, meaning that MTL can achieve a better performance when there exists a good balance of these quantities. Particularly, the improvement of PCR factors in MTL is more significant when the data size is small. 
To conclude, PCR factor is more suitable and accurate to depict the essential advantages of MTL when $T$ is fixed. }

{

{The rest of this} paper is organized as follows. 
Section \ref{section 2} introduced the basic notations of MTL and an extension of M-SVM.
Section \ref{section 3} provided an asymptotic analysis of M-SVM in the limit of large sample size, based on which we raise the main conjecture in this paper.
In Section \ref{section 4}, both the theoretical and experimental analysis were provided to verify the correctness of our conclusions. Section \ref{section 5} presented several discussions, and Section \ref{section 6} concluded this paper. To improve the readability of this paper, we put all proofs in the appendix.

\section{ METHOD}\label{section 2}

\subsection{Problem settings and notations}
We consider the following MTL setting.
Assume we are facing $T$ learning tasks and samples for each task comes out randomly with probability $p(t)$ at each time, for $t=1,\cdots,T$.
After a short time, we collect $N$ data points {totally} from $T$ tasks and suppose that all these samples belonging to the same space $X\times Y$, where $X \subset \mathbb { R } ^ { d }$ and $Y=\left\{-1,1\right\}$. Specifically, 
for each task $t$, there are $m_{t}$ samples generated from the distribution $P _ { t }(\mathcal{X},\mathcal{Y})$, for $t=1,\cdots,T$. That is,
\begin{equation}
\begin{aligned}
&\left\{ \left\{ \left( \mathbf { x } _ { 11} ,y _ { 11} \right) ,\cdots ,\left( \mathbf { x } _ { m_{1} 1} ,y _ { m_{1} 1} \right) \right\}, \cdots, \right. \\
&\left. \left\{ \left( \mathbf { x } _ { 1T} ,y _ { 1T} \right) ,\cdots ,\left( \mathbf { x } _ { m_{T}T} ,y _ { m_{T}T} \right) \right\} \right\}.
\end{aligned}
\end{equation}
For better adapting to the real scenarios, we assume $p(t)$ is different from each task, while their {distribution functions $P _ { t }(\mathcal{X},\mathcal{Y})$ are closely related}. 
The ideal goal of MTL is to learn $T$ functions simultaneously $f _ { 1} ,\cdots ,f _ { T }$ such that $\text{sgn}(f _ { t }) \left( \mathbf { x } _ { i t } \right)= y _ { i t }$, for $i=1,\cdots,m_t$, $t=1,\cdots, T$.

Based on the above settings, we define the average misclassification error for these $T$ classifiers to be the weighted sum of the corresponding misclassification errors \cite{wu2006analysis}: $\mathcal{U}(\text{sgn}(f _ { 1}),\cdots ,\text{sgn}(f _ { T }))=\sum _ { t= 1} ^ { T } p(t)\mathcal{R}_{t} (\text{sgn}(f_{t}))$, where $\mathcal{R}_{t}(f_t)=\mathbb{ E }_{\mathcal { X },\mathcal{Y}}\mathbf { 1} ( \text{sgn}(f_{t}) ( \mathcal { X } ) \neq \mathcal{Y})$, for $t=1,\cdots, T$, and $\mathbf{1}(A)$ is an indicator function with its value being one if the event $A$ is true, and zero if it is not. We see that the average misclassification error actually measures the risk of applying $f _ { 1},\cdots ,f _ { T }$ to make predictions.

We define the minimizer of the misclassification error for each task as $f^*_{t}=\text{arg}\inf_{f_{t}}\mathcal{R}_{t} (f_{t})$, where the infimum is over all measurable functions. Based on \cite{devroye2013probabilistic}, this minimizer has the expression as $f^*_{t}(x)=\text{sgn}(\eta_{t}(x)-\frac { 1} { 2})$ and is called Bayes rule, where $\eta_{t}(x)=P_{t}(\mathcal{Y}=1|\mathcal{X}=x)$, for $t=1,\cdots,T$.

We now define the average expected error of $T$ function $f _ { 1} ,\cdots ,f _ { T }$ with respect to a loss function $\ell : \mathbb { R } \rightarrow [ 0,\infty )$ as $\mathcal { E } ( f _ { 1},\cdots ,f _ { T } ) = \sum _ { t= 1} ^ { T } p(t)\mathbb { E }_{(\mathcal{X},\mathcal{Y})\sim P_{t} }[ \ell (\mathcal{Y}f_{t}( \mathcal{X} ) ) ]$, then the corresponding average empirical error is $ \mathcal { E }_{z}( f _ { 1},\cdots ,f _ { T } )=\sum _ { t= 1} ^ { T }\sum _ { i= 1} ^ { m_{t} }\frac { 1} { N }\ell (y_{it} f_{t}(\mathbf{x}_{it} ) )$.

{This paper mainly concerns with the following denotations. For each task $t$, we define the \textbf{excess misclassification error} of the classifier $\text{sgn}(f_t)$ as $\mathcal{R}(\text{sgn}(f _ { t}))-\mathcal{R}(f^*_ {t} )$. For all the $T$ tasks, the \textbf{average excess misclassification error} of the classifiers $\text{sgn}(f_t)$ ($t=1,\cdots,T$) is defined as $\mathcal{U}(\text{sgn}(f _ { 1}),\cdots ,\text{sgn}(f _ { T }))-\mathcal{U}(f^*_ { 1} ,\cdots ,f^*_ { T })$, and the \textbf{average excess expected error} of the functions $f_t$ ($t=1,\cdots,T$) is defined as $\mathcal { E } ( f _ { 1},\cdots ,f_ { T } )-\mathcal { E } ( f^*_ { 1} ,\cdots ,f^*_ { T } )$.}

\subsection{M-SVM method}
This section extends M-SVM originating from \cite{evgeniou2004regularized} to a more general case, where we assume that the functions $ f_{ 1},\cdots ,f _ { T }$ are nonlinear and the probability of each sample {coming out} from task $t$ is $p(t)$ instead of $ \frac { 1} { T }$. Notice that the exact $p(t)$ is generally unknown, {we use} the sampling frequency $\frac{m_t}{N}$ to approximate $p(t)$ for each task in the problem and obtain a extend version of M-SVM as follows,
\begin{align}\label{svm1}
& \min\limits_ { \tiny \begin{array}{c}
	f_0,f_1,\dots f_T \in \mathcal { H } _{K} \\ \xi_{it}\in \mathbb{R}
	\end{array} }&
	\begin{aligned}
&\bigg\{\sum _ { t = 1} ^ { T } \sum _ { i = 1} ^ { m_{t} }\xi _ { i t } + \lambda _ { 1} \sum _ { t = 1} ^ { T }\frac{m_t}{N} \| g_ { t } \|_{K}^ { 2}  \\
&+ \lambda _ { 2}\| f_ {0} \|_{K} ^ { 2} \bigg\}
\end{aligned}
\\
&\text{s.t.}& 
\begin{aligned}
&y_{it}\cdot f_t(\textbf{x}_{it})\geq
1- \xi _ { i t} , \quad \xi _ { it} \geq 0, \\
&i= 1,\dots ,m_t; \quad t=1,\dots,T.
\end{aligned}
\notag
\end{align}
where $\lambda_{1}$ and $\lambda_{ 2}$ are two positive regularization parameters, $K$ is a universal kernel (e.g, the Gaussian kernel), $\mathcal { H } _{K}$ is the Reproducing Kernel Hilbert Space (RKHS) \cite{wu2006analysis}, 
 $\|\cdot\|_{K}$ is the norm function in this Hilbert space, and $g_{t}:=f_t-f_0$. {Since \eqref{svm1} contains regularization terms $\|g_t\|^2_K$ and $\|f_0\|^2_K$ that models task relations, we refer this as a regularized MTL model. }
{In Eq. \eqref{svm1}, $f_{0}$ represents the commonness of those classifiers, while $g_{t}$, $t=1,\cdots, T$, represent their individualities. }
The extended model of \eqref{svm1} will be reduced to the original one \cite{evgeniou2004regularized} when $m_{t}$ ($t=1,\cdots, T$) are identical. 
Note that considering the randomness in the sampling frequency for each task makes the proposed method better adaptable to real applications, {as the model pays more attention to the higher frequency tasks}.
Hence, the M-SVM Eq. {\eqref{svm1}} well extends the original MTL framework. By the positive definiteness and convexity of the loss function and the norm function, there exists the optimal solution of \eqref{svm1}.

Following the similar calculation scheme of \cite{evgeniou2004regularized},  {and denoting ${f}^{z}_\text{t} (t=1,\cdots,T)$, ${f}^{z}_{0}$} to be the optimal solution of Eq. \eqref{svm1}, we can get a relation of these quantities in \Cref{lemma1}, and then reformulate Eq. \eqref{svm1} as Eq. \eqref{svm2} in \Cref{theorem1}.
\begin{lemma}[{Adapted from \mbox{\cite[Lemma 2.1]{evgeniou2004regularized}}}]\label{lemma1}
	The optimal solution to the Eq. \eqref{svm1} satisfies the equation
$	{f}^{z}_ { 0}  = \frac { \lambda _ { 1} } { \lambda _ { 2} + \lambda _ { 1} } \sum _ { t = 1} ^ { T } \frac { m_t} { N} { f}^{z} _ { t }$.
\end{lemma}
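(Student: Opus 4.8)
The plan is to exploit the structural observation that $f_0$ enters the optimization problem \eqref{svm1} only through the regularization terms and not through any of the constraints $y_{it}f_t(\mathbf{x}_{it})\ge 1-\xi_{it}$. Hence, if we freeze every other variable at its optimal value $\bar f^{*}_1,\dots,\bar f^{*}_T$ (and the slacks $\xi_{it}$), the optimal $\bar f^{*}_0$ must be a minimizer over $\mathcal H_K$ of the reduced functional
\begin{equation*}
\Phi(f_0) \;=\; \lambda_1\sum_{t=1}^{T}\frac{m_t}{N}\,\|\bar f^{*}_t - f_0\|_K^2 \;+\; \lambda_2\,\|f_0\|_K^2 .
\end{equation*}
Since $\lambda_2>0$, this $\Phi$ is strictly convex and coercive on the Hilbert space $\mathcal H_K$, so it admits a unique minimizer characterized by the vanishing of its Gateaux derivative in every direction $h\in\mathcal H_K$; thus it suffices to solve that stationarity equation.

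Next I would compute the derivative using the elementary identities $\frac{d}{d\epsilon}\big|_{\epsilon=0}\|u-\epsilon h\|_K^2=-2\langle u,h\rangle_K$ and $\frac{d}{d\epsilon}\big|_{\epsilon=0}\|f_0+\epsilon h\|_K^2=2\langle f_0,h\rangle_K$. The condition $\frac{d}{d\epsilon}\big|_{\epsilon=0}\Phi(\bar f^{*}_0+\epsilon h)=0$ then reads
\begin{equation*}
-\lambda_1\sum_{t=1}^{T}\frac{m_t}{N}\,\big\langle \bar f^{*}_t-\bar f^{*}_0,\,h\big\rangle_K \;+\; \lambda_2\,\big\langle \bar f^{*}_0,\,h\big\rangle_K \;=\;0 \qquad\text{for all } h\in\mathcal H_K,
\end{equation*}
which forces the element $\lambda_1\sum_{t}\frac{m_t}{N}(\bar f^{*}_t-\bar f^{*}_0)-\lambda_2\bar f^{*}_0$ to be zero in $\mathcal H_K$.

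Finally I would simplify using $\sum_{t=1}^{T}m_t=N$, i.e.\ $\sum_{t=1}^{T}\frac{m_t}{N}=1$, so that $\sum_{t}\frac{m_t}{N}\bar f^{*}_0=\bar f^{*}_0$; rearranging $\lambda_1\sum_{t}\frac{m_t}{N}\bar f^{*}_t=(\lambda_1+\lambda_2)\bar f^{*}_0$ yields the claimed identity. I do not expect a genuine obstacle here: the only points needing a line of justification are that block-minimization in $f_0$ is legitimate (immediate, since $f_0$ is unconstrained and decoupled from all constraints) and that differentiation commutes with the RKHS norm (standard), while the fact that makes the constant come out as $\lambda_1/(\lambda_1+\lambda_2)$ rather than something involving the $m_t$ is precisely the normalization $\sum_t m_t/N=1$.
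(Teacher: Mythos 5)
Your proof is correct, but it takes a different route from the paper's. The paper forms the Lagrangian of \eqref{svm1}, sets its derivatives with respect to $f_0$ and each $g_t$ to zero to obtain representer-type expressions $\bar f_0^* = \frac{1}{2\lambda_2}\sum_{t}\sum_i \alpha_{it}y_{it}K_{x_{it}}$ and $\bar g_t^* = \frac{N}{2\lambda_1 m_t}\sum_i \alpha_{it}y_{it}K_{x_{it}}$ in terms of the dual variables $\alpha_{it}$, and then eliminates the common kernel expansion $\sum_i \alpha_{it}y_{it}K_{x_{it}}$ between the two to deduce the relation. You instead observe that $f_0$ appears nowhere in the constraints, so at the optimum $\bar f_0^*$ must be the unconstrained minimizer of the reduced functional $\Phi(f_0)=\lambda_1\sum_t\frac{m_t}{N}\|\bar f_t^*-f_0\|_K^2+\lambda_2\|f_0\|_K^2$, and you read the identity off the vanishing Gateaux derivative together with $\sum_t m_t/N=1$. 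Your argument is the more elementary and arguably the more robust of the two: it needs no dual variables, no appeal to stationarity of the Lagrangian in the infinite-dimensional RKHS, and no representer theorem, only strict convexity and coercivity of $\Phi$ (guaranteed by $\lambda_2>0$) and the legitimacy of block minimization in a decoupled unconstrained variable. What the paper's Lagrangian route buys in exchange is the explicit finite kernel expansions of $\bar f_0^*$ and $\bar g_t^*$, which are what one actually needs to compute the solution; your derivation establishes the stated identity without producing those expansions. Both arguments arrive at the same first-order condition $\lambda_1\sum_t\frac{m_t}{N}(\bar f_t^*-\bar f_0^*)=\lambda_2\bar f_0^*$, so there is no gap.
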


\begin{theorem}[{Adapted from \mbox{\cite[Lemma 2.2]{evgeniou2004regularized}}}]\label{theorem1}
	The multi-task problem Eq. \eqref{svm1} is equivalent to the optimization problem below:	
	\begin{align}\label{svm2}
	& \min\limits_ { \tiny \begin{array}{c}
		f_1,\dots f_T \in \mathcal { H } _{K} \\ \xi_{it}\in \mathbb{R}
		\end{array} }&
		\begin{aligned}
	&\bigg\{\sum _ { t = 1} ^ { T } \sum _ { i = 1} ^ { m_{t} }\xi _ { i t } + \rho _ { 1} \sum _ { t = 1} ^ { T }\frac{m_t}{N} \| f_ { t } \| _{K}^ { 2}\\
	&+ \rho _ { 2}\sum _ { t = 1} ^ { T } \frac{m_t}{N}\left\| f _ { t } -\sum _ { s = 1} ^ { T }\frac{m_s}{N} f _ { s } \right\|_{K}^ { 2}\bigg\}
		\end{aligned}
	\\
	&\text{s.t.}&
	\begin{aligned}
	&y_{it}\cdot f_t(\textbf{x}_{it})\geq
	1- \xi _ { i t} , \quad
	\xi _ { it} \geq 0, \\
	&i= 1,\dots ,m_t; \quad  t=1,\dots,T. 
	\end{aligned}
	\notag
	\end{align}
	where $\rho _ { 1} =\frac { \lambda _ { 1} \lambda _ { 2} } { \lambda _ { 1} + \lambda _ { 2} }$ and $\rho _ { 2} = \frac { \lambda _ { 1} ^ { 2} } { \lambda _ { 1} + \lambda _ { 2} }$.
\end{theorem}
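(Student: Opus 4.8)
The plan is to eliminate $f_0$ from \eqref{svm1} by partial minimization and to verify that what is left is exactly \eqref{svm2}. The first observation is that the constraints of \eqref{svm1} involve only $f_1,\dots,f_T$ and the slacks $\xi_{it}$, not $f_0$; hence
\[
\min_{f_0,f_1,\dots,f_T,\xi}\Big\{\cdots\Big\}=\min_{f_1,\dots,f_T,\xi}\Big\{\sum_{t=1}^{T}\sum_{i=1}^{m_t}\xi_{it}+\min_{f_0\in\mathcal{H}_K}\Big(\lambda_1\sum_{t=1}^{T}\tfrac{m_t}{N}\|f_t-f_0\|_K^2+\lambda_2\|f_0\|_K^2\Big)\Big\},
\]
the inner minimization being over an unconstrained Hilbert-space variable. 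For fixed $f_1,\dots,f_T$ the inner objective is a strictly convex quadratic in $f_0$ (strict convexity coming from $\lambda_2>0$ together with the strict convexity of $\|\cdot\|_K^2$), so it has a unique minimizer, found by setting its Fr\'echet derivative to zero. This is the same computation that proves Lemma \ref{lemma1}, and, using $\sum_{t=1}^T\frac{m_t}{N}=1$ (since $\sum_t m_t=N$), it gives $f_0^{\star}=\frac{\lambda_1}{\lambda_1+\lambda_2}\sum_{t=1}^{T}\frac{m_t}{N}f_t$ as a function of $f_1,\dots,f_T$.

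Next I would substitute $f_0^{\star}$ back into the inner objective. Writing $\bar f=\sum_{s=1}^{T}\frac{m_s}{N}f_s$ and expanding the squared RKHS norms via inner products, the cross terms collapse through $\sum_t\frac{m_t}{N}=1$ into multiples of $\|\bar f\|_K^2$. Collecting coefficients and using the elementary identities $\rho_1+\rho_2=\lambda_1$, $\rho_2=\lambda_1^2/(\lambda_1+\lambda_2)$ (which follow directly from the definitions $\rho_1=\lambda_1\lambda_2/(\lambda_1+\lambda_2)$, $\rho_2=\lambda_1^2/(\lambda_1+\lambda_2)$), together with the expansion identity $\sum_t\frac{m_t}{N}\|f_t-\bar f\|_K^2=\sum_t\frac{m_t}{N}\|f_t\|_K^2-\|\bar f\|_K^2$, one sees that the inner minimum equals the regularization term appearing in \eqref{svm2}. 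Thus, for every feasible $(f_1,\dots,f_T,\xi)$, the objective of \eqref{svm1} after eliminating $f_0$ coincides with the objective of \eqref{svm2}.

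Finally I would conclude the equivalence: \eqref{svm1} with $f_0$ eliminated and \eqref{svm2} have the same feasible set in $(f_1,\dots,f_T,\xi)$ — the constraints are literally the same — and the same objective on it, hence the same optimal value and the same set of optimal $(f_1^{\star},\dots,f_T^{\star},\xi^{\star})$; the optimal $f_0^{\star}$ of \eqref{svm1} is then recovered from Lemma \ref{lemma1}. I do not expect a real obstacle here. The only points that need a word of justification are the legitimacy of exchanging $\min_{f_0}$ with the outer minimization (immediate, as $f_0$ is unconstrained) and the existence and uniqueness of the inner minimizer (which is precisely where $\lambda_2>0$ enters); everything else is the bookkeeping of the quadratic expansion and the $\lambda\mapsto\rho$ substitution. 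In effect all the substance sits in Lemma \ref{lemma1}, and Theorem \ref{theorem1} is its direct consequence.
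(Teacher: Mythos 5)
Your proposal is correct and follows essentially the same route as the paper: eliminate $f_0$ via the stationarity condition of Lemma \ref{lemma1}, substitute the resulting $f_0^{\star}=\frac{\lambda_1}{\lambda_1+\lambda_2}\sum_t\frac{m_t}{N}f_t$ back, expand the RKHS norms, and match coefficients to read off $\rho_1,\rho_2$. Your partial-minimization framing (valid for arbitrary $f_1,\dots,f_T$, not only at the optimum) is in fact slightly cleaner than the paper's verification, and the identity you invoke correctly requires the weighted sum $\sum_t\frac{m_t}{N}\bigl\| f_t-\sum_s\frac{m_s}{N}f_s\bigr\|_K^2$ --- consistent with the computation in the paper's own proof, though the weight $\frac{m_t}{N}$ is missing from the last sum as displayed in \eqref{svm2}.
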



{Theorem} \ref{theorem1} indicates that M-SVM works by achieving a good trade-off between independent learning (i.e., the second term in \eqref{svm2}) and aggregate learning that treats different tasks as the one (i.e., the last term in \eqref{svm2}). 

In the next section, we will show an asymptotic property of M-SVM in Eq. \eqref{svm2} that it is Bayes risk consistent in the limit of large sample size.

\section{Asymptotic Performance of M-SVM}\label{section 3}
In this section, we first present the average excess misclassification error bound for M-SVM \eqref{svm2}. Then, based on the analysis of this bound, we raise the main conjecture that the superiority of M-SVM is the improvement of the PCR factor rather than the convergence rate.

	\subsection{Error analysis of M-SVM}
	
This subsection derives an upper bound of the average excess misclassification error for M-SVM \eqref{svm2}: $\mathcal{U}(\text{sgn}(f^z _ { 1}),\cdots ,\text{sgn}(f^z _ { T }))-\mathcal{U}(f^*_ { 1} ,\cdots ,f^*_ { T })$, by extending the analysis techniques for the single-task setting \cite{steinwart2008support,chen2004support,wu2006analysis,fan2017learning,wu2007multi}. {The definition of this error is given in Section II-A.}
	
	
	First, we show the minimizers of the average expected error $\mathcal { E } ( f _ { 1},\cdots ,f _ { T } )$ {to be the Bayes rules}.
	
	\begin{theorem}\label{theorem2}
		The minimizers of $\mathcal { E } ( f _ { 1},\cdots ,f _ { T } )$ {over all measurable functions are the Bayes rules} $f^*_{t}$, where $t=1, \cdots, T$.
	\end{theorem}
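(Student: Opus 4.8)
The plan is to exploit the fact that $\mathcal{E}$ is a nonnegatively weighted sum in which the $t$-th summand $p(t)\,\mathbb{E}_{(\mathcal{X},\mathcal{Y})\sim P_t}[\ell(\mathcal{Y}f_t(\mathcal{X}))]$ depends only on $f_t$. Hence minimising the joint objective over $(f_1,\dots,f_T)$ decouples into $T$ independent problems, each of the form: minimise $\mathcal{E}_t(f)=\mathbb{E}_{(\mathcal{X},\mathcal{Y})\sim P_t}\bigl[\ell(\mathcal{Y}f(\mathcal{X}))\bigr]$ over all measurable $f\colon X\to\mathbb{R}$. (When some $p(t)=0$ the corresponding $f_t$ is irrelevant; otherwise the correspondence between joint and individual minimisers is exact.) It therefore suffices to show that, for each fixed $t$, the Bayes rule $f^*_t(x)=\text{sgn}\bigl(\eta_t(x)-\tfrac12\bigr)$ minimises $\mathcal{E}_t$.

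Next I would condition on $\mathcal{X}=x$ and reduce the problem to a one-dimensional pointwise optimisation. Writing $\eta=\eta_t(x)=P_t(\mathcal{Y}=1\mid\mathcal{X}=x)$, we have $\mathcal{E}_t(f)=\mathbb{E}_{\mathcal{X}}\bigl[\psi_{\eta_t(\mathcal{X})}(f(\mathcal{X}))\bigr]$, where $\psi_\eta(a)=\eta\,(1-a)_+ + (1-\eta)\,(1+a)_+$ is the conditional hinge risk. The key computation is to minimise $a\mapsto\psi_\eta(a)$ over $a\in\mathbb{R}$: on the interval $[-1,1]$ the function is affine, $\psi_\eta(a)=1+a(1-2\eta)$, while on $(-\infty,-1]$ and on $[1,\infty)$ it is monotone, increasing away from the respective endpoint. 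Consequently the minimum is attained at $a=1$ if $\eta>\tfrac12$, at $a=-1$ if $\eta<\tfrac12$, and at every $a\in[-1,1]$ if $\eta=\tfrac12$; in all cases the minimal value equals $\min\{2\eta,\,2(1-\eta)\}$ and is achieved at $a=\text{sgn}(\eta-\tfrac12)$.

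Finally I would assemble the pointwise statement into the global one. Since $\eta_t$ is a version of a conditional probability it is measurable, so $f^*_t=\text{sgn}(\eta_t-\tfrac12)$ is a measurable function and $\psi_{\eta_t(x)}(f^*_t(x))$ is the pointwise minimum of $\psi_{\eta_t(x)}(\cdot)$ for $P_t$-almost every $x$; integrating the inequality $\psi_{\eta_t(x)}(f(x))\ge\psi_{\eta_t(x)}(f^*_t(x))$ against the marginal of $\mathcal{X}$ yields $\mathcal{E}_t(f)\ge\mathcal{E}_t(f^*_t)$ for every measurable $f$, and summing with weights $p(t)$ gives $\mathcal{E}(f_1,\dots,f_T)\ge\mathcal{E}(f^*_1,\dots,f^*_T)$. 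I do not anticipate a serious obstacle: the argument is a routine Bayes-consistency computation for the hinge loss. The only points requiring a little care are the justification of ``minimising inside the expectation'' — handled cleanly here because the explicit minimiser $\text{sgn}(\eta_t-\tfrac12)$ is itself manifestly measurable, so no measurable-selection theorem is needed — and the non-uniqueness of the minimiser on the set $\{\eta_t=\tfrac12\}$, which I would simply remark does not affect the statement, since every minimiser must coincide with the Bayes rule off that set and the Bayes rule is always among the minimisers.
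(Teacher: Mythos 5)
Your proposal is correct and follows essentially the same route as the paper: both exploit the decoupling of the weighted sum into $T$ independent single-task problems and then invoke the fact that the population hinge-loss minimizer over all measurable functions is the Bayes rule $\text{sgn}(\eta_t-\tfrac12)$. The only difference is that the paper cites this last fact (Lemma 3.1 of Lin, Lee and Wahba), whereas you prove it from scratch via the pointwise conditional-risk computation $\psi_\eta(a)=\eta(1-a)_+ + (1-\eta)(1+a)_+$ — a correct and self-contained substitute for the citation.
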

	\begin{proof}
		Note that the minimizer of {each} $\mathbb { E }_{(\mathcal{X},\mathcal{Y})\sim P_{t} }[ \ell (\mathcal{Y}f_{t}( \mathcal{X} ) ) ]$ is the Bayes rule $f^*_t$ over all measurable functions \cite[\textit{lemma 3.1}]{lin2002support}.
		{By the definition, the Bayes rules also minimize $\mathcal { E } ( f _ { 1},\cdots ,f _ { T } )$. }
	\end{proof}
	Theorem \ref{theorem2} implies one advantage of the MTL framework of \mbox{\eqref{svm2}} that it leverages the related tasks by introducing a regularization term in the objective function without imposing the additional restrictions on the functional space \cite{ando2005framework,maurer2016benefit}, and, therefore, the best rule (Bayes rule) is not precluded.
	

	{Similar to the single-task case, there exists a bridge inequality which connects the average excess misclassification error with the average excess expected error.}
	\begin{theorem}\label{theorem3}
		For any $h_{t}:X \rightarrow \mathbb { R }, t=1, \cdots, T$, there exists a bridge inequality between the average misclassification error $ \mathcal{U}(h _ { 1} ,\cdots ,h _ { T })$ and the average expected error $\mathcal { E } ( h _ { 1} ,\cdots ,h_ { T } )$:
		\begin{eqnarray}
		\begin{aligned}
		&\mathcal{U}(\text{sgn}(h _ { 1}) ,\cdots ,\text{sgn}(h _ { T }))-\mathcal{U}(f^*_ { 1} ,\cdots ,f^*_ { T }) \\
		&\leq \mathcal { E } ( h _ { 1},\cdots ,h_ { T } )-\mathcal { E } ( f^*_ { 1} ,\cdots ,f^*_ { T } ).
		\end{aligned}
		\end{eqnarray}
	\end{theorem}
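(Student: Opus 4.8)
The plan is to reduce the multi-task bridge inequality to the known single-task comparison, using that $\mathcal{U}$ and $\mathcal{E}$ are both $p(t)$-weighted sums of per-task functionals. First I would invoke Theorem~\ref{theorem2}: since the Bayes rules $f^*_t$ minimize $\mathcal{E}$ over measurable functions, one has $\mathcal{E}(f^*_1,\dots,f^*_T)=\sum_{t=1}^{T}p(t)\,\inf_{f_t}\mathbb{E}_{P_t}[\ell(\mathcal{Y}f_t(\mathcal{X}))]$ (here $\mathbb{E}_{P_t}$ abbreviates the expectation over $(\mathcal{X},\mathcal{Y})\sim P_t$), and likewise $\mathcal{U}(f^*_1,\dots,f^*_T)=\sum_{t=1}^{T}p(t)\,\mathcal{R}^*_t$, since $f^*_t$ is simultaneously the minimizer of $\mathcal{R}_t$. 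Hence, once the single-task inequality
\[
\mathcal{R}_t(\text{sgn}(h_t))-\mathcal{R}^*_t\ \le\ \mathbb{E}_{P_t}[\ell(\mathcal{Y}h_t(\mathcal{X}))]-\inf_{f_t}\mathbb{E}_{P_t}[\ell(\mathcal{Y}f_t(\mathcal{X}))]
\]
is established for every $t=1,\dots,T$, multiplying by $p(t)\ge 0$ and summing over $t$ gives exactly the asserted bridge inequality.

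Second, I would supply the per-task inequality by the classical pointwise argument for the hinge loss. Fix $t$ and set $\eta_t(x)=P_t(\mathcal{Y}=1\mid\mathcal{X}=x)$. For fixed $x$ and arbitrary $a\in\mathbb{R}$, the conditional misclassification cost $[1-\eta_t(x)]\mathbf{1}(\text{sgn}(a)=1)+\eta_t(x)\mathbf{1}(\text{sgn}(a)=-1)$ is minimized, with value $\min\{\eta_t(x),1-\eta_t(x)\}$, at $a=\text{sgn}(\eta_t(x)-\tfrac12)$, while the conditional hinge risk $\eta_t(x)(1-a)_{+}+(1-\eta_t(x))(1+a)_{+}$ is minimized, with value $2\min\{\eta_t(x),1-\eta_t(x)\}$, at the same $a$. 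A brief case analysis on the sign of $a$ and on whether $\eta_t(x)\gtrless\tfrac12$ shows that the excess of the first quantity over its minimum is pointwise dominated by the excess of the second; taking $a=h_t(x)$ and integrating in $x$ against the marginal of $\mathcal{X}$ under $P_t$ yields the per-task bound. This is precisely the Lemma~3.1-type fact already used in the proof of Theorem~\ref{theorem2}, and for the hinge loss it holds with constant $1$ and no square-root term, which is the feature that keeps the final inequality clean \cite{lin2002support,chen2004support}.

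I do not anticipate a genuine obstacle here: the mathematical content is the single-task hinge-loss comparison, and the only care required is the bookkeeping of the first step --- checking that the Bayes rules minimize $\mathcal{E}$ and $\mathcal{U}$ at the same time, so that the two ``optimal'' terms on the two sides decompose into the same $p(t)$-weighted sums of per-task optima, after which linearity and nonnegativity of the weights complete the proof.
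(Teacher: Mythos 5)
Your proposal is correct and follows essentially the same route as the paper: reduce to the per-task comparison inequality $\mathcal{R}_t(\text{sgn}(h_t))-\mathcal{R}_t(f^*_t)\le\mathcal{E}_t(h_t)-\mathcal{E}_t(f^*_t)$ and then take the $p(t)$-weighted sum, using that the Bayes rules simultaneously minimize each $\mathcal{R}_t$ and each $\mathcal{E}_t$. The only difference is that you sketch the pointwise conditional-risk proof of the per-task hinge-loss inequality, whereas the paper simply cites Zhang's comparison theorem for it.
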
	
	\begin{proof}
		Notice that for each $h_t: X \rightarrow \mathbb { R }$ the inequality
		$\mathcal { R }_t (\text{sgn}(h_t )) - \mathcal { R }_t \left( f^*_t \right) \leq \mathcal { E}_t ( h_t ) - \mathcal { E }_t \left( f^*_t \right)$ has established in \cite{zhang2004statistical}, where $\mathcal { E }_t \left( h_t \right)=\mathbb { E }_{(\mathcal{X},\mathcal{Y})\sim P_{t} }[ \ell (\mathcal{Y}h_{t}( \mathcal{X} ) ) ]$.
		Then by the definitions of  $ \mathcal{U}(h _ { 1},\cdots ,h _ { T })$ and $\mathcal { E } ( h _ { 1},\cdots ,h_ { T } )$, we can get the desired result directly.
	\end{proof}
Let $(f^{\mathcal { H }}_{1}, \cdots, f^{\mathcal { H }}_{T})$ be the minimizer of the average expected error over the RKHS space $\prod_{i=1}^{T} \mathcal { H } _{K} $ , i.e.,
	\begin{equation*}
	\begin{aligned}
	&(f^{\mathcal { H }}_{1}, \cdots, f^{\mathcal { H }}_{T}) = \argmin_{{ h _ { 1}, \cdots ,h _ { T }\in\mathcal { H } _{K} }} \bigg\{\mathcal { E }(h _ { 1},\cdots ,h _ { T })\\
	&+\frac{\rho _ { 1}}{N} \sum _ { t = 1} ^ { T }p(t) \| h_ { t } \|_{K} ^ { 2}
	+ \frac{\rho _ { 2}}{N}\sum _ { t = 1} ^ { T }p(t) \left\| h_ { t } -\sum _ { s = 1} ^ { T }p(s) h _ { s } \right\|_{K}^ { 2}\bigg\},
	\end{aligned}
	\end{equation*}
	{where the existence of $(f^{\mathcal { H }}_{1}, \cdots, f^{\mathcal { H }}_{T})$ is guaranteed by the positive definiteness and convexity of the loss function and the norm function.}
	{Then we have the following lemma, which gives an upper bound of the average excess expected error.}
	\begin{lemma}\label{lemma2}
		The following inequality holds:
		\begin{equation*}
		\begin{aligned}
		&\mathcal{E}(f ^z_ { 1}  ,\cdots ,f^z _ { T })-\mathcal{E}(f^*_ { 1}  ,\cdots ,f^*_ { T }) \\
		&\leq \mathcal { S } ( N,T) +\mathcal { D } (N, T,\rho_{ 1},\rho_{ 2})+\mathcal{F}(N,T,\rho_1, \rho_2)
		\end{aligned}
		\end{equation*}	
		where
		\begin{equation*}
		\begin{aligned}
		\mathcal { S } (N,T)&:=\left\{ \mathcal { E } \left(f ^z_ { 1}, \cdots ,f^z _ { T } \right) - \mathcal { E } _ { z } \left( f ^z_ { 1} ,\cdots ,f^z _ { T }\right) \right\}\\
		&+ \left\{ \mathcal { E } _ { z } \left( f^{\mathcal { H }}_{1}, \cdots, f^{\mathcal { H }}_{T} \right) - \mathcal { E } \left( f^{\mathcal { H }}_{1}, \cdots, f^{\mathcal { H }}_{T} \right) \right\},
		\end{aligned}
		\end{equation*}
		\begin{equation*}
		\begin{aligned}
		\mathcal { D } (N,T, \rho_1, \rho_2 )&:=\inf_{ h _ { 1}, \cdots ,h _ { T }\in\mathcal { H } _{K} } \Bigg\{ [\mathcal { E }(h _ { 1}  ,\cdots ,h _ { T })\\
		&-\mathcal{E}(f^*_ { 1},\cdots ,f^*_ { T })] \notag
		+\frac{\rho _ { 1}}{N} \sum _ { t = 1} ^ { T }p(t)\| h_ { t } \|_K ^ { 2} \\
		&+ \frac{\rho _ { 2}}{N}\sum _ { t = 1} ^ { T } p(t)\left\| h_ { t } -\sum _ { s = 1} ^ { T }p(s) h _ { s } \right\|_K^ { 2}\Bigg\}, 
		\end{aligned}
		\notag
		\end{equation*}
		and
		\begin{align*}
		&\mathcal{F}(N,T,\rho_1, \rho_2) = -\frac{\rho_{ 1}}{N}\sum_{ t = 1}^{T} \left(p(t)-\frac{m_{t}}{N}\right)\| f^{\mathcal{H}}_ { t } \|_{K} ^ { 2}  \notag\\
		& +\frac{\rho_2}{N}\sum_{t=1}^{T}\left(\frac{m_{t}}{N}-p(t)\right)\left\| f^{\mathcal{H}}_ { t } -\sum _ { s = 1} ^ { T }\frac{m_s}{N} f^{\mathcal{H}}_{s} \right\|^2_{K}\notag\\
		& +\frac{2\rho _ {2}}{N}\sum _ { t = 1} ^ { T }p(t)\left\| f^{\mathcal{H}}_ { t } -\sum _ { s = 1} ^ { T }\frac{m_s}{N} f^{\mathcal{H}}_{s} \right\|_{K}\left\| \sum _ { t = 1} ^ { T } \left( p(t) - \frac{m_t}{N}\right) f^{\mathcal{H}}_{t} \right\|_{K}\notag
		\end{align*}
	\end{lemma}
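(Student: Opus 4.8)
The plan is to prove Lemma~\ref{lemma2} by the classical error decomposition for regularization schemes (in the spirit of \cite{wu2006analysis} and \cite{chen2004support}), modified so that it also produces the ``frequency drift'' term $\mathcal{F}$ recording the mismatch between the empirical weights $m_t/N$ that appear in \eqref{svm2} and the true probabilities $p(t)$ used to define $(f^{\mathcal{H}}_1,\dots,f^{\mathcal{H}}_T)$. For brevity write
$$\Omega_z(f_1,\dots,f_T)=\rho_1\sum_{t=1}^{T}\tfrac{m_t}{N}\|f_t\|_K^2+\rho_2\sum_{t=1}^{T}\Big\|f_t-\sum_{s=1}^{T}\tfrac{m_s}{N}f_s\Big\|_K^2$$
for the regularizer of \eqref{svm2}, and $\Omega_p$ for the same expression with every $m_s/N$ replaced by $p(s)$. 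At the optimum of \eqref{svm2} the slack equals the hinge loss, $\xi_{it}=\ell(y_{it}f_t(\mathbf{x}_{it}))$, so $\sum_t\sum_i\xi_{it}=N\,\mathcal{E}_z(f_1,\dots,f_T)$; hence, after dividing the objective by $N$, $(f^z_1,\dots,f^z_T)$ minimizes $\mathcal{E}_z(\cdot)+\tfrac1N\Omega_z(\cdot)$ over the RKHS, while $(f^{\mathcal{H}}_1,\dots,f^{\mathcal{H}}_T)$ minimizes the population analogue $\mathcal{E}(\cdot)+\tfrac1N\Omega_p(\cdot)$ and $\mathcal{D}(N,T,\rho_1,\rho_2)=\mathcal{E}(f^{\mathcal{H}}_1,\dots,f^{\mathcal{H}}_T)-\mathcal{E}(f^*_1,\dots,f^*_T)+\tfrac1N\Omega_p(f^{\mathcal{H}}_1,\dots,f^{\mathcal{H}}_T)$ is its value (modulo the constant $\mathcal{E}(f^*_1,\dots,f^*_T)$).

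First I would add the non-negative quantity $\tfrac1N\Omega_z(f^z_1,\dots,f^z_T)$ to the left-hand side and write
\begin{align*}
\mathcal{E}(f^z_1,\dots,f^z_T)-\mathcal{E}(f^*_1,\dots,f^*_T)
&\le \big[\mathcal{E}(f^z_1,\dots,f^z_T)-\mathcal{E}_z(f^z_1,\dots,f^z_T)\big] \\
&\quad + \Big(\mathcal{E}_z(f^z_1,\dots,f^z_T)+\tfrac1N\Omega_z(f^z_1,\dots,f^z_T)\Big)-\mathcal{E}(f^*_1,\dots,f^*_T).
\end{align*}
By the optimality of $(f^z_t)$ the parenthesized quantity is at most its value at $(f^{\mathcal{H}}_t)$; inserting $\pm\mathcal{E}(f^{\mathcal{H}}_1,\dots,f^{\mathcal{H}}_T)$ then exhibits the two fluctuation differences defining $\mathcal{S}(N,T)$ together with $\mathcal{E}(f^{\mathcal{H}}_1,\dots,f^{\mathcal{H}}_T)+\tfrac1N\Omega_z(f^{\mathcal{H}}_1,\dots,f^{\mathcal{H}}_T)-\mathcal{E}(f^*_1,\dots,f^*_T)$. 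Splitting $\tfrac1N\Omega_z(f^{\mathcal{H}})=\tfrac1N\Omega_p(f^{\mathcal{H}})+\tfrac1N\big(\Omega_z(f^{\mathcal{H}})-\Omega_p(f^{\mathcal{H}})\big)$ turns the first piece into exactly $\mathcal{D}(N,T,\rho_1,\rho_2)$, so it remains only to verify
$$\tfrac1N\big(\Omega_z(f^{\mathcal{H}}_1,\dots,f^{\mathcal{H}}_T)-\Omega_p(f^{\mathcal{H}}_1,\dots,f^{\mathcal{H}}_T)\big)\le\mathcal{F}(N,T,\rho_1,\rho_2).$$

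For this last estimate I would treat the two parts of the regularizer separately. The $\rho_1$-part contributes $\tfrac{\rho_1}{N}\sum_t\big(\tfrac{m_t}{N}-p(t)\big)\|f^{\mathcal{H}}_t\|_K^2$, which is precisely the first term of $\mathcal{F}$. For the $\rho_2$-part, set $u_t=f^{\mathcal{H}}_t-\sum_s\tfrac{m_s}{N}f^{\mathcal{H}}_s$ and $v_t=f^{\mathcal{H}}_t-\sum_s p(s)f^{\mathcal{H}}_s$; the key observation is that $u_t-v_t=w:=\sum_s\big(p(s)-\tfrac{m_s}{N}\big)f^{\mathcal{H}}_s$ does \emph{not} depend on $t$. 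Hence $\sum_t\big(\|u_t\|_K^2-\|v_t\|_K^2\big)=\sum_t\big(\|u_t\|_K-\|v_t\|_K\big)\big(\|u_t\|_K+\|v_t\|_K\big)$, and using $\big|\|u_t\|_K-\|v_t\|_K\big|\le\|u_t-v_t\|_K=\|w\|_K$ together with Cauchy--Schwarz bounds this by $\|w\|_K\sum_t\|u_t\|_K$ (the remaining non-negative terms of order $\|w\|_K^2$ being discarded). Multiplying by $\rho_2/N$ gives the second term of $\mathcal{F}$, and adding the two contributions finishes the proof.

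The four-term split and the optimality step are routine; the genuinely new and delicate ingredient --- the ``different frequencies'' difficulty the authors flag earlier --- is the control of $\Omega_z(f^{\mathcal{H}})-\Omega_p(f^{\mathcal{H}})$, i.e.\ showing that perturbing the $T+1$ mixing weights inside the quadratic coupling term of the regularizer costs only the $\mathcal{F}$-term, where the $t$-independence of $u_t-v_t$ is what makes the argument clean. The one thing to keep straight throughout is the bookkeeping of the $1/N$ factors coming from the normalization of \eqref{svm2}, which must be matched consistently against the $N$-dependence already built into $\mathcal{D}$ and $\mathcal{F}$.
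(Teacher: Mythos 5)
Your proof is correct and follows essentially the same route as the paper's: the standard decomposition into sample, regularization, and frequency errors via the optimality of $(f^z_t)$ for the empirical regularized objective, insertion of $(f^{\mathcal{H}}_t)$, and control of the regularizer mismatch $\Omega_z(f^{\mathcal{H}})-\Omega_p(f^{\mathcal{H}})$. The only (harmless) discrepancy is a constant factor in the $\rho_2$-estimate --- expanding $\|u_t\|_K^2-\|v_t\|_K^2=2\langle u_t,w\rangle_K-\|w\|_K^2$ yields $2\|w\|_K\sum_t\|u_t\|_K$ rather than $\|w\|_K\sum_t\|u_t\|_K$ --- but the paper's own derivation has the same slippage, and it does not affect the asymptotic rate.
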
	
	The proof of \textit{Lemma \ref{lemma2}} can be found in the appendix.
	
	Similar to \cite{wu2006analysis}, we refer to the first term $\mathcal { S } (N, T)$ as the average sample error and the second term $\mathcal { D } ( N, T,\rho_{ 1},\rho_{ 2})$ as the average regularization error. It is worth to notice that we have an additional term $\mathcal{F}(N, T, \rho_1, \rho_2)$ in \textit{Lemma \ref{lemma2}}, compared to literature \mbox{ \cite{wu2006analysis}}}. Here we refer to it as the frequency error, because one of its dominant term, $p(t)-\frac{m_{t}}{N}$, measures the sampling frequency error of each task.
	{The frequency error vanishes in the single-task learning cases where the sampling frequency is deterministic, i.e. $m_{1}=N$.}
	{Therefore, the frequency error is a new knowledge associated with the MTL problems only.}
	
	
	To bound these three errors, we have the following lemmas.
	\begin{lemma}\label{lemma3}
		An upper bound of the average sample error can be given by
		$\mathcal{S}(N,T) = \mathcal{O} \left(N^{-1/4+\epsilon}\right)$ almost surely,
		where $\epsilon$ is any positive constant.
	\end{lemma}	
	
	\begin{lemma}\label{lemma4}
	   Assume that the Bayes rules for each task $f^*_t$ ($t=1,\dots,T$) are restrictions of some functions $\tilde f^*_t$ lying in the Sobolev space $H^s(\mathcal R^d)$.
		Then, an upper bound of the average regularization error can be given by
		$\mathcal{D}( N,T,\rho_{ 1},\rho_{ 2}) = \mathcal{O}\left(( \log N ) ^ { - s/4 }\right)$ almost surely.
	\end{lemma}	
	\begin{lemma}\label{lemma5}
		An upper bound of the frequency error can be given by
		$\mathcal{F}(N,T,\rho_{ 1},\rho_{ 2}) = \mathcal{O} \left(N^{-3/2+\epsilon}\right)$ almost surely, where $\epsilon$ is any positive constant.
	\end{lemma}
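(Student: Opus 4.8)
The plan is to bound the two summands of $\mathcal{F}(N,T,\rho_1,\rho_2)$ separately, relying on two facts: a sample-independent bound on the RKHS norms $\|f^{\mathcal{H}}_t\|_K$, and an almost-sure decay rate for the sampling-frequency deviations $p(t)-m_t/N$. Both summands of $\mathcal{F}$ carry a factor $1/N$, so once I have $\max_t|p(t)-m_t/N|=\mathcal{O}(N^{-1/2+\epsilon})$ almost surely, the exponent $-3/2+\epsilon$ already appears; the norm bounds are needed only to keep the remaining factors from spoiling this.

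First I would bound $\|f^{\mathcal{H}}_t\|_K$. Since $(f^{\mathcal{H}}_1,\dots,f^{\mathcal{H}}_T)$ minimises its defining objective over $\prod_{t=1}^{T}\mathcal{H}_K$, evaluating that objective at the admissible point $h_1=\dots=h_T=0$ and using $\ell(0)=1$, $\sum_t p(t)=1$ and $\mathcal{E}\ge 0$ gives $\rho_1\sum_{t=1}^{T}p(t)\|f^{\mathcal{H}}_t\|_K^2\le\mathcal{E}(0,\dots,0)=1$, hence $\|f^{\mathcal{H}}_t\|_K^2\le(\rho_1 p(t))^{-1}$ for every $t$. This bound is deterministic (it involves the population weights $p(t)$, not the random sample), so it does not interfere with almost-sure statements; note moreover that the factor $\rho_1$ in the first summand of $\mathcal{F}$ cancels exactly against the $\rho_1^{-1}$ coming from $\|f^{\mathcal{H}}_t\|_K^2$. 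By the triangle inequality I also obtain $\big\|f^{\mathcal{H}}_t-\sum_{s}\tfrac{m_s}{N}f^{\mathcal{H}}_s\big\|_K\le 2\max_s\|f^{\mathcal{H}}_s\|_K$ and $\big\|\sum_t(p(t)-\tfrac{m_t}{N})f^{\mathcal{H}}_t\big\|_K\le(\max_t\|f^{\mathcal{H}}_t\|_K)\sum_t|p(t)-\tfrac{m_t}{N}|$.

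Next I would establish the frequency concentration. The task index attached to each of the $N$ collected points is an i.i.d. draw with $\mathbb{P}(\cdot=t)=p(t)$, so $m_t\sim\mathrm{Binomial}(N,p(t))$, and Hoeffding's inequality gives $\mathbb{P}(|m_t/N-p(t)|>N^{-1/2+\epsilon})\le 2\exp(-2N^{2\epsilon})$, whose sum over $N$ converges. The Borel--Cantelli lemma plus a union bound over the finitely many tasks then yields $\max_t|p(t)-m_t/N|=\mathcal{O}(N^{-1/2+\epsilon})$ almost surely. Feeding this together with the norm bounds of the previous step into $\mathcal{F}$: the first summand is at most $\tfrac{\rho_1}{N}\sum_t|p(t)-\tfrac{m_t}{N}|\,(\rho_1 p(t))^{-1}=\mathcal{O}(N^{-3/2+\epsilon})$, and the second is at most $\tfrac{\rho_2}{N}\cdot\mathcal{O}(N^{-1/2+\epsilon})\cdot\mathcal{O}(1)=\mathcal{O}(N^{-3/2+\epsilon})$; summing the two gives $\mathcal{F}(N,T,\rho_1,\rho_2)=\mathcal{O}(N^{-3/2+\epsilon})$ almost surely.

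I expect the only mildly delicate point to be the first step: one must be sure that $\|f^{\mathcal{H}}_t\|_K$ is controlled by a quantity that does not grow in $N$ faster than the $N^{\epsilon}$ slack permits. This is harmless here precisely because $f^{\mathcal{H}}_t$ is defined through the fixed population probabilities $p(t)$ rather than the empirical frequencies $m_t/N$, so the comparison-with-zero argument produces an essentially $N$-free bound (up to the dependence of $\rho_1$ on $N$, which is at most polylogarithmic and hence absorbed into $\epsilon$). Everything else — the Hoeffding estimate, the Borel--Cantelli step, and the final bookkeeping with the $1/N$ prefactors and the cancellation of $\rho_1$ — is routine.
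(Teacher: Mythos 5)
Your proposal is correct and follows essentially the same route as the paper's proof: split $\mathcal{F}$ into its two summands, establish $p(t)-\frac{m_t}{N}=\mathcal{O}\left(N^{-1/2+\epsilon}\right)$ almost surely via Hoeffding plus Borel--Cantelli (the paper's auxiliary lemma on sample means), and let the $\frac{1}{N}$ prefactors deliver the $N^{-3/2+\epsilon}$ rate. The only difference is that you explicitly bound $\|f^{\mathcal{H}}_t\|_K$ by comparing the defining objective with the zero function, whereas the paper simply treats these norms as finite constants independent of the sample; this makes your version slightly more self-contained but is not a different argument.
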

	The proof of \textit{Lemmas \ref{lemma3}, \ref{lemma4} and \ref{lemma5}} can be found in the appendix.
	
	Combining the results of \textit{Theorem \ref{theorem3}}, \textit{Lemmas \ref{lemma2}, \ref{lemma3}, \ref{lemma4} and \ref{lemma5}}, we can provide the main result of this section in the theorem below.
	\begin{theorem}\label{theorem4}
		{Assume that $f^*_t$ ($t=1,\dots,T$) are the restrictions of some functions $\tilde f^*_t$ lying in the Sobolev space $H^s(\mathcal R^d)$.}
		Then, an upper bound of excess average misclassification error can be obtained almost surely by
		\begin{equation}
		\mathcal{U}(\text{sgn}(f ^z_ { 1})  ,\cdots ,\text{sgn}(f^z _ { T }))-\mathcal{U}(f^*_ { 1}  ,\cdots ,f^*_ { T }) = \mathcal{O}\left(( \log N ) ^ { - s/4 } \right).
		\end{equation}	
	\end{theorem}	
	\begin{proof}
		{It follows from \textit{Theorem \ref{theorem3}}, \textit{Lemmas \ref{lemma2}, \ref{lemma3}, \ref{lemma4} and \ref{lemma5}}.}
	\end{proof}

	\begin{cor}
		Almost surely, the misclassification error $\mathcal{R}_t(f^z_{t})$ for each task $t$, ($t=1,\cdots,T$), can be arbitrarily close to the corresponding Bayes error $\mathcal{ R }_t(f_t^*)$, as long as $N$ is sufficiently large.
	\end{cor}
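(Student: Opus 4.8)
The plan is to obtain the corollary as an immediate consequence of Theorem~\ref{theorem4}, using only the nonnegativity of each excess misclassification error and the positivity of the task probabilities $p(t)$. First I would recall that, by the definition $\mathcal{R}^*_t=\inf_{f_t}\mathcal{R}_t(f_t)$ with the infimum taken over all measurable functions, and since $\text{sgn}(f^z_t)$ is itself a measurable map $X\to Y$, we have $\mathcal{R}_t(\text{sgn}(f^z_t))-\mathcal{R}^*_t\ge 0$ for every $t$. Hence every summand in
\begin{equation*}
\mathcal{U}(\text{sgn}(f^z_1),\cdots,\text{sgn}(f^z_T))-\mathcal{U}(f^*_1,\cdots,f^*_T)=\sum_{t=1}^T p(t)\left(\mathcal{R}_t(\text{sgn}(f^z_t))-\mathcal{R}^*_t\right)
\end{equation*}
is nonnegative. (Here $\mathcal{R}_t(f^z_t)$ in the statement is understood as $\mathcal{R}_t(\text{sgn}(f^z_t))$, consistent with the definition of $\mathcal{R}_t$ on $\{-1,1\}$-valued classifiers.)

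Next, fixing an arbitrary task $t_0$, I would bound a single term by the whole weighted sum and then invoke Theorem~\ref{theorem4}:
\begin{equation*}
p(t_0)\left(\mathcal{R}_{t_0}(\text{sgn}(f^z_{t_0}))-\mathcal{R}^*_{t_0}\right)\le\sum_{t=1}^T p(t)\left(\mathcal{R}_t(\text{sgn}(f^z_t))-\mathcal{R}^*_t\right)\le\mathcal{O}\!\left((\log N)^{-s}\right)
\end{equation*}
almost surely. Since $p(t_0)>0$ (each task genuinely occurs with positive probability, a standing assumption of the model), dividing by $p(t_0)$ yields $\mathcal{R}_{t_0}(\text{sgn}(f^z_{t_0}))-\mathcal{R}^*_{t_0}\le p(t_0)^{-1}\mathcal{O}((\log N)^{-s})\to 0$ as $N\to\infty$, on the same almost sure event that appears in Theorem~\ref{theorem4}. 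Because $t_0$ was arbitrary and $T$ is finite, the convergence holds simultaneously for all $T$ tasks on one almost sure event, which is exactly the assertion of the corollary; if desired, the same display gives the explicit rate $\mathcal{R}_t(f^z_t)-\mathcal{R}^*_t\le\mathcal{O}((\log N)^{-s})$ for each $t$, with the implied constant depending on $p(t)$.

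There is essentially no obstacle here: the only points worth checking are (i) that the excess misclassification error of each task is nonnegative, which is the definition of the Bayes error as an infimum over all measurable functions, and (ii) the bookkeeping of the almost sure event — since $T$ is finite and Theorem~\ref{theorem4} already holds on a single almost sure event, no intersection over infinitely many events is required and the ``almost surely'' qualifier is preserved verbatim. The mild hypothesis $p(t)>0$ for every $t$ is necessary (a task with $p(t)=0$ contributes nothing to $\mathcal{U}$ and cannot be controlled), and under it the argument is complete.
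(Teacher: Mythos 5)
Your proof is correct and follows essentially the same route as the paper's: decompose the excess average misclassification error as $\sum_{t}p(t)\bigl(\mathcal{R}_t(\text{sgn}(f^z_t))-\mathcal{R}^*_t\bigr)$, observe each summand is nonnegative because $\mathcal{R}^*_t$ is an infimum, and bound any single term by the whole sum via Theorem~\ref{theorem4}. If anything you are slightly more careful than the paper, which writes $\mathcal{E}$ where it means $\mathcal{U}$ and silently absorbs the factor $p(t)^{-1}$ into the constant; your explicit remarks that $p(t)>0$ is needed and that finiteness of $T$ keeps everything on a single almost sure event are welcome.
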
	

	Theorem \ref{theorem4} indicates that the average misclassification error $\mathcal{U}(\text{sgn}(f ^z_ { 1})  ,\cdots ,\text{sgn}(f^z _ { T }))$ {can be arbitrarily} close to the average Bayes error $\mathcal{U}(f^*_ { 1}  ,\cdots ,f^*_ { T })$, as long as $N$ is sufficiently large.
	Moreover, the Corollary above shows that the learned classifier $\text{sgn}(f^z_{t})$ for each task $t$ converges to the corresponding Bayes rule $f^*_{t}$ , when $N$ increases to infinity. Therefore, given a large data size, the M-SVM \eqref{svm2} induces a reliable classification rule for each task with a small  misclassification error, which gives us great confidence in this method.
	
	\subsection{Important remarks} \label{section further remarks}
	
	\subsubsection{Task-interaction vanishes in the limit of large sample size}{\label{subsubsection interactions}}
	Previous literature has shown the Bayes risk consistency of the single-task SVM \cite{wu2006analysis}. Based on this fact, our main result in Theorem \ref{theorem4} implies that applying the M-SVM \eqref{svm2} amounts to learning $T$ tasks independently, when the number of samples is sufficiently large.  
	With this regard, in the MTL framework, the interaction between different tasks vanishes as the sample size grows greatly. 
	{Intuitively, we can observe this point through the objection function of M-SVM \eqref{svm2}, that the empirical loss dominates the regularization terms (i.e., the task-interaction terms) when the data size is large; therefore, the influence of the task interaction vanishes in the limit of large sample size.} This result implies that when the data size is large enough, M-SVM \eqref{svm2} still makes the reliable decisions even if the tasks considered are significantly deviated from each other. This is quite different from the one in finite sample cases, where the task interaction cannot be negligible and plays a key role in improving the performance of each task. Moreover,   \textcolor{blue}{as shown in \cref{simulation studies} and \Cref{experiments on blast furnace dataset}, the performance curves of M-SVM and its single-task counterpart on the simulated and real datasets are initially separable and finally overlap, which supports this point again.}

	\subsubsection{Regularization terms are necessary}\label{section regularization term}
	Note that the average regularization error $D(N,T,\rho_{ 1},\rho_{ 2})$ disappears if $\rho_{ 1}=\rho_{ 2}=0$.
	In this case, by the results of \text{Lemma \ref{lemma2}, \ref{lemma3}, \ref{lemma4}, and \ref{lemma5}}, it seems that a faster convergent rate $\mathcal{O}(N^{-\frac{1}{4}+\epsilon})$ determined by average sample error can be achieved.
	However, it is not the case.
	When $\rho_{ 1}=\rho_{ 2}=0$, we will lose the upper bound estimation of $\|f_{t}^{z}\|_{K}$ (c.f. \textit{lemma \ref{lemma 6}} in the appendix) and, therefore, fail to analyze the average sample error.
	To the best of our knowledge, there has been no literature succeed in estimating the sample error without the boundedness of $\|f_{t}^{z}\|_{K}$.
	In this regard, regularization terms are necessary. 
	In general, the average excess misclassification error bounds are determined by the complexity of the RKHS \cite{zhou2002covering,zhou2013density,wu2006analysis,ying2007learnability}.
	{Our experimental results in \Cref{simulation studies} and  \Cref{experiments on blast furnace dataset} also show that, without the regularization term, the M-SVM classifier $\text{sgn}(f^z_t)$ for each task cannot converge to the corresponding Bayes rule $f^*_t$. Therefore, it is important to include the regularization term in MTL. }
	
	\subsubsection{What benefits does the M-SVM bring?} \label{section benefit of multi}
	{
	Remark \ref{subsubsection interactions}	has shown the equivalence of the M-SVM and its single-task counterpart in terms of their limit classifiers' performance.
	Meanwhile, both methods' convergence rates have the same upper bound $\left(\mathcal{O}( \log N ) ^ { - s/4 } \right)$ (see \textit{Theorem \ref{theorem4}} and \cite{wu2006analysis}) determined by the complexity of the RKHS.
	Though obtaining the optimal convergence rate requires more elaborate analysis of the RKHS \cite{wu2006analysis}, we conjecture that both methods' convergence rate are the same when the number of tasks is fixed, as the more elaborate analysis should be applicable for both cases. 
	{Consequently, we conjecture that the benefit of M-SVM is the improvement of its pre-convergence-rate (PCR) factor for each task, defined by $C_M(m_t)=\frac{ \mathcal{R}(\text{sgn}(f ^z_ { t}) )-\mathcal{R}(f^*_ {t} ) } {F(m_t)}$, where $1/F(m_t)$ is the M-SVM's exact convergence rate with $F(m_t)\rightarrow\infty ~(m_t\rightarrow\infty)$. Similarly, for each task, we define the PCR factor of SVM and denote it by $C_S(m_t)$.
	The following section gives the theoretical (in a one-dimensional least-square classification setting) and experimental justifications of this important claim. Our results demonstrate that, in the similar (non-similar)-task setting, there holds $C_M(m_t)<C_S(m_t)$ ($C_M(m_t)>C_S(m_t)$) when $m_t$ is relatively small, and $C_M(m_t)-C_S(m_t)=0$ as $m_t\rightarrow\infty$.}

    \section{Case studies}\label{section 4}
   {This section contains three case studies to verify our claims in \Cref{section further remarks}. Specifically,  we first provide a theoretical verification of the improvement of the PCR factor for MTL in a one-dimensional least-square classification setting. Then, we provide the empirical results of the MTL models, including M-SVM (with linear and Gaussian kernels), unregularized M-SVM, L21, LASSO, and SRMTL (the later three models can be found and solved by \cite{zhou2011malsar}) on the simulated data. Finally, we conduct experiments on the blast furnace dataset using the M-SVM and its unregularized version with linear kernel. The simulated and experimental results achieve an excellent agreement with the claims in \Cref{section further remarks}. Without loss of generality, we only consider MTL with two tasks and display its performance for the first task (we can obtain a similar result for the second task cause MTL treats both tasks equally).}
    
    
    \subsection{A theoretical justification of PCR factor}\label{theory PCR}
{This subsection provides a theoretical guarantee of the PCR factor's improvement in MTL on a one-dimensional classification problem. Explicitly, we analyze the ability of MTL originating from \cite{evgeniou2004regularized} based on the least-square classifiers due to its availability of the analytic solution. }

{We assume that data are sampled from a two-class Gaussian model with ${x}_1|(y_1=1)\sim \mathcal{N}(\mu^+_1,\sigma_1^2)$, ${x}_1|(y_1=-1)\sim \mathcal{N}(\mu_1^-,\sigma_1^2)$ for the first task, and ${{x}}_2|({y}_2=1)\sim \mathcal{N}({\mu}_2^+,{\sigma}^2_2)$, ${{x}_2}|({y}_2=-1)\sim \mathcal{N}({\mu}^-_2,{\sigma}^2_2)$ for the second task. For each task, we assume there exists no sampling bias of the positive and negative classes. We also assume $\mu_1^+>\mu_1^-$ and denote $\delta^+={\mu}_2^+-{\mu}_1^+$, $\delta^-={\mu}_2^--{\mu}_1^-$. Based on the problem setting above, the decision boundaries for these two tasks can be expressed as $w_1x+b_1=0$ and $w_2x+b_2=0$, respectively. Without loss of generality, we can further suppose $w_1=w_2$, where $w_1$ and $w_2$ are two binary variables and only take two values $-1$ or $1$ (will be shown later).}

{Combining the MTL \cite{evgeniou2004regularized} with the least-square classifiers can naturally produce the multi-task least-square classification model (referred to as ``M-LSC'') in one-dimensional setting as follows,
}

\begin{align}
& \min\limits_ { \tiny \begin{array}{c}
	b_0,b_1,b_2, w_1,w_2
	\end{array} }&
\begin{aligned}
&\bigg\{\sum _ { t = 1} ^ { 2 }\sum _ { i = 1} ^ { m_1 }(y_t^i-w_tx_t^i-b_t)^2+ \frac{\lambda _ { 1}}{2} \sum _ { t = 1} ^ { 2 } v_ { t }^2 \bigg\},\notag
\end{aligned}
\end{align}
{where $b_1=b_0+v_1$ and $b_2=b_0+v_2$ are two parameters of the decision boundaries for the M-LSC classifiers. }

{By the method of Lagrange multipliers and the assumption that $w_1=w_2$, we can easily obtain the optimal solution of the M-LSC problem for the first task as:
$b_1^*=\frac{(G-1)}{m_1} \sum_{i=1}^{m_1} w_1^*x_1^i+\frac{G}{m_2}\sum_{i=1}^{m_2}w_2^*x_2^i$,}
{where $w_1^*=w_2^*=\text{sgn}\left(\sum_{t,i} (x^i_t)^{+} -\sum_{t,i} (x^i_t)^{-}\right)$, and $G=\frac{\lambda_1m_1}{4m_1m_2+\lambda_1(m_1+m_2)}$. Then, by the definition of Bayes classifier and the problem setting above, we can also get the Bayes classifier for the first task as $f_1^*=\text{sgn}\left(x-\frac{\mu_1^++\mu_1^-}{2}\right)$. }

{Based on the results above, we can now estimate the excess misclassification error for the M-LSC estimate with respect to the first task $f_1=\text{sgn}(w_1^*(x-x^0))$, where $x^0=\frac{-b_1^*}{w_1^*}$. As will be shown below, we only need to estimate the MSE of $x^0$.}

{Note that, for the fixed $x^0$ and $w_1^*$, there holds $\mathcal{R}(\text{sgn}(f _ { 1}))=\frac{1}{2}\left(1-w_1^*[\phi(x^0-\mu_1^-)-\phi(x^0-\mu_1^+)]\right)$, where $\phi(\cdot)$ is the CDF of the standard Gaussian distribution. Then, we have $\mathcal{R}(\text{sgn}(f^* _ { 1}))=\frac{1}{2}\left(1-[\phi(\frac{\mu_1^+-\mu_1^-}{2})-\phi(\frac{\mu_1^--\mu_1^+}{2})]\right)$. By Taylor's theorem, and denote $h=x^0-\frac{\mu_1^-+\mu_1^+}{2}$, we can check that for the $x^0$ and $w_1^*$ learned by M-LSC, there holds}
\begin{align}\label{reformulate loss}
\mathcal{R}(\text{sgn}(f _ { 1}))-\mathcal{R}(\text{sgn}(f^* _ { 1}))= D\cdot \mathbb{ E }(h)^2+\mathcal{O}(\mathbb{ E }(h)^4),
\end{align}
{where $D=\left[\phi^{''}(\frac{\mu_1^--\mu_1^+}{2})-\phi^{''}(\frac{\mu_1^+-\mu_1^-}{2})\right]/4$ is a constant.}

{By some standard calculations, we can obtain that } 
\begin{align}\label{pcr1}
\mathbb{ E }(h)^2= \frac{(1-G)^2\sigma_1^2}{m_1}+\frac{G^2\delta_1^2}{4}+\frac{G^2\sigma_2^2}{m_2},
\end{align}
{where $\delta=\delta^++\delta^-$ is the task dissimilarity. Notice that we can degenerate M-LSC results to its single-task counterpart (S-LSC) by making $\lambda_1=0$. Then, we see that the real convergence rates of M-LSC and S-LSC are both $\mathcal{O}(\frac{1}{m})$ by assuming $m_1=m_2=m$. According to the PCR factor's definition in \Cref{section benefit of multi}, the PCR factor of M-LSC is $(1-G)^2\sigma_1^2+G^2\sigma_2^2+G^2\delta_1^2/4$, and $\sigma_1^2$ for the S-LSC case. We describe the PCR factor as a function of data size in \Cref{pcr-figure}.} 

\begin{figure}[!htbp]
	\centering
	\includegraphics[width=0.45\textwidth]{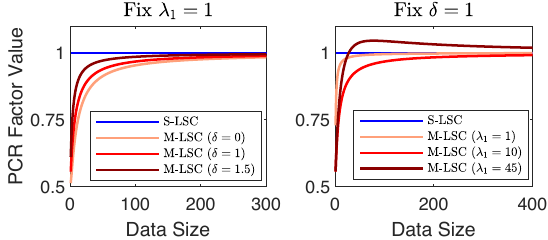}
	\captionsetup{singlelinecheck = false, justification=justified}
	\caption{The change of the PCR factor with the data size $m$, regularization parameter $\lambda_1$, and task dissimilarity $\delta$. Here, $\sigma_1=\sigma_2=1$, and $m_1=m_2=m$.}
	\label{pcr-figure}
\end{figure}

{Our theoretical analysis above demonstrates that PCR factors are different in S-LSC and M-LSC cases. As shown in \Cref{pcr-figure}, M-LSC has a smaller PCR factor value when the data size is relatively small, whereas its PCR factor value converges to that of the S-LSC as $m\rightarrow\infty$. This shows that the priority of M-LSC lies in reducing the PCR factor when the data size is relatively small. Furthermore, the PCR factor also depends on the task dissimilarity $\delta$ (see the left panel of \Cref{pcr-figure}); the less $\delta$ is, the less the PCR factor will be.
This occurs because more similar tasks contain more common knowledge that can be shared between tasks. 
Moreover, the PCR factor is related to the regularization parameter $\lambda_1$ (see the right panel of \Cref{pcr-figure});
particularly, M-LSC can be worse than S-LSC when $\lambda_1$ is not properly chosen.
Therefore, the capacity of MTL also relies on the choice of tuning parameters. }

{We theoretically verified that in this case study, the benefit of MTL lies in the improvement of the PCR factor rather than the convergence rate. 
In what follows, our experiments on the simulated and real data sets demonstrate that this conclusion universally holds in general MTL frameworks.  
}

\subsection{Simulation studies}\label{simulation studies}


\begin{figure*}[!htbp]
    \centering
    \includegraphics[width=0.9\textwidth]{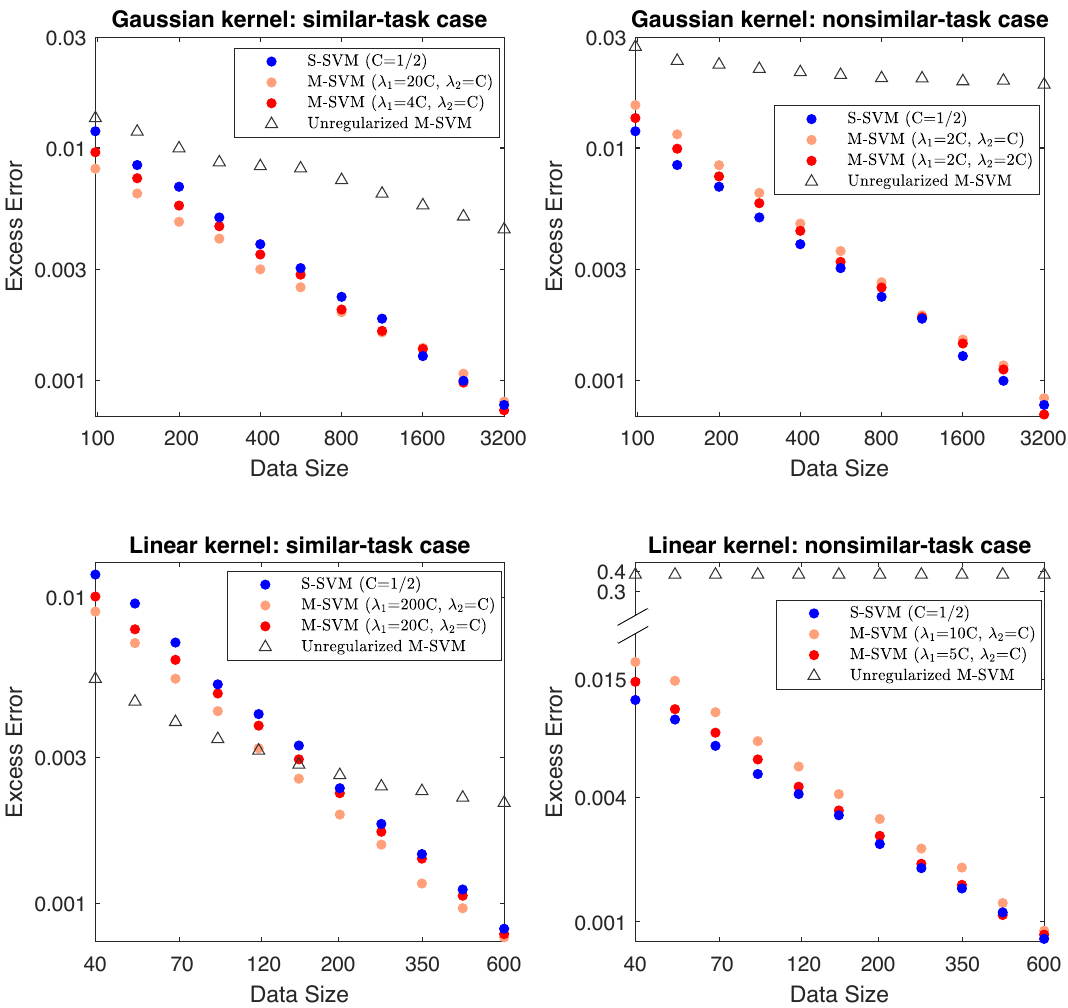}
    \captionsetup{singlelinecheck = false, justification=justified}
    \caption{
    This figure shows the log-log plot of the excess misclassification error (excess error) to the training data size on the simulated dataset using the S-SVM (with the regularization parameter $C=1/2$), M-SVM, and unregularized M-SVM in the presence of the randomness in the sampling frequency. 
    For rigorous comparisons, we fixed one of the regularization parameters ($\lambda_1=2C$ or $\lambda_2=C$) for the M-SVM in each case to make it comparable with the S-SVM. The plots of these two models are parallel initially, then get closer gradually and overlap finally, indicating that they have the same convergence rate but different PCR factors. Without the regularization terms, the unregularized M-SVM classifier cannot converge to the corresponding Bayes rule. }
    \label{fig1}
 \end{figure*}

{In this subsection, we first design two simulation studies of M-SVM (one with Gaussian (set its width as $1$) and one with linear kernels) to support our claims in \Cref{section further remarks} under two different task dissimilarity settings. 
To mimic the scenarios in real problems, the data are generated from multi-variate distributions and linearly non-separable, and the uncertainty of the data frequency is also considered.
Then, we show that the simulation results of several other MTL methods including L21, LASSO, and SRMTL (these models can be found and solved by \cite{zhou2011malsar}) are also consistent with the claims.}

{For the similar-task setting, we generate data for the first task from a two-class Gaussian model with $\textbf{x}_1|(y_1=1)\sim \mathcal{N}([\begin{smallmatrix} 2\\3 \end{smallmatrix}],[\begin{smallmatrix} 1&0\\0&2 \end{smallmatrix}])$, $\textbf{x}_1|(y_1=-1)\sim \mathcal{N}([\begin{smallmatrix} 4\\5 \end{smallmatrix}],[\begin{smallmatrix} 1&0\\0&2 \end{smallmatrix}])$. We generate data for the second task from the other two-class Gaussian model with $\textbf{x}_2|(y_2=1)\sim \mathcal{N}([\begin{smallmatrix} 2.2\\3.2 \end{smallmatrix}],[\begin{smallmatrix} 1&0\\0&2 \end{smallmatrix}])$, $\textbf{x}_2|(y_2=-1)\sim \mathcal{N}([\begin{smallmatrix} 4.0\\5.4 \end{smallmatrix}],[\begin{smallmatrix} 1&0\\0&2 \end{smallmatrix}])$. For the nonsimilar-task case, we generate data for the first task the same as the above, while we assume data for the second task are sampled from $\tilde{\textbf{x}}_2|(\tilde{y}_2=1)\sim \mathcal{N}([\begin{smallmatrix} 7\\8 \end{smallmatrix}],[\begin{smallmatrix} 1&0\\0&2 \end{smallmatrix}])$, $\tilde{\textbf{x}}_2|(\tilde{y}_2=-1)\sim \mathcal{N}([\begin{smallmatrix} 5\\12 \end{smallmatrix}],[\begin{smallmatrix} 1&0\\0&2 \end{smallmatrix}])$. We assume the probability of each sample coming out from task $t$ is $0.5$ (that is, $m_t/N\approx 0.5$ when $N$ is large enough), for $t=1,2$.}

{We apply the single-task SVM (S-SVM), M-SVM (with different parameters) and unregularized M-SVM to the classification problems above. The excess error is summarized in \Cref{fig1} where each dot represents the average of $2000$ random experiments based on $10000$ test samples. In all cases, we see that the log-log plots of the S-SVM and M-SVM are initially parallel, then they gradually get closer and finally overlap, indicating that they have the same convergence rate but the different PCR factors. Particularly, for the similar-task case (e.g., the top left panel of \Cref{fig1}), $C_M(m_1)$ ($<C_S(m_1)$) gets smaller and approaches to $C_S(m_1)$ as the growth of $\lambda_1$ with fixed $\lambda_2$. It is because when $\lambda_1$ becomes more significant, M-SVM makes the tasks share more information from each other. When we fix $\lambda_1$ and increase $\lambda_2$ (e.g, the top right panel), $C_M(m_1)$ ($>C_S(m_1)$) becomes smaller and gets closer to $C_S(m_1)$ in the nonsimilar-task case. This occurs because M-SVM tends to learn each task independently when $\lambda_2$ grows greatly.
Moreover, in all cases, the positive or negative improvement of PCR factors in MTL is more significant when the data size is relatively small, and it vanishes when the data size goes to infinity. 	
These observations are consistent with our theoretical result in Theorem \ref{theorem4}, and our claims in \Cref{subsubsection interactions} and \Cref{section benefit of multi}.}

{For the unregularized M-SVM, that is the M-SVM with its regularization parameters being zero, we see that it cannot converge to the Bayes classifier in all scenarios. This is expected because, 
the unregularized M-SVM cannot leverage the dissimilarity of tasks, and it simply treats two different tasks as the one.	This fact supports our discussions in \Cref{section regularization term}.}

\begin{table}[htp]
	\centering
	\caption{M-SVM with Gaussian Kernel}
	\begin{tabular}{|c|c|c|c|c|c|}
		\hline
		Date size & $200$ & $400$ & $800$ & $1600$ & $3200$ \\
		\hline 
		EE-NRF$^{\rm a}$ & $0.0053$ & $0.0032$ & $0.0020$ & $0.0012$ & $0.0008$ \\
		\hline
		PPD-SF$^{\rm b}$(\%)& $10.74\%$ & $5.78\%$ & $0.26\%$ & $10.70\%$ & $4.30\%$ \\
		\hline
	\end{tabular}
\begin{tablenotes}
	\footnotesize
	\item EE-NRF$^{\rm a}$: the excess error of M-SVM without the randomness of the sampling frequency.
	\item PPD-SF$^{\rm b}$: the (absolute) percentage performance difference of the M-SVM with/without the randomness of the sampling frequency. We define it as the excess error difference of these two models as a percentage of the excess error of the M-SVM. The results are computed in the similar-task setting with $\lambda_1=20C, \lambda_2=C (C=1/2)$.
\end{tablenotes}
\label{table1}
\end{table}

\begin{table}[htp]
	\centering
	\caption{M-SVM with Linear Kernel}
	\begin{tabular}{|c|c|c|c|c|c|}
		\hline
		Date size & $70$ & $120$ & $200$ & $350$ & $600$ \\
		\hline 
		EE-NRF$^{\rm a}$ & $0.0061$ & $0.0038$ & $0.0023$ & $0.0014$ & $0.0008$ \\
		\hline
		PPD-SF$^{\rm b}$
		 (\%)& $1.47\%$ & $0.48\%$ & $3.15\%$ & $4.86\%$ & $2.86\%$ \\
		\hline
	\end{tabular}
    \label{table2}
\end{table}

{To quantify the effects caused by the randomness of the sampling frequency in the multi-task problem, we compute the absolute percentage performance difference (in terms of the excess error) for the M-SVM with/without this randomness in \Cref{table1} and \Cref{table2}. We see that, in both cases, it has a small but non-negligible effect on the M-SVM's performance, and it does not change the convergence rate (as the absolute percentage differences are relatively small when the data size is large enough). This also supports our theoretical result in \Cref{theorem4}.}


{We also verify our claim of \Cref{section benefit of multi} for several other MTL frameworks, including L{21}, Lasso, and SRMTL (these models can be found and solved by \cite{zhou2011malsar}). 
 \Cref{fig2} again shows that these MTL methods and their single-task counterparts have the same convergence rate (the log-log plots finally overlap) but the different PCR factors (the log-log plots are initially parallel and gradually get closer). The improvement of PCR factors in MTL is more significant when the data size is relatively small. These observations imply that our claims in \Cref{subsubsection interactions} and \Cref{section benefit of multi} hold in the general MTL frameworks.}

\begin{figure}[!htbp]
	\centering
	\includegraphics[width=0.45\textwidth]{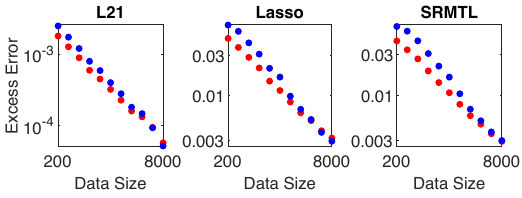}
	\captionsetup{singlelinecheck = false, justification=justified}
	\caption{Log-log plot of the excess error for the MTL methods (red) including L21, Lasso, SRMTL, and their individual counterparts (blue) in the similar-task setting. The regularization parameters for these models are 0.1, 0.4 and 0.4 (the graph regularization parameter is 0.01), respectively.}
	\label{fig2}
\end{figure}

{

\subsection{Experiments on blast furnace dataset}\label{experiments on blast furnace dataset}

{In this section, we verify our results on real blast furnace data. The dataset are collected from two typical Chinese blast furnaces with the inner volume of about $2500$ $m^{3}$ and $750$ $m^{3}$, referred as blast furnace (a) and (b), respectively. Table \ref{tab1} lists the features that are relevant for predicting the silicon class labels for these two blast furnaces. We labeled the records satisfying ($x_6 \leq 0.4132$) / ($x_6 \leq 0.3736$) for furnace (a) / (b) as $-1$, and $+1$ otherwise \cite{Chen2019,gao2013rule}. Due to the (2-8h) time delay for furnace outputs to respond to inputs, we also treat 4 lagged terms for the first five features as inputs \cite{chen2021transfer}. 
Furnaces (a) and (b) have 794 and 800 samples, respectively. }

\begin{table}[htp]
\centering
\caption{ Input variables For blast furnaces}
\label{tab1}
\begin{threeparttable}
\begin{tabular}{lll}
\toprule
Variable name [Unit] & Symbol & Input variable \\
\midrule
Blast volume [m$^{3}$/min] & $x_{1}$ & $q^{-1}$, $q^{-2}$, $q^{-3}$, $q^{-4}$\\
Blast temperature [$^{\circ}$C] & $x_{2}$ & $q^{-1}$, $q^{-2}$, $q^{-3}$, $q^{-4}$ \\
Feed speed [mm/h] & $x_{3}$ & $q^{-1}$, $q^{-2}$, $q^{-3}$, $q^{-4}$ \\
Gas permeability [m$^{3}$/min$\cdot$kPa]  & $x_{4}$ & $q^{-1}$, $q^{-2}$, $q^{-3}$, $q^{-4}$ \\
Pulverized coal injection [ton] & $x_{5}$ & $q^{-1}$, $q^{-2}$, $q^{-3}$, $q^{-4}$ \\
Silicon content [wt\%] & $x_{6}$ & $q^{-1}$\\
\bottomrule
\end{tabular}
 \begin{tablenotes}
        \footnotesize
        \item[] $q^{-1}$, $q^{-2}$, $q^{-3}$, $q^{-4}$ represent delay operators, e.g., $q^{-1}x(t)=x(t-1)$. 
      \end{tablenotes}
    \end{threeparttable}
\end{table}

{We apply the S-SVM, M-SVM (with different parameters), and unregularized M-SVM using a linear kernel to the silicon classification problems. The misclassification error (the Bayes classifier is unknown here) is summarized in \Cref{fig3} where each dot represents the average of $10000$ random splits of the datasets with $m_1=100, 110, 120, \cdots,490$ samples as the training set and the remaining $300$ ones as the test set. For each partition, we normalize the variables of training samples to zero mean and unit variance, while the test samples are normalized accordingly. We see that the performance curves of the S-SVM and M-SVM are separable initially, then they get closer gradually and overlap finally, indicating that they have the same convergence rate but different PCR factors. 
Moreover, the improvement of PCR factors in MTL is more significant when the data size is relatively small.	
This phenomenon excellently agrees with the results of the simulation studies above. In addition, we observe from \Cref{fig3} that the unregularized M-SVM initially outperforms the M-SVM, due to the intrinsic similarity of these two furnaces. However, the unregularized M-SVM performs worse than the S-SVM and M-SVM when the training data size is relatively large. 
This occurs because the unregularized M-SVM classifier cannot converge to the corresponding Bayes rule without the regularization terms.}  


\begin{figure}[!htbp]
	\centering
	\includegraphics[width=0.45\textwidth]{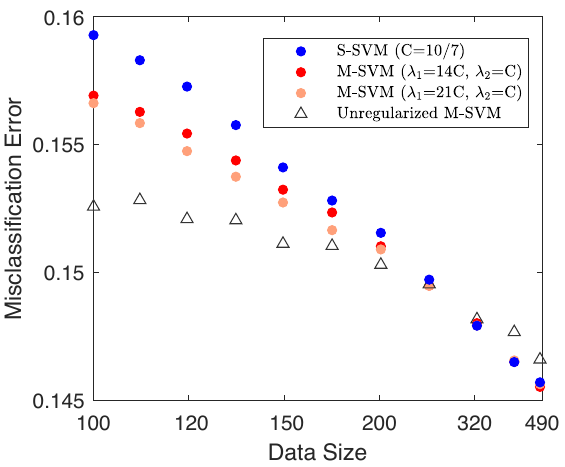}
	\captionsetup{singlelinecheck = false, justification=justified}
	\caption{This figure shows the misclassification error plots for the S-SVM, M-SVM, and unregularized M-SVM on blast furnace data. The horizontal axis is given in $1/m_1$ scale and is sorted in ascending order. We see that the plots of the S-SVM and M-SVM are initially separable, then they gradually get closer and finally overlap, indicating that they have the same convergence rate but the different PCR factors. Without the regularization terms, the unregularized M-SVM classifier cannot converge to the corresponding Bayes rule.}
	\label{fig3}
\end{figure}

\section{Discussion}\label{section 5}	
	There are a number of future works we are currently pursuing related to the error analysis of the regularized MTL.
	
	
	{
{We provided both the theoretical (in a one-dimensional least-square classification setting) and experimental justification to support our claim that the benefit of MTL is the improvement the PCR factor rather than the convergence rate. Based on the theoretical results of this simple case, we also discussed the conditions under which the MTL technique originating from \cite{evgeniou2004regularized} can improve the PCR factor.
	It would be interesting to theoretical verify this claim in more general MTL frameworks (e.g., the MTL methods listed in introduction), which requires a more elaborate analysis of the central limit property of the learning methods. We leave this issue for future work.
    }
	
	Another interesting question is whether we can still get optimal asymptotic performance (i.e., Bayes risk consistency) in the MTL setting discussed in this paper by assuming there are some common structures of functional space $\mathcal{ H }$ for each task. Maurer et al. \cite{maurer2016benefit} considered a similar problem in the context of an un-regularized MTL problem where a common feature representation of different tasks is shared. They also derived the error upper bound for this learning method. It is unknown whether sharing a common feature representation in MTL can achieve optimal performance.
	
	Intuitively, learning multiple tasks simultaneously in a model may interrupt the learning process of each task when we enforce the structural information to be shared between the classifiers. For example,  by considering to decompose an infinite dimensional functional space $\mathcal{ H }$ into the directed sum of two sub-spaces $\mathcal{H}_1$ and $\mathcal{H}_2$, that is $\mathcal{H}=\mathcal{H}_1+\mathcal{H}_2$, and assuming the function $f_t$ for each task $t$ can be written as $f_t=f_0+g_t$ where $f_0\in\mathcal{H}_1$ and $g_t\in\mathcal{H}_2$, for every $t\in\{1,\cdots, T\}$. In this case, the feasible space of functions for all $T$ tasks $(f_0, g_1, g_2, \cdots, g_T)$ is $\mathcal{H}_1\times \prod_{i=1}^{T} \mathcal { H }_2$, rather than $\prod_{i=1}^{T} \mathcal { H }$ considered in Eq. \eqref{svm2}. Thus, the classifier learned by MTL using the common functional structure technique {may} not converges to the Bayes rule in the limit of large data size, due to the smaller feasible functional space. The discussion indicates that it is important to balance the complexity of functional space and the computational costs in MTL. 
	
	{Note that we can reformulate M-SVM \eqref{svm2} as a standard SVM problem with $N$ training data \cite{evgeniou2004regularized}. If solving a SVM requires the time $\mathcal{O}(m_t^2)$ \cite{chang2011libsvm} for each task, the complexity of M-SVM would be $\mathcal{O}(N^2)$, where $N=\sum_{t=1}^{T}m_t$. Therefore, it is necessary to speedup M-SVM by using a parallelized computational structure when encountering the big data. To achieve this, we need to properly decompose the optimization problem \eqref{svm2} into $T$ subproblems for the $T$ tasks. For example, we can adapt a decomposing method from \cite{zhang2015parallel} to design the parallel algorithm for M-SVM. However, it is out of the scope of this work and will be left to future work.}
	
	Another attractive problem is generalizing our analysis framework to the regularized online MTL problems, where the large-scale data arrive sequentially. Especially, it is interesting to analyze the stochastic generalization error bounds in terms of the step sizes and the approximation property of functional space when the various regularization terms are included in the functional iterations to model different task relations. 
	
	\section{Conclusions}\label{section 6}
	In this work, we analyze the M-SVM's asymptotic performance by focusing on the asymptotic behaviors of the learned classifier in terms of the excess misclassification error.
	We first generalize the regularized multi-task learning (MTL) model \cite{evgeniou2004regularized} by introducing the kernel map and the uncertainty of data frequency of tasks into it.
	Then, we show the upper bound of the excess misclassification error to be $\mathcal{O}(( \log N ) ^ { - s/4 })$ almost surely. 
	This result demonstrates that the regularized MTL framework can produce reliable classification rules when the sample size goes to infinity. 
	{
    Furthermore, we find that the interaction of tasks vanishes as the data size goes to infinity, and the convergence rates of the M-SVM and its single-task counterpart have the same upper bound.
    The former suggests that M-SVM cannot improve the limit classifier's performance; based on the latter fact, we raise the conjecture that the optimal convergence rate is not improved either when the number of tasks is fixed. 
    
    {As a novel insight of MTL, our theoretical and experimental results achieved an excellent agreement that the benefit of M-SVM lies in improving the pre-convergence-rate (PCR) factor (denoted as the ratio between the excess misclassification error and its real convergence rate in Section III) rather than the convergence rate. This improvement of PCR factors is more significant when the data size is small. Moreover,
    our simulation results of several other MTL methods, including L{21}, Lasso, and SRMTL (these models can be found and solved by \cite{zhou2011malsar}) also demonstrate the generality of this new insight in MTL.
    	Therefore, PCR factor is more suitable and accurate to depict the essential advantages of MTL when the task number is fixed. }

\section*{Appendix}

\setcounter{equation}{0}
\renewcommand{\theequation}{\thesection.\arabic{equation}}
\renewcommand\thesection{\Alph{subsection}}
\renewcommand\thelemma{\thesection.\arabic{lemma}}
\subsection{Proofs}

\begin{proof}[{\bf The proof of Lemma 2}]
	Notice that
	\begin{align}
	&\mathcal{E}(f ^z_ { 1}  ,\cdots ,f^z _ { T })-\mathcal{E}(f^*_ { 1}  ,\cdots ,f^*_ { T })\\
	=&\left\{\mathcal { E } \left(f ^z_ { 1}, \cdots ,f^z _ { T } \right) - \mathcal { E } _ { z } \left( f ^z_ { 1} ,\cdots ,f^z _ { T }\right)\right\}	\label{eq. sample error 1}\\
	&+\left\{\mathcal{E}_z(f^z_ { 1},\cdots ,f^z_ { T })+\frac{\rho_1}{N}  \sum _ { t = 1} ^ { T }\frac{m_t}{N} \| f^z_ { t } \|_{K} ^ { 2} \right. \notag \\
	&\quad\quad\left. + \frac{\rho_2}{N} \sum _ { t = 1} ^ { T } \frac{m_t}{N}\left\| f^z_ { t } -\sum _ { s = 1} ^ { T }\frac{m_s}{N} f^z _ { s } \right\|_{K}^ { 2}\right\}\label{eq. f1}\\
	&-\left\{\mathcal{E}_z(f^{\mathcal{H}}_ { 1},\cdots ,f^{\mathcal{H}}_ { T })+\frac{\rho _ { 1}}{N} \sum _ { t = 1} ^ { T }p(t)\| f^{\mathcal{H}}_ { t } \|_{K} ^ { 2} \right. \notag\\
	&\qquad \left. + \frac{\rho _ { 2}}{N}\sum _ { t = 1} ^ { T }p(t)\left\| f^{\mathcal{H}}_ { t } -\sum _ { s = 1} ^ { T }p(s) f^\mathcal{ H } _ { s } \right\|_{K}^ { 2}\right\}  \label{eq. f2}\\
	&+\left\{\mathcal { E }_z\left(f^{\mathcal{H}}_ { 1}, \cdots ,f^{\mathcal{H}} _ { T } \right) - \mathcal { E } \left( f^{\mathcal{H}}_ { 1} ,\cdots ,f^{\mathcal{H}} _ { T }\right)\right\}\label{eq. sample error 2}\\
	&+\Bigg\{\mathcal{E}(f^{\mathcal{H}}_ { 1}  ,\cdots ,f^{\mathcal{H}} _ { T })-\mathcal{E}(f^*_ { 1}  ,\cdots ,f^*_ { T })+\frac{\rho _ { 1}}{N} \sum _ { t = 1} ^ { T }p(t)\| f^{\mathcal{H}}_ { t } \|_{K} ^ { 2}   \notag\\
	&\qquad \left. + \frac{\rho _ { 2}}{N}\sum _ { t = 1} ^ { T }p(t)\left\| f^{\mathcal{H}}_ { t } -\sum _ { s = 1} ^ { T }p(s) f^\mathcal{ H } _ { s } \right\|_{K}^ { 2}\right\}\label{eq. regularization error}\\
	&-\frac{\rho _ { 1}}{N} \sum _ { t = 1} ^ { T }\frac{m_t}{N} \| f^z_ { t } \|_{K} ^ { 2} -\frac{\rho _ { 2}}{N}\sum _ { t = 1} ^ { T } \frac{m_t}{N}\left\| f^z_ { t } -\sum _ { s = 1} ^ { T }\frac{m_s}{N} f^z _ { s } \right\|_{K}^ { 2} \notag
	\end{align}	
	Moreover, there is
	\begin{align}
	&\eqref{eq. f2} \notag \\
	\leq& -\left\{\mathcal{E}_z(f^{\mathcal{H}}_ { 1},\cdots ,f^{\mathcal{H}}_ { T })+\frac{\rho _ { 1}}{N} \sum _ { t = 1} ^ { T }\frac{m_{t}}{N}\| f^{\mathcal{H}}_ { t } \|_{K} ^ { 2} \right. \notag \\
	&\left. +\frac{\rho _ {2}}{N}\sum _ { t = 1} ^ { T } \frac{m_t}{N}\left\| f^{\mathcal{H}}_ { t } -\sum _ { s = 1} ^ { T }\frac{m_{t}}{N} f^\mathcal{ H } _ { s } \right\|_{K}^ { 2}\right\} \notag\\
	&-\frac{\rho_{ 1}}{N}\sum_{ t = 1}^{T} \left(p(t)-\frac{m_{t}}{N}\right)  \| f^{\mathcal{H}}_ { t }\|_{K} ^ { 2}\notag\\
	& +\frac{\rho_2}{N}\sum_{t=1}^{T}\left(\frac{m_{t}}{N}-p(t)\right)\left\| f^{\mathcal{H}}_ { t } -\sum _ { s = 1} ^ { T }\frac{m_s}{N} f^{\mathcal{H}}_{s} \right\|^2_{K}\notag\\
	& +\frac{2\rho _ {2}}{N}\sum _ { t = 1} ^ { T }p(t)\left\| f^{\mathcal{H}}_ { t } -\sum _ { s = 1} ^ { T }\frac{m_s}{N} f^{\mathcal{H}}_{s} \right\|_{K}\left\| \sum _ { t = 1} ^ { T } \left( p(t) - \frac{m_t}{N}\right) f^{\mathcal{H}}_{t} \right\|_{K}\notag
	\end{align}
	Combining above inequality with \eqref{eq. f1}, we obtained
	\begin{eqnarray}
	\begin{aligned}
	&\eqref{eq. f1}+\eqref{eq. f2} \\
	\leq&  -\frac{\rho_{ 1}}{N}\sum_{ t = 1}^{T} \left(p(t)-\frac{m_{t}}{N}\right)\| f^{\mathcal{H}}_ { t } \|_{K} ^ { 2}  \notag\\
	& +\frac{\rho_2}{N}\sum_{t=1}^{T}\left(\frac{m_{t}}{N}-p(t)\right)\left\| f^{\mathcal{H}}_ { t } -\sum _ { s = 1} ^ { T }\frac{m_s}{N} f^{\mathcal{H}}_{s} \right\|^2_{K}\notag\\
	& +\frac{2\rho _ {2}}{N}\sum _ { t = 1} ^ { T }p(t)\left\| f^{\mathcal{H}}_ { t } -\sum _ { s = 1} ^ { T }\frac{m_s}{N} f^{\mathcal{H}}_{s} \right\|_{K}\left\| \sum _ { t = 1} ^ { T } \left( p(t) - \frac{m_t}{N}\right) f^{\mathcal{H}}_{t} \right\|_{K}\notag
	\end{aligned}
	\end{eqnarray}
	Denote the right hand side of above inequality as $\mathcal{F}(N,T,\rho_{ 1},\rho_{ 2})$, $\mathcal{S}(N,T)=\eqref{eq. sample error 1}+\eqref{eq. sample error 2}$, and $\mathcal{ D }(N,T,\rho_{ 1},\rho_{ 2})=\eqref{eq. regularization error}$, we obtain the result.
\end{proof}

Before showing the proof of \textit{Lemma 3}, we first provide some auxiliary lemmas which will be helpful for the later proof.
The first one bounds the minimizer of M-SVM and its average empirical loss.

\begin{lemma}\label{lemma 6}
	For functions $\ell(\cdot)$ and $f_{t}^{z}$, we have
	\begin{itemize}
		\item $\sum_{ i = 1}^{m_{t}} \frac{\ell \left({y_{i,t}}\cdot f^{z}_{t}({\mathbf{x}_{i,t}} ) \right)}{m_{t}} \leq \frac{N}{m_{t}}$
		\item $\|f_{t}^{z}\|_{K}\leq \frac{N}{\sqrt{\rho_{ 1}\cdot m_{t}}}$
	\end{itemize}
\end{lemma}
\begin{proof}
	Let's define an auxiliary function $\tilde{f}(\cdot)\equiv0$ which obviously lies in $\mathcal{ H }_{K}$
	Note that $\{f^{z}_{t}\}$ minimize the objective function \ref{svm2}.
	Therefore, we have the following relation
	\begin{eqnarray}
	\begin{aligned}
	&\sum _ { t = 1} ^ { T } \sum _ { i = 1} ^ { m_{t} }\ell ({y_{i,t}}\cdot f^{z}_{t}({\mathbf{x}_{i,t}} ) )+ \rho _ { 1} \sum _ { t = 1} ^ { T }\frac{m_t}{N} \| f^{z}_ { t } \| _{K}^ { 2} \\
	&+ \rho _ { 2}\sum _ { t = 1} ^ { T } \frac{m_t}{N}\left\| f ^{z}_ { t } -\sum _ { s = 1} ^ { T }\frac{m_s}{N} f^{z} _ { s } \right\|_{K}^ { 2} \notag\\
	\leq&
	\sum _ { t = 1} ^ { T } \sum _ { i = 1} ^ { m_{t} }\ell\left({y_{i,t}}\cdot\tilde{f}({\mathbf{x}_{i,t}} ) \right) + \rho _ { 1} \sum _ { t = 1} ^ { T }\frac{m_t}{N} \| \tilde{f} \| _{K}^ { 2} \\
	&+ \rho _ { 2}\sum _ { t = 1} ^ { T } \frac{m_t}{N}\left\| \tilde{f} -\sum _ { s = 1} ^ { T }\frac{m_s}{N} \tilde{f} \right\|_{K}^ { 2}. \notag
	\end{aligned}
	\end{eqnarray}
	Since the left hand side of above inequality is greater than or equals to $\sum_{ i = 1}^{m_{t}} {\ell \left({y_{i,t}}\cdot f^{z}_{t}({\mathbf{x}_{i,t}} ) \right)}$, where $t=1,...T$, and the right hand side of the inequality equals to $N$, we have the relation
	\begin{equation*}
	\sum_{ i = 1}^{m_{t}} {\ell \left({y_{i,t}}\cdot f^{z}_{t}({\mathbf{x}_{i,t}} ) \right)} \leq N,
	\quad \text{and} \quad		
	\sum_{ i = 1}^{m_{t}} \frac{\ell \left({y_{i,t}}\cdot f^{z}_{t}({\mathbf{x}_{i,t}} ) \right)}{m_{t}} \leq \frac{N}{m_{t}}.
	\end{equation*}
	Similarly, since $\rho_{ 1}\frac{m_{t}}{N}\|f_{t}^{z}\|_{K}^{2}$ is also less than or equals to the left hand side of the first inequality in this lemma, therefore, we have
	\begin{equation*}
	\rho_{ 1}\frac{m_{t}}{N}\|f_{t}^{z}\|_{K}^{2} \leq N,
	\quad \text{and} \quad		
	\|f_{t}^{z}\|_{K}\leq \frac{N}{\sqrt{\rho_{ 1}\cdot m_{t}}}.
	\end{equation*}
	Thus the result is shown.
\end{proof}

The second auxiliary lemma bounds the sampling error as $\mathcal{O}(\frac{1}{\sqrt{N}})$.

\begin{lemma}\label{lemma 7}
	For a sequence of bounded, independent, and identically distributed random variables $\{X_{i}\}$, there is
	\begin{equation*}
	\frac{\sum_{i=1}^{N} X_{i}}{N} - \mathbb{E}X_1= \mathcal{O} \left(N^{-\frac{1}{2}+\epsilon}\right)\quad \text{a.s.}
	\end{equation*}
	where $\epsilon$ is any positive constant.
\end{lemma}

\begin{proof}
It can be proven directly by applying the Borel-Carntelli lemma to the Hoeffding's inequality. 
\end{proof}

The third auxiliary lemma bounds the generalization error of the M-SVM classifier.

\begin{lemma}\label{lemma 8}
	$$\mathbb { E }_{(\mathcal{X},\mathcal{Y})\sim P_{t} }[ \ell \left(\mathcal{Y}\cdot f^z_{t}( \mathcal{X} ) \right) ] - \sum_{ i = 1}^{m_{t}} \frac{\ell \left({y_{i,t}}\cdot f^{z}_{t}({\mathbf{x}_{i,t}} ) \right)}{m_{t}}  \leq \mathcal{O} \left(N^{-\frac{1}{4}+\epsilon}\right). $$
\end{lemma}

\begin{proof}
	We first denote some notations as follow:
	\begin{eqnarray}
	\tilde{\mathcal{ E}}_{t}(f) &:=& \mathbb { E }_{(\mathcal{X},\mathcal{Y})\sim P_{t} }[ \ell (\mathcal{Y}\cdot f^z_{t}( \mathcal{X} ) ) ] \notag \\
	\tilde{\mathcal{ E }}_{t}^{z}(f) &:=& \sum_{ i = 1}^{m_{t}} \frac{\ell \left({y_{i,t}}\cdot f^{z}_{t}({\mathbf{x}_{i,t}} ) \right)}{m_{t}} \notag \\
	\mathcal{B}_{R} &:=& \left\{ f\in \mathcal{ H }_{K} ~|~ \|f\|_{K} \leq R  \right\} \notag
	\end{eqnarray}
	Let $R^{*} =\frac{N}{\sqrt{\rho_{1}m_{t}}}$, then by \textit{Lemma \ref{lemma 6}} $f^{z}_{t}\in B_{R^*}$. Together with the fact that $\tilde{\mathcal{ E }}_{t}^{z}(f_{z})\leq \frac{N}{m_{t}}$ (c.f \textit{Theorem \ref{lemma 6}}), for any $\alpha>0$ we have
	\begin{eqnarray}
	&&P\left\{
	\tilde{\mathcal{ E}}_{t}(f^{z}_{t}) - 	\tilde{\mathcal{ E }}_{t}^{z}(f^{z}_{t}) > 4\alpha\left(1+\frac{N}{m_{t}}\right) ~\Big|~
	m_{t}		
	\right\} \notag \\
	&\leq&
	P\left\{ \sup_{f\in\mathcal{B}_{R^{*}}}\frac{\tilde{\mathcal{ E}}_{t}(f) - 	\tilde{\mathcal{ E }}_{t}^{z}(f)}{1+	\tilde{\mathcal{ E }}_{t}^{z}(f)} > 4\alpha~\Big|~
	m_{t}		
	\right\}\notag\\
	&\leq& \mathcal{N}\left(\frac{\alpha}{R^{*}}\right) \exp\left\{ -\frac{m_{t}\cdot\alpha^{2}}{32(1+\kappa R^{*})}\right\} \label{inequality 1}
	\end{eqnarray}	
	where the last inequality follows from \text{lemma 5} in \cite{wu2006analysis}, and $\mathcal{N}\left(\frac{\alpha}{R^{*}}\right)$ is the covering number defined as the minimal number of balls with radius $\frac{\alpha}{R^{*}}$ to cover the unite ball in RKHS.
	According to \cite{zhou2002covering}, the covering number has the relation
	\begin{equation*}
	\log \mathcal{N} (\epsilon) \leq M\left(\log \frac{1}{\epsilon}\right)^{d+1}
	\end{equation*}
	where $M$ is a constant, and $d$ is the dimension of the data set space.
	By plugging above inequality and $\alpha=\alpha^{*}(N,m_{t})=N^{\epsilon}\cdot\sqrt{\frac{1+\kappa R^{*}}{m_{t}}}$ into \eqref{inequality 1}, we have
	\begin{eqnarray}
	&&P\left\{
	\tilde{\mathcal{ E}}_{t}(f^{z}_{t}) - 	\tilde{\mathcal{ E }}_{t}^{z}(f^{z}_{t}) > 4\left(1+\frac{N}{m_{t}}\right)\cdot\alpha^{*}(N,m_{t}) ~\Big|~
	m_{t}		
	\right\} \notag \\
	&\leq&
	\exp\left\{  M\left(\log \frac{N}{N^{\epsilon}\sqrt{\rho_{ 1}+\rho_{ 1}\kappa R^{*}}}\right)^{d+1} -  \frac{1}{32} N^{2\epsilon} \right\} \notag \\
	&\leq& \exp\left\{  M\left(\log \frac{N^{1-\epsilon}}{\sqrt{\rho_{ 1}}}\right)^{d+1} -  \frac{1}{32} N^{2\epsilon} \right\} \notag \\
	\end{eqnarray}
	where $\epsilon$ is arbitrary positive constant.
	Furthermore, by above inequality and some easy calculations we can obtain
	\begin{align}
	&P\left\{
	\tilde{\mathcal{ E}}_{t}(f^{z}_{t}) - 	\tilde{\mathcal{ E }}_{t}^{z}(f^{z}_{t}) > 4\left(1+\frac{N}{m_{t}}\right)\cdot\alpha^{*}(N,m_{t}) ~\Big|~
	m_{t}		
	\right\} \notag\\
	&=  \mathbb{ E }_{m_{t}} \left[	P\left\{
	\tilde{\mathcal{ E}}_{t}(f^{z}_{t}) - 	\tilde{\mathcal{ E }}_{t}^{z}(f^{z}_{t}) > 4\alpha\left(1+\frac{N}{m_{t}}\right) ~\Big|~
	m_{t}		
	\right\} \right]\notag\\
	&\leq  \mathbb{ E }_{m_{t}} \left[ \exp\left\{  M\left(\log \frac{N^{1-\epsilon}}{\sqrt{\rho_{ 1}}}\right)^{d+1} -  \frac{1}{32} N^{2\epsilon} \right\} \right]\notag\\
	&= o \left(N^{-2}\right), \notag
	\end{align}
	and, therefore,
	\begin{equation*}
	\begin{aligned}
	&\sum_{N=1}^{\infty}P\left\{
	\tilde{\mathcal{ E}}_{t}(f^{z}_{t}) - 	\tilde{\mathcal{ E }}_{t}^{z}(f^{z}_{t}) > 4\left(1+\frac{N}{m_{t}}\right) \right.  \left. \cdot\alpha^{*}(N,m_{t}) ~\Big|~
	m_{t}		
	\right\} \\
	&< +\infty.
	\end{aligned}
	\end{equation*}
	
	By Borel-Cantelli Lemma, we have the relation
	\begin{equation*}
	\limsup_{N\to\infty} \tilde{\mathcal{ E}}_{t}(f^{z}_{t}) - 	\tilde{\mathcal{ E }}_{t}^{z}(f^{z}_{t}) - 4\left(1+\frac{N}{m_{t}}\right) \cdot\alpha^{*}(N,m_{t}) \leq 0 \quad \text{a.s.}.
	\end{equation*}
	Besides, according to the definition of $\alpha^{*}(\cdot,\cdot)$ and the strong law of large numbers, we have $\left(1+\frac{N}{m_{t}}\right) \cdot\alpha^{*}(N,m_{t})= \mathcal{O} \left(N^{-1/4+\epsilon}\right)$.
	Thus, $\tilde{\mathcal{ E}}_{t}(f^{z}_{t}) - 	\tilde{\mathcal{ E }}_{t}^{z}(f^{z}_{t}) \leq \mathcal{O} \left(N^{-1/4+\epsilon}\right)$
	which proves the result.
\end{proof}

With these preparations, we can prove \textit{Lemma \ref{lemma3}}.

\begin{proof}[{\bf The proof of Lemma 3}]
	We first consider the first part of sample error,  $\mathcal { E } \left(f ^z_ { 1}, \cdots ,f^z _ { T } \right) - \mathcal { E } _ { z } \left( f ^z_ { 1} ,\cdots ,f^z _ { T }\right)$. By the definition and some easy calculations, we can obtain,
	\begin{align}
	&\mathcal { E } \left(f ^z_ { 1}, \cdots ,f^z _ { T } \right) - \mathcal { E } _ { z } \left( f ^z_ { 1} ,\cdots ,f^z _ { T }\right) \notag \\
	=&  \sum_{t=1}^{T} p(t) \Bigg(\mathbb { E }_{(\mathcal{X},\mathcal{Y})\sim P_{t} }[ \ell \left(\mathcal{Y}\cdot f^z_{t}( \mathcal{X} ) \right) ] \left. -\sum_{ i = 1}^{m_{t}} \frac{\ell \left({y_{i,t}}\cdot f^{z}_{t}({\mathbf{x}_{i,t}} ) \right)}{m_{t}} \right) \notag \\
	&+ \sum_{t=1}^{T}\left[ \left(p(t)-\frac{m_{t}}{N}\right) \sum_{ i = 1}^{m_{t}} \frac{\ell \left({y_{i,t}}\cdot f^{z}_{t}({\mathbf{x}_{i,t}} ) \right)}{m_{t}} \right]  \label{eq. lemma proof auxilliary 1} 
	\end{align}
	Since, for each task $t$, the random variable $m_{t}$ is a counting process with bounded and independent increments, therefore, by \textit{Lemma \ref{lemma 7}}, there holds
	\begin{equation}{\label{eq. lemma proof auxiliarry 2}}
	p(t)-\frac{m_{t}}{N} =  \mathbb{E} \left(\frac{m_{t}}{N}\right) - \frac{m_{t}}{N} = \mathcal{O} \left(N^{-\frac{1}{2}+\epsilon}\right) \quad \text{a.s.}.
	\end{equation}
	By plugging above relation and \textit{lemma \ref{lemma 8}} into \eqref{eq. lemma proof auxilliary 1}, this relation leads to
	\begin{align} {\label{eq. lemma3 proof 1}}
	&\mathcal { E } \left(f ^z_ { 1}, \cdots ,f^z _ { T } \right) - \mathcal { E } _ { z } \left( f ^z_ { 1} ,\cdots ,f^z _ { T }\right)
	=  \mathcal{O} \left(N^{-\frac{1}{4}+\epsilon}\right)\quad \text{a.s.}.
	\end{align}
	
	Then we consider the second part of sample error, $\mathcal { E } _ { z } \left( f^{\mathcal { H }}_{1}, \cdots, f^{\mathcal { H }}_{T} \right) - \mathcal { E } \left( f^{\mathcal { H }}_{1}, \cdots, f^{\mathcal { H }}_{T} \right) $.
	Still by the definition and some easy calculations, we can obtain,
	\begin{align}
	& \mathcal { E } _ { z } \left( f^{\mathcal { H }}_{1}, \cdots, f^{\mathcal { H }}_{T} \right) - \mathcal { E } \left( f^{\mathcal { H }}_{1}, \cdots, f^{\mathcal { H }}_{T} \right) \notag \\
	=& \sum_{t=1}^{T} \left[ \left(\frac{m_{t}}{N}-p(t)\right)\sum_{ i = 1}^{m_{t}} \frac{\ell \left({y_{i,t}}\cdot f^{\mathcal{ H }}_{t}({\mathbf{x}_{i,t}} ) \right)}{m_{t}} \right] \notag \\
	&  +\sum_{t=1}^{T} p(t) \left[\sum_{ i = 1}^{m_{t}} \frac{\ell \left({y_{i,t}}\cdot f^{\mathcal{ H }}_{t}({\mathbf{x}_{i,t}} ) \right)}{m_{t}} \right. \notag\\
	& - \mathbb { E }_{(\mathcal{X},\mathcal{Y})\sim P_{t} }[ \ell \left(\mathcal{Y}\cdot f^\mathcal{H}_{t}( \mathcal{X} ) \right) ] \Bigg ]  \label{eq. lemma3 proof auxillary 3}
	\end{align}
	Moreover, since $\ell \left({y_{i,t}}\cdot f^{\mathcal{ H }}_{t}({\mathbf{x}_{i,t}} )\right) =\max\{0, 1-{y_{i,t}}f^{\mathcal{ H }}_{t}( {\mathbf{x}_{i,t}} )\}  \leq 1+\|f^{\mathcal{ H }}_{t}\|_{\infty} \leq 1+\kappa\|f^{\mathcal{ H }}_{t}\|_{K}$ and $\|f^{\mathcal{ H }}_{t}\|_{K}$ is finite, for each data sample $\ell \left({y_{i,t}}\cdot f^{\mathcal{ H }}_{t}({\mathbf{x}_{i,t}} )\right) $ is uniformly bounded by a constant $R$ where $R=1+\kappa\|f^{\mathcal{ H }}_{t}\|_{K}$.
	Therefore, by \eqref{eq. lemma proof auxiliarry 2} and the uniform boundedness of $\ell \left({y_{i,t}}\cdot f^{\mathcal{ H }}_{t}({\mathbf{x}_{i,t}} )\right)$, there holds
	\begin{equation}{\label{eq. lemma3 proof auxillary 4}}
	\begin{aligned}
	&\sum_{t=1}^{T} \left[ \left(\frac{m_{t}}{N}-p(t)\right)\sum_{ i = 1}^{m_{t}} \frac{\ell \left({y_{i,t}}\cdot f^{\mathcal{ H }}_{t}({\mathbf{x}_{i,t}} ) \right)}{m_{t}} \right]\\
	&\leq \sum_{t=1}^{T} \left[ \left(\frac{m_{t}}{N}-p(t)\right)\sum_{ i = 1}^{m_{t}} \frac{R }{m_{t}} \right]
	= \mathcal{O} \left(N^{-\frac{1}{2}+\epsilon}\right)\quad \text{a.s.}.
	\end{aligned}
	\end{equation}
	Besides, due to i.i.d. property of sampled data $(\mathbf{x}_{i,t},y_{i,t})$ for each task $t$, the random variable $\ell \left({y_{i,t}}\cdot f^{\mathcal{ H }}_{t}({\mathbf{x}_{i,t}} )\right) $ is also independent and identically distributed.
	Together with the boundedness of $\ell \left({y_{i,t}}\cdot f^{\mathcal{ H }}_{t}({\mathbf{x}_{i,t}} )\right) $, we can apply \textit{Lemma \ref{lemma 7}} to $\ell \left({y_{i,t}}\cdot f^{\mathcal{ H }}_{t}({\mathbf{x}_{i,t}} )\right) $ and obtain $\sum_{ i = 1}^{m_{t}} \frac{\ell \left({y_{i,t}}\cdot f^{\mathcal{ H }}_{t}({\mathbf{x}_{i,t}} ) \right)}{m_{t}}
	- \mathbb { E }_{(\mathcal{X},\mathcal{Y})\sim P_{t} }[ \ell \left(\mathcal{Y}\cdot f^\mathcal{H}_{t}( \mathcal{X} ) \right) ] = \mathcal{O} \left(N^{-\frac{1}{2}+\epsilon}\right)$ a.s..
	By plugging this equation and \eqref{eq. lemma3 proof auxillary 4} into the last two lines of \eqref{eq. lemma3 proof auxillary 3}, we obtain
	\begin{equation}{\label{eq. lemma3 proof eq2}}
	\mathcal { E } _ { z } \left( f^{\mathcal { H }}_{1}, \cdots, f^{\mathcal { H }}_{T} \right) - \mathcal { E } \left( f^{\mathcal { H }}_{1}, \cdots, f^{\mathcal { H }}_{T} \right) = \mathcal{O} \left(N^{-\frac{1}{2}+\epsilon}\right)\quad \text{a.s.}.
	\end{equation}
	
	Finally combining \eqref{eq. lemma3 proof 1} and \eqref{eq. lemma3 proof eq2}, we have $\mathcal { S } (N,T)= \mathcal{O} \left(N^{-\frac{1}{4}+\epsilon}\right)$ a.s., 
	which proves this lemma.
\end{proof}

\begin{proof}[{\bf The proof of Lemma 4}]
	First, we note that function $\ell(x)=\max\{0,1-x\}$ is a Lipschitz function, satisfying $|\ell(x)-\ell(y)|\leq |x-y|$ where $x$ and $y$ are arbitrary real numbers.
	Therefore, for any $h_{1},~h_{2},\dots,h_{T} \in \mathcal{ H }_{K}$ there is
	\begin{eqnarray}
	\begin{aligned}
	&\mathcal { E } \left(h_1, \cdots, h_{T} \right) - \mathcal { E } \left( f^{*}_{1}, \cdots, f^{*}_{T} \right)\notag \\
	&\leq \sum_{i=1}^{T}p(t) \mathbb{ E } _{(\mathcal{X},\mathcal{Y})\sim P_{t} } \left|  h_{t}( \mathcal{X} ) ) -f^*_{t}( \mathcal{X} ) ) \right|  \notag \\
	&\leq \sum_{i=1}^{T}p(t) \left(\mathbb{ E } _{(\mathcal{X},\mathcal{Y})\sim P_{t} } \left|  h_{t}( \mathcal{X} ) ) -f^*_{t}( \mathcal{X} ) ) \right|^{2} \right)^{1/2}  \notag
	\end{aligned}
	\end{eqnarray}
	where the first inequality follows from Lipschitz continuity and the second one follows from Holder inequality.
	For simplicity, we denote $\|h_{t}-f_{t}\|_{\mathcal{L}^{2}_{P_{t}}}:= \left(\mathbb{ E } _{(\mathcal{X},\mathcal{Y})\sim P_{t} } \left|  h_{t}( \mathcal{X} ) ) -f^*_{t}( \mathcal{X} ) ) \right|^{2} \right)^{1/2}$.
	Through plugging above the last inequality to $D(N,T,\rho_{ 1},\rho_{ 2})$, we have
	\begin{eqnarray}
	\begin{aligned}
	&D(N,T,\rho_{ 1},\rho_{ 2}) \notag \\
	\leq&
	\inf_{ h _ { 1}, \cdots ,h _ { T }\in\mathcal { H } _{K} }
	\left\{
	\sum_{ t = 1}^{T} p(t) \|h_{t}-f^{*}_{t}\|_{\mathcal{L}^{2}_{P_{t}}} \notag +\frac{\rho _ { 1}}{N}\sum _ { t = 1} ^ { T }p(t)\| f^{*}_ { t } \|_{K} ^ { 2} \notag \right. \\
	& \left. +\frac{\rho _ { 2}}{N}\sum _ { t = 1} ^ { T } p(t)\left\| h_ { t } -\sum _ { s = 1} ^ { T }p(s) h _ { s } \right\|_{K}^ { 2}
	\right\} \notag   	 \\
	\leq& \inf_{R>0} \left\{
	\inf_{ \tiny\begin{array}{c}
		h _ { 1}, \cdots ,h _ { T }\in\mathcal { H } _{K}\\
		\|h_{t}\|_{K}\leq R
		\end{array} }
	\left\{
	\sum_{ t = 1}^{T} p(t) \|h_{t}-f^{*}_{t}\|_{\mathcal{L}^{2}_{P_{t}}, } 	 	 \right\} \notag \right. \\
	&+\frac{\rho _ { 1} R^{2}}{N}+ \frac{2\rho _ { 2}R^{2}}{N}
	\Bigg\} \notag \\
	\leq& \inf_{R>0} \left\{
	C_{0}C_{s}\left(\log R \right)^{-s/4}
	+\frac{\rho _ { 1}  R^{2}}{N}+ \frac{2\rho _ { 2}R^{2}}{N}
	\right\} \notag
	\end{aligned}
	\end{eqnarray}
	where the last inequality follows immediately from \cite{smale2003estimating,zhou2013density}.
	In above, $C_{0}$ and $s$ are two positive constants, while $C_{s}$ is another constant depends on $s$.
	By choosing $R=N^{1/2-\epsilon}$ where $\epsilon>0$, we have $D(N,T,\rho_{ 1},\rho_{ 2}) \leq C_{0}C_{s}\left( (1/2-\epsilon)\log N \right)^{-s/4}
	+{\rho _ { 1}  N^{-\epsilon}}+{2\rho _ { 2}N^{-\epsilon}} = \mathcal{O} \left(\log N\right)^{-s/4},$
	which proves the result.
\end{proof}

\begin{proof}[{\bf The proof of Lemma 5}]
	By the definition and some easy calculation, we get
	\begin{eqnarray}
	\begin{aligned}
	&\mathcal{F}(N,T,\rho_1, \rho_2)\\
	\leq& -\frac{\rho_{ 1}}{N}\sum_{ i = 1}^{T} \left(p(t)-\frac{m_{t}}{N}\right)\| f^{\mathcal{H}}_ { t } \|_{K} ^ { 2}  \notag\\
	&+\frac{\rho _ {2}}{N}
	\sum_{t=1}^{T}\left\|p(t)-\frac{m_{t}}{N}\right\|\left(\|f^{\mathcal{H}}_ { t } \|_{K}+\sum _ { s = 1} ^ { T } \left\| f^{\mathcal{H}}_{s}\right\|_{K}\right)^2\notag\\
	&+\frac{2\rho _ {2}}{N} \left(\sum_{t=1}^{T} \left\|p(t)-\frac{m_{t}}{N}\right\| \cdot \left\|f_{t}^\mathcal{H}\right\|_{K} \right) \notag\\
	&\times	\left( \sum _ { t = 1} ^ { T } \left\| f^{\mathcal{H}}_ { t } \right\|_{K}+ \sum _ { s = 1} ^ { T } \left\| f^{\mathcal{H}}_{s} \right\|_{K} \right)\notag
	\end{aligned}
	\end{eqnarray}
	where the terms $\frac{\rho_1}{N}$, $\frac{\rho_2}{N}$, and $p(t)-\frac{m_{t}}{N}$ determine the convergent rate of the frequency error.
	Since, for each task $t$, the random variable $m_{t}$ is a counting process with bounded and independent increments, therefore, by \textit{Lemma \ref{lemma 7}}, there holds $	p(t)-\frac{m_{t}}{N} =  \mathbb{E} \left(\frac{m_{t}}{N}\right) - \frac{m_{t}}{N} = \mathcal{O} \left(N^{-\frac{1}{2}+\epsilon}\right) \quad \text{a.s.}.$
	Therefore, $\mathcal{F}(T,N,\rho_1, \rho_2)\leq  \mathcal{O} \left(N^{-\frac{3}{2}+\epsilon}\right) ~ \text{a.s.}$, which shows the result.
\end{proof}

\bibliographystyle{ieeetr}
\bibliography{reference}

\vspace{-10 mm}
\begin{IEEEbiography}[{\includegraphics[width=1in,height=1.25in,clip,keepaspectratio]{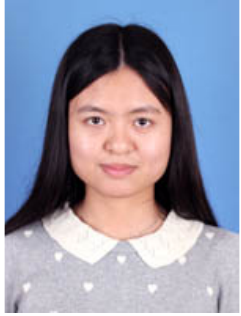}}]
	{Shaohan Chen} received the B.Sc. degrees in Mathematics and Applied Mathematics from Jimei University, China, in 2014.
	She is currently working towards the Ph.D. degree in operational research and cybernetics at Zhejiang University.
	
	Her research interests are in the areas of machine learning models design and interpretations.
\end{IEEEbiography}

\vspace{-10 mm}
\begin{IEEEbiography}[{\includegraphics[width=1in,height=1.25in,clip,keepaspectratio]{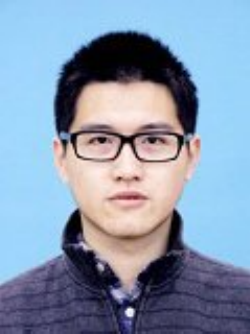}}]{Zhou Fang}
	received the B.Sc. and Ph.D. degrees in mathematics from Zhejiang university, China, in 2014 and 2019, respectively. 
	Since November 2019, he has been a postdoc with the department of biosystems science and engineering, at ETH-Z\"urich, Switzerland.
	
	His research interests are in the areas of learning theory, control theory, and systems/synthetic biology.  
\end{IEEEbiography}

\vspace{-10 mm}
\begin{IEEEbiography}[{\includegraphics[width=1in,height=1.25in,clip,keepaspectratio]{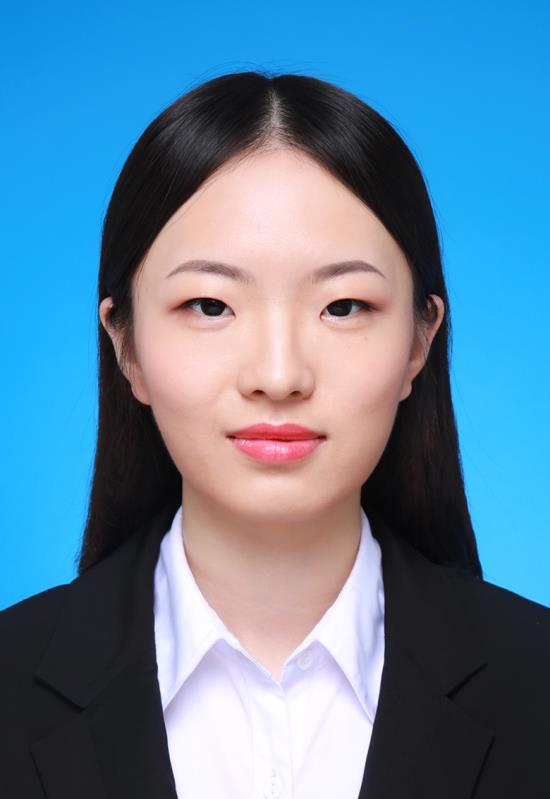}}]
	{Sijie Lu} received the B.Sc. degrees in Mathematics from China agricultural university, China, in 2019.
	She is currently working towards the MS.C. degree in operational research and cybernetics at Zhejiang University.
	Her research interests are in the areas of machine learning and its industrial applications.
\end{IEEEbiography}

\vspace{-10 mm}
\begin{IEEEbiography}[{\includegraphics[width=1in,height=1.25in,clip,keepaspectratio]{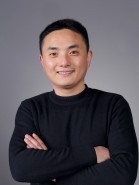}}]
{Chuanhou Gao} received the B.Sc.
degrees in chemical engineering from Zhejiang University
of Technology, China, in 1998, and the Ph.D.
degrees in operational research and cybernetics from
Zhejiang University, China, in 2004.
From June 2004 until May 2006, he was a Postdoctor
in the Department of Control Science and Engineering
at Zhejiang University. Since June 2006,
he has been with the Department of Mathematics at
Zhejiang University, where he is currently a full
Professor. 
His research interests are in the areas of
data-driven modeling, control and optimization, problems-driven applied mathematics,
machine learning and transparency of black-box modeling techniques.

Dr. Gao was a Guest Editor of IEEE TRANSACTIONS ON INDUSTRIAL
INFORMATICS, ISIJ International and Journal of Applied Mathematics and an editor of Metallurgical Industry Automation from May 2013.
Currently, he is an associate editor of IEEE Transactions on Automatic Control.
\end{IEEEbiography}

\end{document}


\title{Supplementary material from ``Asymptotic performance of regularized multi-task learning based on SVM models"}



\author{Shaohan~Chen,~Zhou~Fang,~Sijie~Lu,~and~Chuanhou~Gao,~\IEEEmembership{Senior Member,~IEEE}
\thanks{
This work was supported by
the National Natural Science Foundation of China under Grant No.
11671418 and the Zhejiang Provincial Natural Science Foundation of China under Grant No. LZ20A010002.}
\thanks{S. Chen, S. Lu and C. Gao are with the School of Mathematical Sciences, Zhejiang
University, Hangzhou 310027, China (Corresponding e-mail: gaochou@zju.edu.cn (C. Gao)).}
\thanks{Z. Fang is with the Department of Biosystems Science and Engineering, ETH Zurich, Switzerland.}}



\maketitle

\IEEEdisplaynontitleabstractindextext

\IEEEpeerreviewmaketitle

\section*{Appendix}

\setcounter{equation}{0}
\renewcommand{\theequation}{\thesection.\arabic{equation}}
\renewcommand\thesection{\Alph{subsection}}
\renewcommand\thelemma{\thesection.\arabic{lemma}}

In this appendix, we are going to provide a brief introduction to the Reproducing Kernel Hilbert Space, and give the proofs of \textit{Lemma 1--5} and of \textit{Theorem 1} in our article.
The main scheme and most of techniques are quite standard in the learning theory literature.

\subsection{Reproducing Kernel Hilbert Space}
For complete, we introduce the definition of the Reproducing Kernel Hilbert Space (RKHS) $\mathcal{ H }_K$  following the notations and terminologies in literature \cite{wu2006analysis}.
(RKHS) $\mathcal{ H }_K$ associated with the kernel $K$ to be the completion of the linear span of the set of functions $\left\{ K _ { x } : = K ( x ,\cdot ) : x \in X \right\}$ with the inner product $\langle \cdot ,\cdot \rangle _ { \mathcal { H } _ { K } } = \langle \cdot ,\cdot \rangle _ { K }$ satisfying
\begin{equation}
\begin{aligned}
\left\| \sum _ { i = 1} ^ { m } c _ { i } K _ { x _ { i } } \right\| _ { K } ^ { 2} &= \left\langle \sum _ { i = 1} ^ { m } c _ { i } K _ { x _ { i } } ,\sum _ { i = 1} ^ { m } c _ { i } K _ { x _ { i } } \right\rangle _ { K } \\
&= \sum _ { i ,j = 1} ^ { m } c _ { i } K \left( x _ { i } ,x _ { j } \right) c _ { j }.
\end{aligned}
\end{equation}
The reproducing property of RKHS $\mathcal{ H }_K$ is defined as
\begin{equation}\label{RKHS}
< K _ { x } ,g > _ { K } = g ( x ) ,\quad \forall x \in X ,g \in \mathcal { H } _ { K }.
\end{equation}
Denote $C(X)$ to be the space of continuous functions on $X$ with the norm $\| \cdot \| _ { \infty }$. Then Eq. \eqref{RKHS} leads to $\| g \| _ { \infty } \leq \kappa \| g \| _ { K } ,\quad \forall g \in \mathcal { H } _ { K }$, where $\kappa = \sup _ { x \in X } \sqrt { K ( x ,x ) }$. This means $\mathcal{ H }_K$ can be embedded into $C(X)$.

\subsection{Proofs}

\begin{proof}[\bf The proof of Lemma 1]
	Notice that the Lagrangian function for Eq. \eqref{svm1} is given by the following formula
	For convenience, we denote the objective function of Eq. \eqref{svm1} as $J(f_0, g_t, \xi_{it})$.
	\begin{eqnarray}
	\begin{aligned}
	L(f_0, g_t,\alpha_{it}, \gamma_{it})\notag 
	=&J(f_0, g_t, \xi_{it})
	- \sum _ { t = 1} ^ { T } \sum _ { i = 1} ^ { m_t } \gamma _ { i i } \xi _ { i t } \notag\\
	 &- \sum _ { t = 1} ^ { T } \sum _ { i = 1} ^ { m_t } \alpha _ { i t } \left( y _ { i t } \left( f_ { 0} + g _ { t } \right)(\mathbf{x} _ { i t } - 1+ \xi _ { i t } \right) \notag \\
	&
	\end{aligned}
	\end{eqnarray}
	where $\alpha_{it}, \gamma_{it}$ are nonnegative Lagrangian multipliers. Setting the derivative of $L$ with respect to $f_0$ to zero and by considering $f_0,g_t\in \mathcal{ H }_K$ gives the equation
	$\bar{f}_ { 0} ^ { *} = \frac { 1} { 2\lambda _ { 2} } \sum _ { t = 1} ^ { T } \sum _ { i = 1} ^ { m_t } \alpha _ { i t } y _ { i t } K_{x _ { i t}}$.
	Similar to the $g_t$, for every $t\in\{1, \cdots, T\}$, we have
	$g_ { t} = \frac { N } { 2\lambda _ { 1}m_t } \sum _ { i = 1} ^ { m_t } \alpha _ { i t } y _ { i t } K_{x _ { i t}}$.
	By combining these two equations and together with the equation $\bar{f}^{*}_t=\bar{f}^*_0+\bar{g}^*_t$ we can easily obtain the result.
\end{proof}

\begin{proof}[{\bf The proof of Theorem 1}]
	Using the result of Lemma \ref{lemma1} and $\bar{f}^*_t=\bar{f}^*_0+\bar{g}^*_t$, we have the following equation
	\begin{eqnarray}
	\begin{aligned}
	&\lambda _ { 1} \sum _ { t = 1} ^ { T }\frac{m_t}{N} \| \bar{g}^*_ { t } \|_K ^ { 2} + \lambda _ { 2}\| \bar{f}^*_ {0} \|_K ^ { 2}\\
	&= \lambda _ { 1} \sum _ { t = 1} ^ { T }\frac{m_t}{N} \| \bar{f}^{*}_ { t } \|_K ^ { 2}- \frac { \lambda _ { 1} ^ { 2} } { \lambda _ { 1} + \lambda _ { 2} } \| \sum _ { t = 1} ^ { T } \frac{m_t}{N} \bar{f}^* _ { t } \|_K^{2}. \notag
	\end{aligned}
	\end{eqnarray}
	On the other hand we have
	\begin{eqnarray}
	\begin{aligned}
	&\rho _ { 1} \sum _ { t = 1} ^ { T } \frac{m_t}{N} \| \bar{f}^{*} _ { t } \|_K ^ { 2}+\rho _ { 1} \sum _ { t= 1} ^ { T } \frac{m_t}{N}\| \bar{f}^{*}_t - \sum _ { s = 1} ^ { T } \frac{m_s}{N} \bar{f}^{*} _ { s} \|_K^{2}\\
	&= (\rho _ { 1}+\rho_{ 2}) \sum _ { t = 1} ^ { T } \frac{m_t}{N} \| \bar{f}^* _ { t } \|_K ^ { 2}-\rho _ { 2}\| \sum _ { s = 1} ^ { T } \frac{m_s}{N} \bar{f}^{*} _ { s} \|_K^{2}. \notag
	\end{aligned}
	\end{eqnarray}
	Comparing the above two equations we can obtain
	$\rho _ { 1} = \frac { \lambda _ { 1} \lambda _ { 2} } { \lambda _ { 1} + \lambda _ { 2} }$ and $\rho _ { 2} = \frac { \lambda _ { 1} ^ { 2} } { \lambda _ { 1} + \lambda _ { 2} }$.
	Thus the results hold.
\end{proof}

\begin{proof}[{\bf The proof of Lemma 2}]
	Notice that
	\begin{align}
	&\mathcal{E}(f ^z_ { 1}  ,\cdots ,f^z _ { T })-\mathcal{E}(f^*_ { 1}  ,\cdots ,f^*_ { T })\\
	=&\left\{\mathcal { E } \left(f ^z_ { 1}, \cdots ,f^z _ { T } \right) - \mathcal { E } _ { z } \left( f ^z_ { 1} ,\cdots ,f^z _ { T }\right)\right\}	\label{eq. sample error 1}\\
	&+\left\{\mathcal{E}_z(f^z_ { 1},\cdots ,f^z_ { T })+\frac{\rho_1}{N}  \sum _ { t = 1} ^ { T }\frac{m_t}{N} \| f^z_ { t } \|_{K} ^ { 2} \right. \notag \\
	&\left. + \frac{\rho_2}{N} \sum _ { t = 1} ^ { T } \left\| f^z_ { t } -\sum _ { s = 1} ^ { T }\frac{m_s}{N} f^z _ { s } \right\|_{K}^ { 2}\right\}\label{eq. f1}\\
	&-\left\{\mathcal{E}_z(f^{\mathcal{H}}_ { 1},\cdots ,f^{\mathcal{H}}_ { T })+\frac{\rho _ { 1}}{N} \sum _ { t = 1} ^ { T }p(t)\| f^{\mathcal{H}}_ { t } \|_{K} ^ { 2} \right. \notag\\
	&\left. + \frac{\rho _ { 2}}{N}\sum _ { t = 1} ^ { T } \left\| f^{\mathcal{H}}_ { t } -\sum _ { s = 1} ^ { T }p(s) f^\mathcal{ H } _ { s } \right\|_{K}^ { 2}\right\}  \label{eq. f2}\\
	&+\left\{\mathcal { E }_z\left(f^{\mathcal{H}}_ { 1}, \cdots ,f^{\mathcal{H}} _ { T } \right) - \mathcal { E } \left( f^{\mathcal{H}}_ { 1} ,\cdots ,f^{\mathcal{H}} _ { T }\right)\right\}\label{eq. sample error 2}\\
	&+\Bigg\{\mathcal{E}(f^{\mathcal{H}}_ { 1}  ,\cdots ,f^{\mathcal{H}} _ { T })-\mathcal{E}(f^*_ { 1}  ,\cdots ,f^*_ { T })  \notag\\
	&\left. +\frac{\rho _ { 1}}{N} \sum _ { t = 1} ^ { T }p(t)\| f^{\mathcal{H}}_ { t } \|_{K} ^ { 2} + \frac{\rho _ { 2}}{N}\sum _ { t = 1} ^ { T } \left\| f^{\mathcal{H}}_ { t } -\sum _ { s = 1} ^ { T }p(s) f^\mathcal{ H } _ { s } \right\|_{K}^ { 2}\right\}\label{eq. regularization error}\\
	&-\frac{\rho _ { 1}}{N} \sum _ { t = 1} ^ { T }\frac{m_t}{N} \| f^z_ { t } \|_{K} ^ { 2} -\frac{\rho _ { 2}}{N}\sum _ { t = 1} ^ { T } \left\| f^z_ { t } -\sum _ { s = 1} ^ { T }\frac{m_s}{N} f^z _ { s } \right\|_{K}^ { 2} \notag
	\end{align}	
	Moreover, there is.
	\begin{eqnarray}
	\begin{aligned}
	&\eqref{eq. f2}\\
	\leq& -\left\{\mathcal{E}_z(f^{\mathcal{H}}_ { 1},\cdots ,f^{\mathcal{H}}_ { T })+\frac{\rho _ { 1}}{N} \sum _ { t = 1} ^ { T }\frac{m_{t}}{N}\| f^{\mathcal{H}}_ { t } \|_{K} ^ { 2} \right. \\
	&\left. + \frac{\rho _ {2}}{N}\sum _ { t = 1} ^ { T } \left\| f^{\mathcal{H}}_ { t } -\sum _ { s = 1} ^ { T }\frac{m_{t}}{N} f^\mathcal{ H } _ { s } \right\|_{K}^ { 2}\right\} \notag\\
	&-\frac{\rho_{ 1}}{N}\sum_{ t = 1}^{T} \left(p(t)-\frac{m_{t}}{N}\right)  \| f^{\mathcal{H}}_ { t }\|_{K} ^ { 2}\notag\\
	&+\frac{2\rho _ {2}}{N}\left\| \sum _ { t = 1} ^ { T } \left( p(t) - \frac{m_t}{N}\right) f^{\mathcal{H}}_{t} \right\|_{K} \sum _ { t = 1} ^ { T } \left\| f^{\mathcal{H}}_ { t } -\sum _ { s = 1} ^ { T }\frac{m_s}{N} f^{\mathcal{H}}_{s} \right\|_{K}\notag
	\end{aligned}
	\end{eqnarray}
	Combining above inequality with \eqref{eq. f1}, we obtained
	\begin{eqnarray}
	\begin{aligned}
	&\eqref{eq. f1}+\eqref{eq. f2} \\
	\leq&  -\frac{\rho_{ 1}}{N}\sum_{ t = 1}^{T} \left(p(t)-\frac{m_{t}}{N}\right)\| f^{\mathcal{H}}_ { t } \|_{K} ^ { 2}  \notag\\
	& +\frac{2\rho _ {2}}{N}\left\| \sum _ { t = 1} ^ { T } \left( p(t) - \frac{m_t}{N}\right) f^{\mathcal{H}}_{t} \right\|_{K} \sum _ { t = 1} ^ { T } \left\| f^{\mathcal{H}}_ { t } -\sum _ { s = 1} ^ { T }\frac{m_s}{N} f^{\mathcal{H}}_{s} \right\|_{K}\notag
	\end{aligned}
	\end{eqnarray}
	Denote the right hand side of above inequality as $\mathcal{F}(N,T,\rho_{ 1},\rho_{ 2})$, $\mathcal{S}(N,T)=\eqref{eq. sample error 1}+\eqref{eq. sample error 2}$, and $\mathcal{ D }(N,T,\rho_{ 1},\rho_{ 2})=\eqref{eq. regularization error}$, we obtain the result.
\end{proof}

Before showing the proofs of \textit{Lemma 3}, we first induce some auxiliary lemmas which will be very helpful in the later proof.
The first auxiliary lemma  shows the upper bounds for $f^{z}_{t}$ and $\ell \left({y_{i,t}}\cdot f_{t}({\mathbf{x}_{i,t}} ) \right)$'s average.

\begin{lemma}\label{lemma 6}
	For functions $\ell(\cdot)$ and $f_{t}^{z}$, we have
	\begin{itemize}
		\item $\sum_{ i = 1}^{m_{t}} \frac{\ell \left({y_{i,t}}\cdot f^{z}_{t}({\mathbf{x}_{i,t}} ) \right)}{m_{t}} \leq \frac{N}{m_{t}}$
		\item $\|f_{t}^{z}\|_{K}\leq \frac{N}{\sqrt{\rho_{ 1}\cdot m_{t}}}$
	\end{itemize}
\end{lemma}
\begin{proof}
	Let's define an auxiliary function $\tilde{f}(\cdot)\equiv0$ which obviously lies in $\mathcal{ H }_{K}$
	Note that $\{f^{z}_{t}\}$ minimize the objective function \ref{svm2}.
	Therefore, we have the following relation
	\begin{eqnarray}
	\begin{aligned}
	&\sum _ { t = 1} ^ { T } \sum _ { i = 1} ^ { m_{t} }\ell ({y_{i,t}}\cdot f^{z}_{t}({\mathbf{x}_{i,t}} ) )+ \rho _ { 1} \sum _ { t = 1} ^ { T }\frac{m_t}{N} \| f^{z}_ { t } \| _{K}^ { 2} \\
	&+ \rho _ { 2}\sum _ { t = 1} ^ { T } \left\| f ^{z}_ { t } -\sum _ { s = 1} ^ { T }\frac{m_s}{N} f^{z} _ { s } \right\|_{K}^ { 2} \notag\\
	\leq&
	\sum _ { t = 1} ^ { T } \sum _ { i = 1} ^ { m_{t} }\ell\left({y_{i,t}}\cdot\tilde{f}({\mathbf{x}_{i,t}} ) \right) + \rho _ { 1} \sum _ { t = 1} ^ { T }\frac{m_t}{N} \| \tilde{f} \| _{K}^ { 2} \\
	&+ \rho _ { 2}\sum _ { t = 1} ^ { T } \left\| \tilde{f} -\sum _ { s = 1} ^ { T }\frac{m_s}{N} \tilde{f} \right\|_{K}^ { 2}. \notag
	\end{aligned}
	\end{eqnarray}
	Since the left hand side of above inequality is greater than or equals to $\sum_{ i = 1}^{m_{t}} {\ell \left({y_{i,t}}\cdot f^{z}_{t}({\mathbf{x}_{i,t}} ) \right)}$, where $t=1,...T$, and the right hand side of the inequality equals to $N$, we have the relation
	\begin{equation*}
	\sum_{ i = 1}^{m_{t}} {\ell \left({y_{i,t}}\cdot f^{z}_{t}({\mathbf{x}_{i,t}} ) \right)} \leq N,
	\quad \text{and} \quad		
	\sum_{ i = 1}^{m_{t}} \frac{\ell \left({y_{i,t}}\cdot f^{z}_{t}({\mathbf{x}_{i,t}} ) \right)}{m_{t}} \leq \frac{N}{m_{t}}.
	\end{equation*}
	Similarly, since $\rho_{ 1}\frac{m_{t}}{N}\|f_{t}^{z}\|_{K}^{2}$ is also less than or equals to the left hand side of the first inequality in this lemma, therefore, we have
	\begin{equation*}
	\rho_{ 1}\frac{m_{t}}{N}\|f_{t}^{z}\|_{K}^{2} \leq N,
	\quad \text{and} \quad		
	\|f_{t}^{z}\|_{K}\leq \frac{N}{\sqrt{\rho_{ 1}\cdot m_{t}}}.
	\end{equation*}
	Thus the result is shown.
\end{proof}

The second auxiliary lemma demonstrates the error of sampling method to be approximately of order $\frac{1}{\sqrt{N}}$.
This is so well known a result that perhaps every reader knows it.
However, since we failed to find a very good reference for it, a concise proof is given to show its correctness.

\begin{lemma}\label{lemma 7}
	For a sequence of bounded, independent, and identically distributed random variables $\{X_{i}\}$, there is
	\begin{equation*}
	\frac{\sum_{i=1}^{N} X_{i}}{N} - \mathbb{E}X_1= \mathcal{O} \left(N^{-\frac{1}{2}+\epsilon}\right)\quad \text{a.s.}
	\end{equation*}
	where $\epsilon$ is any positive constant.
\end{lemma}

\begin{proof}
	Let's denote $Z_{N} :=\frac{\sum_{i=1}^{N} X_{i}}{N} - \mathbb{E}X_1$.
	By Hoeffding's inequality,  for any positive constant $M$ there are relations
	\begin{equation*}
	P\left(|Z_{N}|\geq M  \right) \leq 2 e^{-2M^{2}\cdot N }
	\end{equation*}
	and
	\begin{equation*}
	P\left(|Z_{N}|\geq M \cdot N^{-1/2+\epsilon}  \right) \leq 2 e^{-2 M^{2}\cdot N^{\epsilon}}.
	\end{equation*}
	Since $e^{-2 M^{2}\cdot N^{\epsilon}}=o(\frac{1}{N^2})$, the summation $\sum_{N=1}^{\infty} P\left(|Z_{N}|\geq M \cdot N^{-1/2+\epsilon}  \right)$ convergences, and, therefore by Borelantelli lemma, we have
	\begin{equation*}
	\limsup_{N\to\infty} |Z_{N}| \leq M \cdot N^{-1/2+\epsilon} ~~~\text{a.s.}
	\end{equation*}
	which proves the lemma.
\end{proof}

The third auxiliary lemma shows the sampling error of classifier $f^{z}_{t}$.

\begin{lemma}\label{lemma 8}
	$$\mathbb { E }_{(\mathcal{X},\mathcal{Y})\sim P_{t} }[ \ell \left(\mathcal{Y}\cdot f^z_{t}( \mathcal{X} ) \right) ] - \sum_{ i = 1}^{m_{t}} \frac{\ell \left({y_{i,t}}\cdot f^{z}_{t}({\mathbf{x}_{i,t}} ) \right)}{m_{t}}  \leq \mathcal{O} \left(N^{-\frac{1}{4}+\epsilon}\right). $$
\end{lemma}

\begin{proof}
	We first denote some notations as follow:
	\begin{eqnarray}
	\tilde{\mathcal{ E}}_{t}(f) &:=& \mathbb { E }_{(\mathcal{X},\mathcal{Y})\sim P_{t} }[ \ell (\mathcal{Y}\cdot f^z_{t}( \mathcal{X} ) ) ] \notag \\
	\tilde{\mathcal{ E }}_{t}^{z}(f) &:=& \sum_{ i = 1}^{m_{t}} \frac{\ell \left({y_{i,t}}\cdot f^{z}_{t}({\mathbf{x}_{i,t}} ) \right)}{m_{t}} \notag \\
	\mathcal{B}_{R} &:=& \left\{ f\in \mathcal{ H }_{K} ~|~ \|f\|_{K} \leq R  \right\} \notag
	\end{eqnarray}
	Let $R^{*} =\frac{N}{\sqrt{\rho_{1}m_{t}}}$, then by \textit{Lemma \ref{lemma 6}} $f^{z}_{t}\in B_{R^*}$. Together with the fact that $\tilde{\mathcal{ E }}_{t}^{z}(f_{z})\leq \frac{N}{m_{t}}$ (c.f \textit{Theorem \ref{lemma 6}}), for any $\alpha>0$ we have
	\begin{eqnarray}
	&&P\left\{
	\tilde{\mathcal{ E}}_{t}(f^{z}_{t}) - 	\tilde{\mathcal{ E }}_{t}^{z}(f^{z}_{t}) > 4\alpha\left(1+\frac{N}{m_{t}}\right) ~\Big|~
	m_{t}		
	\right\} \notag \\
	&\leq&
	P\left\{ \sup_{f\in\mathcal{B}_{R^{*}}}\frac{\tilde{\mathcal{ E}}_{t}(f) - 	\tilde{\mathcal{ E }}_{t}^{z}(f)}{1+	\tilde{\mathcal{ E }}_{t}^{z}(f)} > 4\alpha~\Big|~
	m_{t}		
	\right\}\notag\\
	&\leq& \mathcal{N}\left(\frac{\alpha}{R^{*}}\right) \exp\left\{ -\frac{m_{t}\cdot\alpha^{2}}{32(1+\kappa R^{*})}\right\} \label{inequality 1}
	\end{eqnarray}	
	where the last inequality follows from \text{lemma 5} in \cite{wu2006analysis}, and $\mathcal{N}\left(\frac{\alpha}{R^{*}}\right)$ is the covering number defined as the minimal number of balls with radius $\frac{\alpha}{R^{*}}$ to cover the unite ball in RKHS.
	According to \cite{zhou2002covering}, the covering number has the relation
	\begin{equation*}
	\log \mathcal{N} (\epsilon) \leq M\left(\log \frac{1}{\epsilon}\right)^{d+1}
	\end{equation*}
	where $M$ is a constant, and $d$ is the dimension of the data set space.
	By plugging above inequality and $\alpha=\alpha^{*}(N,m_{t})=N^{-\epsilon}\cdot\sqrt{\frac{1+\kappa R^{*}}{m_{t}}}$ into \eqref{inequality 1}, we have
	\begin{eqnarray}
	&&P\left\{
	\tilde{\mathcal{ E}}_{t}(f^{z}_{t}) - 	\tilde{\mathcal{ E }}_{t}^{z}(f^{z}_{t}) > 4\left(1+\frac{N}{m_{t}}\right)\cdot\alpha^{*}(N,m_{t}) ~\Big|~
	m_{t}		
	\right\} \notag \\
	&\leq&
	\exp\left\{  M\left(\log \frac{N}{N^{-\epsilon}\sqrt{\rho_{ 1}+\rho_{ 1}\kappa R^{*}}}\right)^{d+1} -  \frac{1}{32} N^{-2\epsilon} \right\} \notag \\
	&\leq& \exp\left\{  M\left(\log \frac{N^{1-\epsilon}}{\sqrt{\rho_{ 1}}}\right)^{d+1} -  \frac{1}{32} N^{-2\epsilon} \right\} \notag \\
	\end{eqnarray}
	where $\epsilon$ is arbitrary positive constant.
	Furthermore, by above inequality and some easy calculations we can obtain
	\begin{align}
	&P\left\{
	\tilde{\mathcal{ E}}_{t}(f^{z}_{t}) - 	\tilde{\mathcal{ E }}_{t}^{z}(f^{z}_{t}) > 4\left(1+\frac{N}{m_{t}}\right)\cdot\alpha^{*}(N,m_{t}) ~\Big|~
	m_{t}		
	\right\} \notag\\
	&=  \mathbb{ E }_{m_{t}} \left[	P\left\{
	\tilde{\mathcal{ E}}_{t}(f^{z}_{t}) - 	\tilde{\mathcal{ E }}_{t}^{z}(f^{z}_{t}) > 4\alpha\left(1+\frac{N}{m_{t}}\right) ~\Big|~
	m_{t}		
	\right\} \right]\notag\\
	&\leq  \mathbb{ E }_{m_{t}} \left[ \exp\left\{  M\left(\log \frac{N^{1-\epsilon}}{\sqrt{\rho_{ 1}}}\right)^{d+1} -  \frac{1}{32} N^{-2\epsilon} \right\} \right]\notag\\
	&=  \exp\left\{  M\left(\log \frac{N^{1-\epsilon}}{\sqrt{\rho_{ 1}}}\right)^{d+1} -  \frac{1}{32} N^{-2\epsilon} \right\} \notag\\
	&= o \left(N^{-2}\right), \notag
	\end{align}
	and, therefore,
	\begin{equation*}
	\begin{aligned}
	&\sum_{N=1}^{\infty}P\left\{
	\tilde{\mathcal{ E}}_{t}(f^{z}_{t}) - 	\tilde{\mathcal{ E }}_{t}^{z}(f^{z}_{t}) > 4\left(1+\frac{N}{m_{t}}\right) \right.  \left. \cdot\alpha^{*}(N,m_{t}) ~\Big|~
	m_{t}		
	\right\} \\
	&< +\infty.
	\end{aligned}
	\end{equation*}
	
	By Borel-Cantelli Lemma, we have the relation
	\begin{equation*}
	\limsup_{N\to\infty} \tilde{\mathcal{ E}}_{t}(f^{z}_{t}) - 	\tilde{\mathcal{ E }}_{t}^{z}(f^{z}_{t}) - 4\left(1+\frac{N}{m_{t}}\right) \cdot\alpha^{*}(N,m_{t}) \leq 0 \quad \text{a.s.}.
	\end{equation*}
	Besides, according to the definition of $\alpha^{*}(\cdot,\cdot)$ and the strong law of large numbers, we have $\left(1+\frac{N}{m_{t}}\right) \cdot\alpha^{*}(N,m_{t})= \mathcal{O} \left(N^{-1/4+\epsilon}\right)$.
	Thus, $\tilde{\mathcal{ E}}_{t}(f^{z}_{t}) - 	\tilde{\mathcal{ E }}_{t}^{z}(f^{z}_{t}) \leq \mathcal{O} \left(N^{-1/4+\epsilon}\right)$
	which proves the result.
\end{proof}

With these preparation, we can prove \textit{Lemma \ref{lemma3}}.

\begin{proof}[{\bf The proof of Lemma 3}]
	We first consider the first part of sample error,  $\mathcal { E } \left(f ^z_ { 1}, \cdots ,f^z _ { T } \right) - \mathcal { E } _ { z } \left( f ^z_ { 1} ,\cdots ,f^z _ { T }\right)$. By the definition and some easy calculations, we can obtain,
	\begin{align}
	&\mathcal { E } \left(f ^z_ { 1}, \cdots ,f^z _ { T } \right) - \mathcal { E } _ { z } \left( f ^z_ { 1} ,\cdots ,f^z _ { T }\right) \notag \\
	=& \sum_{t=1}^{T} p(t)\mathbb { E }_{(\mathcal{X},\mathcal{Y})\sim P_{t} }[ \ell \left(\mathcal{Y}\cdot f^z_{t}( \mathcal{X} ) \right) ] \notag \\
	&- \sum_{t=1}^{T} \frac{m_{t}}{N}\sum_{ i = 1}^{m_{t}} \frac{\ell \left({y_{i,t}}\cdot f^{z}_{t}({\mathbf{x}_{i,t}} ) \right)}{m_{t}}  \notag \\
	=& \sum_{t=1}^{T} p(t)\mathbb { E }_{(\mathcal{X},\mathcal{Y})\sim P_{t} }[ \ell \left(\mathcal{Y}\cdot f^z_{t}( \mathcal{X} ) \right) ] \notag \\
	&- \sum_{t=1}^{T} p(t)\sum_{ i = 1}^{m_{t}} \frac{\ell \left({y_{i,t}}\cdot f^{z}_{t}({\mathbf{x}_{i,t}} ) \right)}{m_{t}} \notag \\
	&  +\sum_{t=1}^{T} \left(p(t)-\frac{m_{t}}{N}\right)\sum_{ i = 1}^{m_{t}} \frac{\ell \left({y_{i,t}}\cdot f^{z}_{t}({\mathbf{x}_{i,t}} ) \right)}{m_{t}} \notag\\
	=&  \sum_{t=1}^{T} p(t) \Bigg(\mathbb { E }_{(\mathcal{X},\mathcal{Y})\sim P_{t} }[ \ell \left(\mathcal{Y}\cdot f^z_{t}( \mathcal{X} ) \right) ] \left. -\sum_{ i = 1}^{m_{t}} \frac{\ell \left({y_{i,t}}\cdot f^{z}_{t}({\mathbf{x}_{i,t}} ) \right)}{m_{t}} \right) \notag \\
	&+ \sum_{t=1}^{T}\left[ \left(p(t)-\frac{m_{t}}{N}\right) \sum_{ i = 1}^{m_{t}} \frac{\ell \left({y_{i,t}}\cdot f^{z}_{t}({\mathbf{x}_{i,t}} ) \right)}{m_{t}} \right]  \label{eq. lemma proof auxilliary 1} 
	\end{align}
	Since, for each task $t$, the random variable $m_{t}$ is a counting process with bounded and independent increments, therefore, by \textit{Lemma \ref{lemma 7}}, there holds
	\begin{equation}{\label{eq. lemma proof auxiliarry 2}}
	p(t)-\frac{m_{t}}{N} =  \mathbb{E} \left(\frac{m_{t}}{N}\right) - \frac{m_{t}}{N} = \mathcal{O} \left(N^{-\frac{1}{2}+\epsilon}\right) \quad \text{a.s.}.
	\end{equation}
	By plugging above relation and \textit{lemma \ref{lemma 8}} into the last two lines of \eqref{eq. lemma proof auxilliary 1}, this relation leads to
	\begin{align} {\label{eq. lemma3 proof 1}}
	&\mathcal { E } \left(f ^z_ { 1}, \cdots ,f^z _ { T } \right) - \mathcal { E } _ { z } \left( f ^z_ { 1} ,\cdots ,f^z _ { T }\right)
	=  \mathcal{O} \left(N^{-\frac{1}{4}+\epsilon}\right)\quad \text{a.s.}.
	\end{align}
	
	Then we consider the second part of sample error, $\mathcal { E } _ { z } \left( f^{\mathcal { H }}_{1}, \cdots, f^{\mathcal { H }}_{T} \right) - \mathcal { E } \left( f^{\mathcal { H }}_{1}, \cdots, f^{\mathcal { H }}_{T} \right) $.
	Still by the definition and some easy calculations, we can obtain,
	\begin{align}
	& \mathcal { E } _ { z } \left( f^{\mathcal { H }}_{1}, \cdots, f^{\mathcal { H }}_{T} \right) - \mathcal { E } \left( f^{\mathcal { H }}_{1}, \cdots, f^{\mathcal { H }}_{T} \right) \notag \\
	=&\sum_{t=1}^{T} \frac{m_{t}}{N}\sum_{ i = 1}^{m_{t}} \frac{\ell \left({y_{i,t}}\cdot f^{\mathcal{ H }}_{t}({\mathbf{x}_{i,t}} ) \right)}{m_{t}} \notag \\
	&- \sum_{t=1}^{T} p(t)\mathbb { E }_{(\mathcal{X},\mathcal{Y})\sim P_{t} }[ \ell \left(\mathcal{Y}\cdot f^\mathcal{H}_{t}( \mathcal{X} ) \right) ] \notag \\
	=& \sum_{t=1}^{T} \left[ \left(\frac{m_{t}}{N}-p(t)\right)\sum_{ i = 1}^{m_{t}} \frac{\ell \left({y_{i,t}}\cdot f^{\mathcal{ H }}_{t}({\mathbf{x}_{i,t}} ) \right)}{m_{t}} \right] \notag \\
	&  +\sum_{t=1}^{T} p(t) \left[\sum_{ i = 1}^{m_{t}} \frac{\ell \left({y_{i,t}}\cdot f^{\mathcal{ H }}_{t}({\mathbf{x}_{i,t}} ) \right)}{m_{t}} \right. \notag\\
	& - \mathbb { E }_{(\mathcal{X},\mathcal{Y})\sim P_{t} }[ \ell \left(\mathcal{Y}\cdot f^\mathcal{H}_{t}( \mathcal{X} ) \right) ] \Bigg ]  \label{eq. lemma3 proof auxillary 3}
	\end{align}
	Moreover, since $\ell \left({y_{i,t}}\cdot f^{\mathcal{ H }}_{t}({\mathbf{x}_{i,t}} )\right) =\max\{0, 1-{y_{i,t}}f^{\mathcal{ H }}_{t}( {\mathbf{x}_{i,t}} )\}  \leq 1+\|f^{\mathcal{ H }}_{t}\|_{\infty} \leq 1+\kappa\|f^{\mathcal{ H }}_{t}\|_{K}$ and $\|f^{\mathcal{ H }}_{t}\|_{K}$ is finite, for each data sample $\ell \left({y_{i,t}}\cdot f^{\mathcal{ H }}_{t}({\mathbf{x}_{i,t}} )\right) $ is uniformly bounded by a constant $R$ where $R=1+\kappa\|f^{\mathcal{ H }}_{t}\|_{K}$.
	Therefore, by \eqref{eq. lemma proof auxiliarry 2} and the uniform boundedness of $\ell \left({y_{i,t}}\cdot f^{\mathcal{ H }}_{t}({\mathbf{x}_{i,t}} )\right)$, there holds
	\begin{equation}{\label{eq. lemma3 proof auxillary 4}}
	\begin{aligned}
	&\sum_{t=1}^{T} \left[ \left(\frac{m_{t}}{N}-p(t)\right)\sum_{ i = 1}^{m_{t}} \frac{\ell \left({y_{i,t}}\cdot f^{\mathcal{ H }}_{t}({\mathbf{x}_{i,t}} ) \right)}{m_{t}} \right]\\
	&\leq \sum_{t=1}^{T} \left[ \left(\frac{m_{t}}{N}-p(t)\right)\sum_{ i = 1}^{m_{t}} \frac{R }{m_{t}} \right]
	= \mathcal{O} \left(N^{-\frac{1}{2}+\epsilon}\right)\quad \text{a.s.}.
	\end{aligned}
	\end{equation}
	Besides, due to i.i.d. property of sampled data $(\mathbf{x}_{i,t},y_{i,t})$ for each task $t$, the random variable $\ell \left({y_{i,t}}\cdot f^{\mathcal{ H }}_{t}({\mathbf{x}_{i,t}} )\right) $ is also independent and identically distributed.
	Together with the boundedness of $\ell \left({y_{i,t}}\cdot f^{\mathcal{ H }}_{t}({\mathbf{x}_{i,t}} )\right) $, we can apply \textit{Lemma \ref{lemma 7}} to $\ell \left({y_{i,t}}\cdot f^{\mathcal{ H }}_{t}({\mathbf{x}_{i,t}} )\right) $ and obtain
	\begin{equation*}
	\begin{aligned}
	&\sum_{ i = 1}^{m_{t}} \frac{\ell \left({y_{i,t}}\cdot f^{\mathcal{ H }}_{t}({\mathbf{x}_{i,t}} ) \right)}{m_{t}}
	- \mathbb { E }_{(\mathcal{X},\mathcal{Y})\sim P_{t} }[ \ell \left(\mathcal{Y}\cdot f^\mathcal{H}_{t}( \mathcal{X} ) \right) ]\\
	&=\mathcal{O} \left(N^{-\frac{1}{2}+\epsilon}\right)\quad \text{a.s.}.
	\end{aligned}
	\end{equation*}
	By plugging this equation and \eqref{eq. lemma3 proof auxillary 4} into the last two lines of \eqref{eq. lemma3 proof auxillary 3}, we obtain
	\begin{equation}{\label{eq. lemma3 proof eq2}}
	\mathcal { E } _ { z } \left( f^{\mathcal { H }}_{1}, \cdots, f^{\mathcal { H }}_{T} \right) - \mathcal { E } \left( f^{\mathcal { H }}_{1}, \cdots, f^{\mathcal { H }}_{T} \right) = \mathcal{O} \left(N^{-\frac{1}{2}+\epsilon}\right)\quad \text{a.s.}.
	\end{equation}
	
	Finally combining \eqref{eq. lemma3 proof 1} and \eqref{eq. lemma3 proof eq2}, we have
	\begin{eqnarray}
	\begin{aligned}
	\mathcal { S } (N,T)&=\left\{ \mathcal { E } \left(f ^z_ { 1}, \cdots ,f^z _ { T } \right) - \mathcal { E } _ { z } \left( f ^z_ { 1} ,\cdots ,f^z _ { T }\right) \right\} \notag \\
	&\quad+ \left\{ \mathcal { E } _ { z } \left( f^{\mathcal { H }}_{1}, \cdots, f^{\mathcal { H }}_{T} \right) - \mathcal { E } \left( f^{\mathcal { H }}_{1}, \cdots, f^{\mathcal { H }}_{T} \right) \right\} \notag \\
	&= \mathcal{O} \left(N^{-\frac{1}{4}+\epsilon}\right)\quad \text{a.s.}. \notag
	\end{aligned}
	\end{eqnarray}
	which proves this lemma.
\end{proof}

\begin{proof}[{\bf The proof of Lemma 4}]
	First, we note that function $\ell(x)=\max\{0,1-x\}$ is a Lipschitz function, satisfying $|\ell(x)-\ell(y)|\leq |x-y|$ where $x$ and $y$ are arbitrary real numbers.
	Therefore, for any $h_{1},~h_{2},\dots,h_{T} \in \mathcal{ H }_{K}$ there is
	\begin{eqnarray}
	\begin{aligned}
	&\mathcal { E } \left(h_1, \cdots, h_{T} \right) - \mathcal { E } \left( f^{*}_{1}, \cdots, f^{*}_{T} \right)\notag \\
	&=
	\sum_{i=1}^{T}p(t) \mathbb { E }_{(\mathcal{X},\mathcal{Y})\sim P_{t} }[ \ell (\mathcal{Y}\cdot h_{t}( \mathcal{X} ) ) ] \notag \\
	&\quad-\sum_{i=1}^{T}p(t) \mathbb { E }_{(\mathcal{X},\mathcal{Y})\sim P_{t} }[ \ell (\mathcal{Y}\cdot f^{*}_{t}( \mathcal{X} ) ) ] \notag\\
	&\leq \sum_{i=1}^{T}p(t) \mathbb{ E } _{(\mathcal{X},\mathcal{Y})\sim P_{t} } \left|  h_{t}( \mathcal{X} ) ) -f^*_{t}( \mathcal{X} ) ) \right|  \notag \\
	&\leq \sum_{i=1}^{T}p(t) \left(\mathbb{ E } _{(\mathcal{X},\mathcal{Y})\sim P_{t} } \left|  h_{t}( \mathcal{X} ) ) -f^*_{t}( \mathcal{X} ) ) \right|^{2} \right)^{1/2}  \notag
	\end{aligned}
	\end{eqnarray}
	where the first inequality follows from Lipschitz continuity and the second one follows from Holder inequality.
	For simplicity, we denote $\|h_{t}-f_{t}\|_{\mathcal{L}^{2}_{P_{t}}}:= \left(\mathbb{ E } _{(\mathcal{X},\mathcal{Y})\sim P_{t} } \left|  h_{t}( \mathcal{X} ) ) -f^*_{t}( \mathcal{X} ) ) \right|^{2} \right)^{1/2}$.
	Through plugging above the last inequality to $D(N,T,\rho_{ 1},\rho_{ 2})$, we have
	\begin{eqnarray}
	\begin{aligned}
	&D(N,T,\rho_{ 1},\rho_{ 2}) \notag \\
	\leq&
	\inf_{ h _ { 1}, \cdots ,h _ { T }\in\mathcal { H } _{K} }
	\left\{
	\sum_{ t = 1}^{T} p(t) \|h_{t}-f^{*}_{t}\|_{\mathcal{L}^{2}_{P_{t}}} \notag +\frac{\rho _ { 1}}{N} \sum _ { t = 1} ^ { T }p(t)\| f^{*}_ { t } \|_{K} ^ { 2} \notag \right. \\
	& \left. + \frac{\rho _ { 2}}{N}\sum _ { t = 1} ^ { T } \left\| h_ { t } -\sum _ { s = 1} ^ { T }p(s) h _ { s } \right\|_{K}^ { 2}
	\right\} \notag   	 \\
	\leq& \inf_{R>0} \left\{
	\inf_{ \tiny\begin{array}{c}
		h _ { 1}, \cdots ,h _ { T }\in\mathcal { H } _{K}\\
		\|h_{t}\|_{K}\leq R
		\end{array} }
	\left\{
	\sum_{ t = 1}^{T} p(t) \|h_{t}-f^{*}_{t}\|_{\mathcal{L}^{2}_{P_{t}}, } 	 	 \right\} \notag \right. \\
	&+\frac{\rho _ { 1}  R^{2}}{N}+ \frac{2\rho _ { 2}R^{2}}{N}
	\Bigg\} \notag \\
	\leq& \inf_{R>0} \left\{
	C_{0}C_{s}\left(\log R \right)^{-s/4}
	+\frac{\rho _ { 1}  R^{2}}{N}+ \frac{2\rho _ { 2}R^{2}}{N}
	\right\} \notag
	\end{aligned}
	\end{eqnarray}
	where the last inequality follows immediately from \cite{smale2003estimating,zhou2013density}.
	In above, $C_{0}$ and $s$ are two positive constants, while $C_{s}$ is another constant depends on $s$.
	By choosing $R=N^{1/2-\epsilon}$ where $\epsilon>0$, we have
	\begin{equation*}
	\begin{aligned}
	&D(N,T,\rho_{ 1},\rho_{ 2}) \\
	\leq& C_{0}C_{s}\left( (1/2-\epsilon)\log N \right)^{-s/4}
	+{\rho _ { 1}  R^{-\epsilon}}+{2\rho _ { 2}R^{-\epsilon}}\\
	=& \mathcal{O} \left(\log N\right)^{-s/4}.
	\end{aligned}
	\end{equation*}
	which proves the result.
\end{proof}

\begin{proof}[{\bf The proof of Lemma 5}]
	By the definition and some easy calculation, we get
	\begin{eqnarray}
	\begin{aligned}
	&\mathcal{F}(N,T,\rho_1, \rho_2)\\
	=&
	-\frac{\rho_{ 1}}{N}\sum_{ i = 1}^{T} \left(p(t)-\frac{m_{t}}{N}\right)\| f^{\mathcal{H}}_ { t } \|_{K} ^ { 2}  \notag\\
	&+\frac{2\rho _ {2}}{N}\left\| \sum _ { t = 1} ^ { T } \left( p(t) - \frac{m_t}{N}\right) f^{\mathcal{H}}_{t} \right\|_{K} \sum _ { t = 1} ^ { T } \left\| f^{\mathcal{H}}_ { t } -\sum _ { s = 1} ^ { T }\frac{m_s}{N} f^{\mathcal{H}}_{s} \right\|_{K}\notag\\
	\leq& -\frac{\rho_{ 1}}{N}\sum_{ i = 1}^{T} \left(p(t)-\frac{m_{t}}{N}\right)\| f^{\mathcal{H}}_ { t } \|_{K} ^ { 2}  \notag\\
	&+\frac{2\rho _ {2}}{N} \left(\sum_{t=1}^{T} \left\|p(t)-\frac{m_{t}}{N}\right\|_{K} \cdot \left\|f_{t}^\mathcal{H}\right\|_{K} \right) \\
	&\cdot	\left( \sum _ { t = 1} ^ { T } \left\| f^{\mathcal{H}}_ { t } \right\|_{K}+ \sum _ { s = 1} ^ { T } T \left\| f^{\mathcal{H}}_{s} \right\|_{K} \right)\notag
	\end{aligned}
	\end{eqnarray}
	where the terms $\frac{\rho_1}{N}$, $\frac{\rho_2}{N}$, and $p(t)-\frac{m_{t}}{N}$ determine the convergent rate of the frequency error.
	Since, for each task $t$, the random variable $m_{t}$ is a counting process with bounded and independent increments, therefore, by \textit{Lemma \ref{lemma 7}}, there holds
	\begin{equation*}
	p(t)-\frac{m_{t}}{N} =  \mathbb{E} \left(\frac{m_{t}}{N}\right) - \frac{m_{t}}{N} = \mathcal{O} \left(N^{-\frac{1}{2}+\epsilon}\right) \quad \text{a.s.}.
	\end{equation*}
	Therefore, $\mathcal{F}(T,N,\rho_1, \rho_2)\leq  \mathcal{O} \left(N^{-\frac{3}{2}+\epsilon}\right) ~ \text{a.s.}$, which shows the result.
\end{proof}








\title{Supplementary material from ``Asymptotic performance of regularized multi-task learning based on SVM models"}



\author{Shaohan~Chen,~Zhou~Fang,~Sijie~Lu,~and~Chuanhou~Gao,~\IEEEmembership{Senior Member,~IEEE}
\thanks{
This work was supported by
the National Natural Science Foundation of China under Grant No.
11671418 and the Zhejiang Provincial Natural Science Foundation of China under Grant No. LZ20A010002.}
\thanks{S. Chen, S. Lu and C. Gao are with the School of Mathematical Sciences, Zhejiang
University, Hangzhou 310027, China (Corresponding e-mail: gaochou@zju.edu.cn (C. Gao)).}
\thanks{Z. Fang is with the Department of Biosystems Science and Engineering, ETH Zurich, Switzerland.}}



\maketitle

\IEEEdisplaynontitleabstractindextext

\IEEEpeerreviewmaketitle

\section*{Appendix}

\setcounter{equation}{0}
\renewcommand{\theequation}{\thesection.\arabic{equation}}
\renewcommand\thesection{\Alph{subsection}}
\renewcommand\thelemma{\thesection.\arabic{lemma}}

In this appendix, we are going to provide a brief introduction to the Reproducing Kernel Hilbert Space, and give the proofs of \textit{Lemma 1--5} and of \textit{Theorem 1} in our article.
The main scheme and most of techniques are quite standard in the learning theory literature.

\subsection{Reproducing Kernel Hilbert Space}
For complete, we introduce the definition of the Reproducing Kernel Hilbert Space (RKHS) $\mathcal{ H }_K$  following the notations and terminologies in literature \cite{wu2006analysis}.
(RKHS) $\mathcal{ H }_K$ associated with the kernel $K$ to be the completion of the linear span of the set of functions $\left\{ K _ { x } : = K ( x ,\cdot ) : x \in X \right\}$ with the inner product $\langle \cdot ,\cdot \rangle _ { \mathcal { H } _ { K } } = \langle \cdot ,\cdot \rangle _ { K }$ satisfying
\begin{equation}
\begin{aligned}
\left\| \sum _ { i = 1} ^ { m } c _ { i } K _ { x _ { i } } \right\| _ { K } ^ { 2} &= \left\langle \sum _ { i = 1} ^ { m } c _ { i } K _ { x _ { i } } ,\sum _ { i = 1} ^ { m } c _ { i } K _ { x _ { i } } \right\rangle _ { K } \\
&= \sum _ { i ,j = 1} ^ { m } c _ { i } K \left( x _ { i } ,x _ { j } \right) c _ { j }.
\end{aligned}
\end{equation}
The reproducing property of RKHS $\mathcal{ H }_K$ is defined as
\begin{equation}\label{RKHS}
< K _ { x } ,g > _ { K } = g ( x ) ,\quad \forall x \in X ,g \in \mathcal { H } _ { K }.
\end{equation}
Denote $C(X)$ to be the space of continuous functions on $X$ with the norm $\| \cdot \| _ { \infty }$. Then Eq. \eqref{RKHS} leads to $\| g \| _ { \infty } \leq \kappa \| g \| _ { K } ,\quad \forall g \in \mathcal { H } _ { K }$, where $\kappa = \sup _ { x \in X } \sqrt { K ( x ,x ) }$. This means $\mathcal{ H }_K$ can be embedded into $C(X)$.

\subsection{Proofs}

\begin{proof}[\bf The proof of Lemma 1]
	Notice that the Lagrangian function for Eq. \eqref{svm1} is given by the following formula
	For convenience, we denote the objective function of Eq. \eqref{svm1} as $J(f_0, g_t, \xi_{it})$.
	\begin{eqnarray}
	\begin{aligned}
	L(f_0, g_t,\alpha_{it}, \gamma_{it})\notag 
	=&J(f_0, g_t, \xi_{it})
	- \sum _ { t = 1} ^ { T } \sum _ { i = 1} ^ { m_t } \gamma _ { i i } \xi _ { i t } \notag\\
	 &- \sum _ { t = 1} ^ { T } \sum _ { i = 1} ^ { m_t } \alpha _ { i t } \left( y _ { i t } \left( f_ { 0} + g _ { t } \right)(\mathbf{x} _ { i t } - 1+ \xi _ { i t } \right) \notag \\
	&
	\end{aligned}
	\end{eqnarray}
	where $\alpha_{it}, \gamma_{it}$ are nonnegative Lagrangian multipliers. Setting the derivative of $L$ with respect to $f_0$ to zero and by considering $f_0,g_t\in \mathcal{ H }_K$ gives the equation
	$\bar{f}_ { 0} ^ { *} = \frac { 1} { 2\lambda _ { 2} } \sum _ { t = 1} ^ { T } \sum _ { i = 1} ^ { m_t } \alpha _ { i t } y _ { i t } K_{x _ { i t}}$.
	Similar to the $g_t$, for every $t\in\{1, \cdots, T\}$, we have
	$g_ { t} = \frac { N } { 2\lambda _ { 1}m_t } \sum _ { i = 1} ^ { m_t } \alpha _ { i t } y _ { i t } K_{x _ { i t}}$.
	By combining these two equations and together with the equation $\bar{f}^{*}_t=\bar{f}^*_0+\bar{g}^*_t$ we can easily obtain the result.
\end{proof}

\begin{proof}[{\bf The proof of Theorem 1}]
	Using the result of Lemma \ref{lemma1} and $\bar{f}^*_t=\bar{f}^*_0+\bar{g}^*_t$, we have the following equation
	\begin{eqnarray}
	\begin{aligned}
	&\lambda _ { 1} \sum _ { t = 1} ^ { T }\frac{m_t}{N} \| \bar{g}^*_ { t } \|_K ^ { 2} + \lambda _ { 2}\| \bar{f}^*_ {0} \|_K ^ { 2}\\
	&= \lambda _ { 1} \sum _ { t = 1} ^ { T }\frac{m_t}{N} \| \bar{f}^{*}_ { t } \|_K ^ { 2}- \frac { \lambda _ { 1} ^ { 2} } { \lambda _ { 1} + \lambda _ { 2} } \| \sum _ { t = 1} ^ { T } \frac{m_t}{N} \bar{f}^* _ { t } \|_K^{2}. \notag
	\end{aligned}
	\end{eqnarray}
	On the other hand we have
	\begin{eqnarray}
	\begin{aligned}
	&\rho _ { 1} \sum _ { t = 1} ^ { T } \frac{m_t}{N} \| \bar{f}^{*} _ { t } \|_K ^ { 2}+\rho _ { 1} \sum _ { t= 1} ^ { T } \frac{m_t}{N}\| \bar{f}^{*}_t - \sum _ { s = 1} ^ { T } \frac{m_s}{N} \bar{f}^{*} _ { s} \|_K^{2}\\
	&= (\rho _ { 1}+\rho_{ 2}) \sum _ { t = 1} ^ { T } \frac{m_t}{N} \| \bar{f}^* _ { t } \|_K ^ { 2}-\rho _ { 2}\| \sum _ { s = 1} ^ { T } \frac{m_s}{N} \bar{f}^{*} _ { s} \|_K^{2}. \notag
	\end{aligned}
	\end{eqnarray}
	Comparing the above two equations we can obtain
	$\rho _ { 1} = \frac { \lambda _ { 1} \lambda _ { 2} } { \lambda _ { 1} + \lambda _ { 2} }$ and $\rho _ { 2} = \frac { \lambda _ { 1} ^ { 2} } { \lambda _ { 1} + \lambda _ { 2} }$.
	Thus the results hold.
\end{proof}

\begin{proof}[{\bf The proof of Lemma 2}]
	Notice that
	\begin{align}
	&\mathcal{E}(f ^z_ { 1}  ,\cdots ,f^z _ { T })-\mathcal{E}(f^*_ { 1}  ,\cdots ,f^*_ { T })\\
	=&\left\{\mathcal { E } \left(f ^z_ { 1}, \cdots ,f^z _ { T } \right) - \mathcal { E } _ { z } \left( f ^z_ { 1} ,\cdots ,f^z _ { T }\right)\right\}	\label{eq. sample error 1}\\
	&+\left\{\mathcal{E}_z(f^z_ { 1},\cdots ,f^z_ { T })+\frac{\rho_1}{N}  \sum _ { t = 1} ^ { T }\frac{m_t}{N} \| f^z_ { t } \|_{K} ^ { 2} \right. \notag \\
	&\left. + \frac{\rho_2}{N} \sum _ { t = 1} ^ { T } \left\| f^z_ { t } -\sum _ { s = 1} ^ { T }\frac{m_s}{N} f^z _ { s } \right\|_{K}^ { 2}\right\}\label{eq. f1}\\
	&-\left\{\mathcal{E}_z(f^{\mathcal{H}}_ { 1},\cdots ,f^{\mathcal{H}}_ { T })+\frac{\rho _ { 1}}{N} \sum _ { t = 1} ^ { T }p(t)\| f^{\mathcal{H}}_ { t } \|_{K} ^ { 2} \right. \notag\\
	&\left. + \frac{\rho _ { 2}}{N}\sum _ { t = 1} ^ { T } \left\| f^{\mathcal{H}}_ { t } -\sum _ { s = 1} ^ { T }p(s) f^\mathcal{ H } _ { s } \right\|_{K}^ { 2}\right\}  \label{eq. f2}\\
	&+\left\{\mathcal { E }_z\left(f^{\mathcal{H}}_ { 1}, \cdots ,f^{\mathcal{H}} _ { T } \right) - \mathcal { E } \left( f^{\mathcal{H}}_ { 1} ,\cdots ,f^{\mathcal{H}} _ { T }\right)\right\}\label{eq. sample error 2}\\
	&+\Bigg\{\mathcal{E}(f^{\mathcal{H}}_ { 1}  ,\cdots ,f^{\mathcal{H}} _ { T })-\mathcal{E}(f^*_ { 1}  ,\cdots ,f^*_ { T })  \notag\\
	&\left. +\frac{\rho _ { 1}}{N} \sum _ { t = 1} ^ { T }p(t)\| f^{\mathcal{H}}_ { t } \|_{K} ^ { 2} + \frac{\rho _ { 2}}{N}\sum _ { t = 1} ^ { T } \left\| f^{\mathcal{H}}_ { t } -\sum _ { s = 1} ^ { T }p(s) f^\mathcal{ H } _ { s } \right\|_{K}^ { 2}\right\}\label{eq. regularization error}\\
	&-\frac{\rho _ { 1}}{N} \sum _ { t = 1} ^ { T }\frac{m_t}{N} \| f^z_ { t } \|_{K} ^ { 2} -\frac{\rho _ { 2}}{N}\sum _ { t = 1} ^ { T } \left\| f^z_ { t } -\sum _ { s = 1} ^ { T }\frac{m_s}{N} f^z _ { s } \right\|_{K}^ { 2} \notag
	\end{align}	
	Moreover, there is.
	\begin{eqnarray}
	\begin{aligned}
	&\eqref{eq. f2}\\
	\leq& -\left\{\mathcal{E}_z(f^{\mathcal{H}}_ { 1},\cdots ,f^{\mathcal{H}}_ { T })+\frac{\rho _ { 1}}{N} \sum _ { t = 1} ^ { T }\frac{m_{t}}{N}\| f^{\mathcal{H}}_ { t } \|_{K} ^ { 2} \right. \\
	&\left. + \frac{\rho _ {2}}{N}\sum _ { t = 1} ^ { T } \left\| f^{\mathcal{H}}_ { t } -\sum _ { s = 1} ^ { T }\frac{m_{t}}{N} f^\mathcal{ H } _ { s } \right\|_{K}^ { 2}\right\} \notag\\
	&-\frac{\rho_{ 1}}{N}\sum_{ t = 1}^{T} \left(p(t)-\frac{m_{t}}{N}\right)  \| f^{\mathcal{H}}_ { t }\|_{K} ^ { 2}\notag\\
	&+\frac{2\rho _ {2}}{N}\left\| \sum _ { t = 1} ^ { T } \left( p(t) - \frac{m_t}{N}\right) f^{\mathcal{H}}_{t} \right\|_{K} \sum _ { t = 1} ^ { T } \left\| f^{\mathcal{H}}_ { t } -\sum _ { s = 1} ^ { T }\frac{m_s}{N} f^{\mathcal{H}}_{s} \right\|_{K}\notag
	\end{aligned}
	\end{eqnarray}
	Combining above inequality with \eqref{eq. f1}, we obtained
	\begin{eqnarray}
	\begin{aligned}
	&\eqref{eq. f1}+\eqref{eq. f2} \\
	\leq&  -\frac{\rho_{ 1}}{N}\sum_{ t = 1}^{T} \left(p(t)-\frac{m_{t}}{N}\right)\| f^{\mathcal{H}}_ { t } \|_{K} ^ { 2}  \notag\\
	& +\frac{2\rho _ {2}}{N}\left\| \sum _ { t = 1} ^ { T } \left( p(t) - \frac{m_t}{N}\right) f^{\mathcal{H}}_{t} \right\|_{K} \sum _ { t = 1} ^ { T } \left\| f^{\mathcal{H}}_ { t } -\sum _ { s = 1} ^ { T }\frac{m_s}{N} f^{\mathcal{H}}_{s} \right\|_{K}\notag
	\end{aligned}
	\end{eqnarray}
	Denote the right hand side of above inequality as $\mathcal{F}(N,T,\rho_{ 1},\rho_{ 2})$, $\mathcal{S}(N,T)=\eqref{eq. sample error 1}+\eqref{eq. sample error 2}$, and $\mathcal{ D }(N,T,\rho_{ 1},\rho_{ 2})=\eqref{eq. regularization error}$, we obtain the result.
\end{proof}

Before showing the proofs of \textit{Lemma 3}, we first induce some auxiliary lemmas which will be very helpful in the later proof.
The first auxiliary lemma  shows the upper bounds for $f^{z}_{t}$ and $\ell \left({y_{i,t}}\cdot f_{t}({\mathbf{x}_{i,t}} ) \right)$'s average.

\begin{lemma}\label{lemma 6}
	For functions $\ell(\cdot)$ and $f_{t}^{z}$, we have
	\begin{itemize}
		\item $\sum_{ i = 1}^{m_{t}} \frac{\ell \left({y_{i,t}}\cdot f^{z}_{t}({\mathbf{x}_{i,t}} ) \right)}{m_{t}} \leq \frac{N}{m_{t}}$
		\item $\|f_{t}^{z}\|_{K}\leq \frac{N}{\sqrt{\rho_{ 1}\cdot m_{t}}}$
	\end{itemize}
\end{lemma}
\begin{proof}
	Let's define an auxiliary function $\tilde{f}(\cdot)\equiv0$ which obviously lies in $\mathcal{ H }_{K}$
	Note that $\{f^{z}_{t}\}$ minimize the objective function \ref{svm2}.
	Therefore, we have the following relation
	\begin{eqnarray}
	\begin{aligned}
	&\sum _ { t = 1} ^ { T } \sum _ { i = 1} ^ { m_{t} }\ell ({y_{i,t}}\cdot f^{z}_{t}({\mathbf{x}_{i,t}} ) )+ \rho _ { 1} \sum _ { t = 1} ^ { T }\frac{m_t}{N} \| f^{z}_ { t } \| _{K}^ { 2} \\
	&+ \rho _ { 2}\sum _ { t = 1} ^ { T } \left\| f ^{z}_ { t } -\sum _ { s = 1} ^ { T }\frac{m_s}{N} f^{z} _ { s } \right\|_{K}^ { 2} \notag\\
	\leq&
	\sum _ { t = 1} ^ { T } \sum _ { i = 1} ^ { m_{t} }\ell\left({y_{i,t}}\cdot\tilde{f}({\mathbf{x}_{i,t}} ) \right) + \rho _ { 1} \sum _ { t = 1} ^ { T }\frac{m_t}{N} \| \tilde{f} \| _{K}^ { 2} \\
	&+ \rho _ { 2}\sum _ { t = 1} ^ { T } \left\| \tilde{f} -\sum _ { s = 1} ^ { T }\frac{m_s}{N} \tilde{f} \right\|_{K}^ { 2}. \notag
	\end{aligned}
	\end{eqnarray}
	Since the left hand side of above inequality is greater than or equals to $\sum_{ i = 1}^{m_{t}} {\ell \left({y_{i,t}}\cdot f^{z}_{t}({\mathbf{x}_{i,t}} ) \right)}$, where $t=1,...T$, and the right hand side of the inequality equals to $N$, we have the relation
	\begin{equation*}
	\sum_{ i = 1}^{m_{t}} {\ell \left({y_{i,t}}\cdot f^{z}_{t}({\mathbf{x}_{i,t}} ) \right)} \leq N,
	\quad \text{and} \quad		
	\sum_{ i = 1}^{m_{t}} \frac{\ell \left({y_{i,t}}\cdot f^{z}_{t}({\mathbf{x}_{i,t}} ) \right)}{m_{t}} \leq \frac{N}{m_{t}}.
	\end{equation*}
	Similarly, since $\rho_{ 1}\frac{m_{t}}{N}\|f_{t}^{z}\|_{K}^{2}$ is also less than or equals to the left hand side of the first inequality in this lemma, therefore, we have
	\begin{equation*}
	\rho_{ 1}\frac{m_{t}}{N}\|f_{t}^{z}\|_{K}^{2} \leq N,
	\quad \text{and} \quad		
	\|f_{t}^{z}\|_{K}\leq \frac{N}{\sqrt{\rho_{ 1}\cdot m_{t}}}.
	\end{equation*}
	Thus the result is shown.
\end{proof}

The second auxiliary lemma demonstrates the error of sampling method to be approximately of order $\frac{1}{\sqrt{N}}$.
This is so well known a result that perhaps every reader knows it.
However, since we failed to find a very good reference for it, a concise proof is given to show its correctness.

\begin{lemma}\label{lemma 7}
	For a sequence of bounded, independent, and identically distributed random variables $\{X_{i}\}$, there is
	\begin{equation*}
	\frac{\sum_{i=1}^{N} X_{i}}{N} - \mathbb{E}X_1= \mathcal{O} \left(N^{-\frac{1}{2}+\epsilon}\right)\quad \text{a.s.}
	\end{equation*}
	where $\epsilon$ is any positive constant.
\end{lemma}

\begin{proof}
	Let's denote $Z_{N} :=\frac{\sum_{i=1}^{N} X_{i}}{N} - \mathbb{E}X_1$.
	By Hoeffding's inequality,  for any positive constant $M$ there are relations
	\begin{equation*}
	P\left(|Z_{N}|\geq M  \right) \leq 2 e^{-2M^{2}\cdot N }
	\end{equation*}
	and
	\begin{equation*}
	P\left(|Z_{N}|\geq M \cdot N^{-1/2+\epsilon}  \right) \leq 2 e^{-2 M^{2}\cdot N^{\epsilon}}.
	\end{equation*}
	Since $e^{-2 M^{2}\cdot N^{\epsilon}}=o(\frac{1}{N^2})$, the summation $\sum_{N=1}^{\infty} P\left(|Z_{N}|\geq M \cdot N^{-1/2+\epsilon}  \right)$ convergences, and, therefore by Borelantelli lemma, we have
	\begin{equation*}
	\limsup_{N\to\infty} |Z_{N}| \leq M \cdot N^{-1/2+\epsilon} ~~~\text{a.s.}
	\end{equation*}
	which proves the lemma.
\end{proof}

The third auxiliary lemma shows the sampling error of classifier $f^{z}_{t}$.

\begin{lemma}\label{lemma 8}
	$$\mathbb { E }_{(\mathcal{X},\mathcal{Y})\sim P_{t} }[ \ell \left(\mathcal{Y}\cdot f^z_{t}( \mathcal{X} ) \right) ] - \sum_{ i = 1}^{m_{t}} \frac{\ell \left({y_{i,t}}\cdot f^{z}_{t}({\mathbf{x}_{i,t}} ) \right)}{m_{t}}  \leq \mathcal{O} \left(N^{-\frac{1}{4}+\epsilon}\right). $$
\end{lemma}

\begin{proof}
	We first denote some notations as follow:
	\begin{eqnarray}
	\tilde{\mathcal{ E}}_{t}(f) &:=& \mathbb { E }_{(\mathcal{X},\mathcal{Y})\sim P_{t} }[ \ell (\mathcal{Y}\cdot f^z_{t}( \mathcal{X} ) ) ] \notag \\
	\tilde{\mathcal{ E }}_{t}^{z}(f) &:=& \sum_{ i = 1}^{m_{t}} \frac{\ell \left({y_{i,t}}\cdot f^{z}_{t}({\mathbf{x}_{i,t}} ) \right)}{m_{t}} \notag \\
	\mathcal{B}_{R} &:=& \left\{ f\in \mathcal{ H }_{K} ~|~ \|f\|_{K} \leq R  \right\} \notag
	\end{eqnarray}
	Let $R^{*} =\frac{N}{\sqrt{\rho_{1}m_{t}}}$, then by \textit{Lemma \ref{lemma 6}} $f^{z}_{t}\in B_{R^*}$. Together with the fact that $\tilde{\mathcal{ E }}_{t}^{z}(f_{z})\leq \frac{N}{m_{t}}$ (c.f \textit{Theorem \ref{lemma 6}}), for any $\alpha>0$ we have
	\begin{eqnarray}
	&&P\left\{
	\tilde{\mathcal{ E}}_{t}(f^{z}_{t}) - 	\tilde{\mathcal{ E }}_{t}^{z}(f^{z}_{t}) > 4\alpha\left(1+\frac{N}{m_{t}}\right) ~\Big|~
	m_{t}		
	\right\} \notag \\
	&\leq&
	P\left\{ \sup_{f\in\mathcal{B}_{R^{*}}}\frac{\tilde{\mathcal{ E}}_{t}(f) - 	\tilde{\mathcal{ E }}_{t}^{z}(f)}{1+	\tilde{\mathcal{ E }}_{t}^{z}(f)} > 4\alpha~\Big|~
	m_{t}		
	\right\}\notag\\
	&\leq& \mathcal{N}\left(\frac{\alpha}{R^{*}}\right) \exp\left\{ -\frac{m_{t}\cdot\alpha^{2}}{32(1+\kappa R^{*})}\right\} \label{inequality 1}
	\end{eqnarray}	
	where the last inequality follows from \text{lemma 5} in \cite{wu2006analysis}, and $\mathcal{N}\left(\frac{\alpha}{R^{*}}\right)$ is the covering number defined as the minimal number of balls with radius $\frac{\alpha}{R^{*}}$ to cover the unite ball in RKHS.
	According to \cite{zhou2002covering}, the covering number has the relation
	\begin{equation*}
	\log \mathcal{N} (\epsilon) \leq M\left(\log \frac{1}{\epsilon}\right)^{d+1}
	\end{equation*}
	where $M$ is a constant, and $d$ is the dimension of the data set space.
	By plugging above inequality and $\alpha=\alpha^{*}(N,m_{t})=N^{-\epsilon}\cdot\sqrt{\frac{1+\kappa R^{*}}{m_{t}}}$ into \eqref{inequality 1}, we have
	\begin{eqnarray}
	&&P\left\{
	\tilde{\mathcal{ E}}_{t}(f^{z}_{t}) - 	\tilde{\mathcal{ E }}_{t}^{z}(f^{z}_{t}) > 4\left(1+\frac{N}{m_{t}}\right)\cdot\alpha^{*}(N,m_{t}) ~\Big|~
	m_{t}		
	\right\} \notag \\
	&\leq&
	\exp\left\{  M\left(\log \frac{N}{N^{-\epsilon}\sqrt{\rho_{ 1}+\rho_{ 1}\kappa R^{*}}}\right)^{d+1} -  \frac{1}{32} N^{-2\epsilon} \right\} \notag \\
	&\leq& \exp\left\{  M\left(\log \frac{N^{1-\epsilon}}{\sqrt{\rho_{ 1}}}\right)^{d+1} -  \frac{1}{32} N^{-2\epsilon} \right\} \notag \\
	\end{eqnarray}
	where $\epsilon$ is arbitrary positive constant.
	Furthermore, by above inequality and some easy calculations we can obtain
	\begin{align}
	&P\left\{
	\tilde{\mathcal{ E}}_{t}(f^{z}_{t}) - 	\tilde{\mathcal{ E }}_{t}^{z}(f^{z}_{t}) > 4\left(1+\frac{N}{m_{t}}\right)\cdot\alpha^{*}(N,m_{t}) ~\Big|~
	m_{t}		
	\right\} \notag\\
	&=  \mathbb{ E }_{m_{t}} \left[	P\left\{
	\tilde{\mathcal{ E}}_{t}(f^{z}_{t}) - 	\tilde{\mathcal{ E }}_{t}^{z}(f^{z}_{t}) > 4\alpha\left(1+\frac{N}{m_{t}}\right) ~\Big|~
	m_{t}		
	\right\} \right]\notag\\
	&\leq  \mathbb{ E }_{m_{t}} \left[ \exp\left\{  M\left(\log \frac{N^{1-\epsilon}}{\sqrt{\rho_{ 1}}}\right)^{d+1} -  \frac{1}{32} N^{-2\epsilon} \right\} \right]\notag\\
	&=  \exp\left\{  M\left(\log \frac{N^{1-\epsilon}}{\sqrt{\rho_{ 1}}}\right)^{d+1} -  \frac{1}{32} N^{-2\epsilon} \right\} \notag\\
	&= o \left(N^{-2}\right), \notag
	\end{align}
	and, therefore,
	\begin{equation*}
	\begin{aligned}
	&\sum_{N=1}^{\infty}P\left\{
	\tilde{\mathcal{ E}}_{t}(f^{z}_{t}) - 	\tilde{\mathcal{ E }}_{t}^{z}(f^{z}_{t}) > 4\left(1+\frac{N}{m_{t}}\right) \right.  \left. \cdot\alpha^{*}(N,m_{t}) ~\Big|~
	m_{t}		
	\right\} \\
	&< +\infty.
	\end{aligned}
	\end{equation*}
	
	By Borel-Cantelli Lemma, we have the relation
	\begin{equation*}
	\limsup_{N\to\infty} \tilde{\mathcal{ E}}_{t}(f^{z}_{t}) - 	\tilde{\mathcal{ E }}_{t}^{z}(f^{z}_{t}) - 4\left(1+\frac{N}{m_{t}}\right) \cdot\alpha^{*}(N,m_{t}) \leq 0 \quad \text{a.s.}.
	\end{equation*}
	Besides, according to the definition of $\alpha^{*}(\cdot,\cdot)$ and the strong law of large numbers, we have $\left(1+\frac{N}{m_{t}}\right) \cdot\alpha^{*}(N,m_{t})= \mathcal{O} \left(N^{-1/4+\epsilon}\right)$.
	Thus, $\tilde{\mathcal{ E}}_{t}(f^{z}_{t}) - 	\tilde{\mathcal{ E }}_{t}^{z}(f^{z}_{t}) \leq \mathcal{O} \left(N^{-1/4+\epsilon}\right)$
	which proves the result.
\end{proof}

With these preparation, we can prove \textit{Lemma \ref{lemma3}}.

\begin{proof}[{\bf The proof of Lemma 3}]
	We first consider the first part of sample error,  $\mathcal { E } \left(f ^z_ { 1}, \cdots ,f^z _ { T } \right) - \mathcal { E } _ { z } \left( f ^z_ { 1} ,\cdots ,f^z _ { T }\right)$. By the definition and some easy calculations, we can obtain,
	\begin{align}
	&\mathcal { E } \left(f ^z_ { 1}, \cdots ,f^z _ { T } \right) - \mathcal { E } _ { z } \left( f ^z_ { 1} ,\cdots ,f^z _ { T }\right) \notag \\
	=& \sum_{t=1}^{T} p(t)\mathbb { E }_{(\mathcal{X},\mathcal{Y})\sim P_{t} }[ \ell \left(\mathcal{Y}\cdot f^z_{t}( \mathcal{X} ) \right) ] \notag \\
	&- \sum_{t=1}^{T} \frac{m_{t}}{N}\sum_{ i = 1}^{m_{t}} \frac{\ell \left({y_{i,t}}\cdot f^{z}_{t}({\mathbf{x}_{i,t}} ) \right)}{m_{t}}  \notag \\
	=& \sum_{t=1}^{T} p(t)\mathbb { E }_{(\mathcal{X},\mathcal{Y})\sim P_{t} }[ \ell \left(\mathcal{Y}\cdot f^z_{t}( \mathcal{X} ) \right) ] \notag \\
	&- \sum_{t=1}^{T} p(t)\sum_{ i = 1}^{m_{t}} \frac{\ell \left({y_{i,t}}\cdot f^{z}_{t}({\mathbf{x}_{i,t}} ) \right)}{m_{t}} \notag \\
	&  +\sum_{t=1}^{T} \left(p(t)-\frac{m_{t}}{N}\right)\sum_{ i = 1}^{m_{t}} \frac{\ell \left({y_{i,t}}\cdot f^{z}_{t}({\mathbf{x}_{i,t}} ) \right)}{m_{t}} \notag\\
	=&  \sum_{t=1}^{T} p(t) \Bigg(\mathbb { E }_{(\mathcal{X},\mathcal{Y})\sim P_{t} }[ \ell \left(\mathcal{Y}\cdot f^z_{t}( \mathcal{X} ) \right) ] \left. -\sum_{ i = 1}^{m_{t}} \frac{\ell \left({y_{i,t}}\cdot f^{z}_{t}({\mathbf{x}_{i,t}} ) \right)}{m_{t}} \right) \notag \\
	&+ \sum_{t=1}^{T}\left[ \left(p(t)-\frac{m_{t}}{N}\right) \sum_{ i = 1}^{m_{t}} \frac{\ell \left({y_{i,t}}\cdot f^{z}_{t}({\mathbf{x}_{i,t}} ) \right)}{m_{t}} \right]  \label{eq. lemma proof auxilliary 1} 
	\end{align}
	Since, for each task $t$, the random variable $m_{t}$ is a counting process with bounded and independent increments, therefore, by \textit{Lemma \ref{lemma 7}}, there holds
	\begin{equation}{\label{eq. lemma proof auxiliarry 2}}
	p(t)-\frac{m_{t}}{N} =  \mathbb{E} \left(\frac{m_{t}}{N}\right) - \frac{m_{t}}{N} = \mathcal{O} \left(N^{-\frac{1}{2}+\epsilon}\right) \quad \text{a.s.}.
	\end{equation}
	By plugging above relation and \textit{lemma \ref{lemma 8}} into the last two lines of \eqref{eq. lemma proof auxilliary 1}, this relation leads to
	\begin{align} {\label{eq. lemma3 proof 1}}
	&\mathcal { E } \left(f ^z_ { 1}, \cdots ,f^z _ { T } \right) - \mathcal { E } _ { z } \left( f ^z_ { 1} ,\cdots ,f^z _ { T }\right)
	=  \mathcal{O} \left(N^{-\frac{1}{4}+\epsilon}\right)\quad \text{a.s.}.
	\end{align}
	
	Then we consider the second part of sample error, $\mathcal { E } _ { z } \left( f^{\mathcal { H }}_{1}, \cdots, f^{\mathcal { H }}_{T} \right) - \mathcal { E } \left( f^{\mathcal { H }}_{1}, \cdots, f^{\mathcal { H }}_{T} \right) $.
	Still by the definition and some easy calculations, we can obtain,
	\begin{align}
	& \mathcal { E } _ { z } \left( f^{\mathcal { H }}_{1}, \cdots, f^{\mathcal { H }}_{T} \right) - \mathcal { E } \left( f^{\mathcal { H }}_{1}, \cdots, f^{\mathcal { H }}_{T} \right) \notag \\
	=&\sum_{t=1}^{T} \frac{m_{t}}{N}\sum_{ i = 1}^{m_{t}} \frac{\ell \left({y_{i,t}}\cdot f^{\mathcal{ H }}_{t}({\mathbf{x}_{i,t}} ) \right)}{m_{t}} \notag \\
	&- \sum_{t=1}^{T} p(t)\mathbb { E }_{(\mathcal{X},\mathcal{Y})\sim P_{t} }[ \ell \left(\mathcal{Y}\cdot f^\mathcal{H}_{t}( \mathcal{X} ) \right) ] \notag \\
	=& \sum_{t=1}^{T} \left[ \left(\frac{m_{t}}{N}-p(t)\right)\sum_{ i = 1}^{m_{t}} \frac{\ell \left({y_{i,t}}\cdot f^{\mathcal{ H }}_{t}({\mathbf{x}_{i,t}} ) \right)}{m_{t}} \right] \notag \\
	&  +\sum_{t=1}^{T} p(t) \left[\sum_{ i = 1}^{m_{t}} \frac{\ell \left({y_{i,t}}\cdot f^{\mathcal{ H }}_{t}({\mathbf{x}_{i,t}} ) \right)}{m_{t}} \right. \notag\\
	& - \mathbb { E }_{(\mathcal{X},\mathcal{Y})\sim P_{t} }[ \ell \left(\mathcal{Y}\cdot f^\mathcal{H}_{t}( \mathcal{X} ) \right) ] \Bigg ]  \label{eq. lemma3 proof auxillary 3}
	\end{align}
	Moreover, since $\ell \left({y_{i,t}}\cdot f^{\mathcal{ H }}_{t}({\mathbf{x}_{i,t}} )\right) =\max\{0, 1-{y_{i,t}}f^{\mathcal{ H }}_{t}( {\mathbf{x}_{i,t}} )\}  \leq 1+\|f^{\mathcal{ H }}_{t}\|_{\infty} \leq 1+\kappa\|f^{\mathcal{ H }}_{t}\|_{K}$ and $\|f^{\mathcal{ H }}_{t}\|_{K}$ is finite, for each data sample $\ell \left({y_{i,t}}\cdot f^{\mathcal{ H }}_{t}({\mathbf{x}_{i,t}} )\right) $ is uniformly bounded by a constant $R$ where $R=1+\kappa\|f^{\mathcal{ H }}_{t}\|_{K}$.
	Therefore, by \eqref{eq. lemma proof auxiliarry 2} and the uniform boundedness of $\ell \left({y_{i,t}}\cdot f^{\mathcal{ H }}_{t}({\mathbf{x}_{i,t}} )\right)$, there holds
	\begin{equation}{\label{eq. lemma3 proof auxillary 4}}
	\begin{aligned}
	&\sum_{t=1}^{T} \left[ \left(\frac{m_{t}}{N}-p(t)\right)\sum_{ i = 1}^{m_{t}} \frac{\ell \left({y_{i,t}}\cdot f^{\mathcal{ H }}_{t}({\mathbf{x}_{i,t}} ) \right)}{m_{t}} \right]\\
	&\leq \sum_{t=1}^{T} \left[ \left(\frac{m_{t}}{N}-p(t)\right)\sum_{ i = 1}^{m_{t}} \frac{R }{m_{t}} \right]
	= \mathcal{O} \left(N^{-\frac{1}{2}+\epsilon}\right)\quad \text{a.s.}.
	\end{aligned}
	\end{equation}
	Besides, due to i.i.d. property of sampled data $(\mathbf{x}_{i,t},y_{i,t})$ for each task $t$, the random variable $\ell \left({y_{i,t}}\cdot f^{\mathcal{ H }}_{t}({\mathbf{x}_{i,t}} )\right) $ is also independent and identically distributed.
	Together with the boundedness of $\ell \left({y_{i,t}}\cdot f^{\mathcal{ H }}_{t}({\mathbf{x}_{i,t}} )\right) $, we can apply \textit{Lemma \ref{lemma 7}} to $\ell \left({y_{i,t}}\cdot f^{\mathcal{ H }}_{t}({\mathbf{x}_{i,t}} )\right) $ and obtain
	\begin{equation*}
	\begin{aligned}
	&\sum_{ i = 1}^{m_{t}} \frac{\ell \left({y_{i,t}}\cdot f^{\mathcal{ H }}_{t}({\mathbf{x}_{i,t}} ) \right)}{m_{t}}
	- \mathbb { E }_{(\mathcal{X},\mathcal{Y})\sim P_{t} }[ \ell \left(\mathcal{Y}\cdot f^\mathcal{H}_{t}( \mathcal{X} ) \right) ]\\
	&=\mathcal{O} \left(N^{-\frac{1}{2}+\epsilon}\right)\quad \text{a.s.}.
	\end{aligned}
	\end{equation*}
	By plugging this equation and \eqref{eq. lemma3 proof auxillary 4} into the last two lines of \eqref{eq. lemma3 proof auxillary 3}, we obtain
	\begin{equation}{\label{eq. lemma3 proof eq2}}
	\mathcal { E } _ { z } \left( f^{\mathcal { H }}_{1}, \cdots, f^{\mathcal { H }}_{T} \right) - \mathcal { E } \left( f^{\mathcal { H }}_{1}, \cdots, f^{\mathcal { H }}_{T} \right) = \mathcal{O} \left(N^{-\frac{1}{2}+\epsilon}\right)\quad \text{a.s.}.
	\end{equation}
	
	Finally combining \eqref{eq. lemma3 proof 1} and \eqref{eq. lemma3 proof eq2}, we have
	\begin{eqnarray}
	\begin{aligned}
	\mathcal { S } (N,T)&=\left\{ \mathcal { E } \left(f ^z_ { 1}, \cdots ,f^z _ { T } \right) - \mathcal { E } _ { z } \left( f ^z_ { 1} ,\cdots ,f^z _ { T }\right) \right\} \notag \\
	&\quad+ \left\{ \mathcal { E } _ { z } \left( f^{\mathcal { H }}_{1}, \cdots, f^{\mathcal { H }}_{T} \right) - \mathcal { E } \left( f^{\mathcal { H }}_{1}, \cdots, f^{\mathcal { H }}_{T} \right) \right\} \notag \\
	&= \mathcal{O} \left(N^{-\frac{1}{4}+\epsilon}\right)\quad \text{a.s.}. \notag
	\end{aligned}
	\end{eqnarray}
	which proves this lemma.
\end{proof}

\begin{proof}[{\bf The proof of Lemma 4}]
	First, we note that function $\ell(x)=\max\{0,1-x\}$ is a Lipschitz function, satisfying $|\ell(x)-\ell(y)|\leq |x-y|$ where $x$ and $y$ are arbitrary real numbers.
	Therefore, for any $h_{1},~h_{2},\dots,h_{T} \in \mathcal{ H }_{K}$ there is
	\begin{eqnarray}
	\begin{aligned}
	&\mathcal { E } \left(h_1, \cdots, h_{T} \right) - \mathcal { E } \left( f^{*}_{1}, \cdots, f^{*}_{T} \right)\notag \\
	&=
	\sum_{i=1}^{T}p(t) \mathbb { E }_{(\mathcal{X},\mathcal{Y})\sim P_{t} }[ \ell (\mathcal{Y}\cdot h_{t}( \mathcal{X} ) ) ] \notag \\
	&\quad-\sum_{i=1}^{T}p(t) \mathbb { E }_{(\mathcal{X},\mathcal{Y})\sim P_{t} }[ \ell (\mathcal{Y}\cdot f^{*}_{t}( \mathcal{X} ) ) ] \notag\\
	&\leq \sum_{i=1}^{T}p(t) \mathbb{ E } _{(\mathcal{X},\mathcal{Y})\sim P_{t} } \left|  h_{t}( \mathcal{X} ) ) -f^*_{t}( \mathcal{X} ) ) \right|  \notag \\
	&\leq \sum_{i=1}^{T}p(t) \left(\mathbb{ E } _{(\mathcal{X},\mathcal{Y})\sim P_{t} } \left|  h_{t}( \mathcal{X} ) ) -f^*_{t}( \mathcal{X} ) ) \right|^{2} \right)^{1/2}  \notag
	\end{aligned}
	\end{eqnarray}
	where the first inequality follows from Lipschitz continuity and the second one follows from Holder inequality.
	For simplicity, we denote $\|h_{t}-f_{t}\|_{\mathcal{L}^{2}_{P_{t}}}:= \left(\mathbb{ E } _{(\mathcal{X},\mathcal{Y})\sim P_{t} } \left|  h_{t}( \mathcal{X} ) ) -f^*_{t}( \mathcal{X} ) ) \right|^{2} \right)^{1/2}$.
	Through plugging above the last inequality to $D(N,T,\rho_{ 1},\rho_{ 2})$, we have
	\begin{eqnarray}
	\begin{aligned}
	&D(N,T,\rho_{ 1},\rho_{ 2}) \notag \\
	\leq&
	\inf_{ h _ { 1}, \cdots ,h _ { T }\in\mathcal { H } _{K} }
	\left\{
	\sum_{ t = 1}^{T} p(t) \|h_{t}-f^{*}_{t}\|_{\mathcal{L}^{2}_{P_{t}}} \notag +\frac{\rho _ { 1}}{N} \sum _ { t = 1} ^ { T }p(t)\| f^{*}_ { t } \|_{K} ^ { 2} \notag \right. \\
	& \left. + \frac{\rho _ { 2}}{N}\sum _ { t = 1} ^ { T } \left\| h_ { t } -\sum _ { s = 1} ^ { T }p(s) h _ { s } \right\|_{K}^ { 2}
	\right\} \notag   	 \\
	\leq& \inf_{R>0} \left\{
	\inf_{ \tiny\begin{array}{c}
		h _ { 1}, \cdots ,h _ { T }\in\mathcal { H } _{K}\\
		\|h_{t}\|_{K}\leq R
		\end{array} }
	\left\{
	\sum_{ t = 1}^{T} p(t) \|h_{t}-f^{*}_{t}\|_{\mathcal{L}^{2}_{P_{t}}, } 	 	 \right\} \notag \right. \\
	&+\frac{\rho _ { 1}  R^{2}}{N}+ \frac{2\rho _ { 2}R^{2}}{N}
	\Bigg\} \notag \\
	\leq& \inf_{R>0} \left\{
	C_{0}C_{s}\left(\log R \right)^{-s/4}
	+\frac{\rho _ { 1}  R^{2}}{N}+ \frac{2\rho _ { 2}R^{2}}{N}
	\right\} \notag
	\end{aligned}
	\end{eqnarray}
	where the last inequality follows immediately from \cite{smale2003estimating,zhou2013density}.
	In above, $C_{0}$ and $s$ are two positive constants, while $C_{s}$ is another constant depends on $s$.
	By choosing $R=N^{1/2-\epsilon}$ where $\epsilon>0$, we have
	\begin{equation*}
	\begin{aligned}
	&D(N,T,\rho_{ 1},\rho_{ 2}) \\
	\leq& C_{0}C_{s}\left( (1/2-\epsilon)\log N \right)^{-s/4}
	+{\rho _ { 1}  R^{-\epsilon}}+{2\rho _ { 2}R^{-\epsilon}}\\
	=& \mathcal{O} \left(\log N\right)^{-s/4}.
	\end{aligned}
	\end{equation*}
	which proves the result.
\end{proof}

\begin{proof}[{\bf The proof of Lemma 5}]
	By the definition and some easy calculation, we get
	\begin{eqnarray}
	\begin{aligned}
	&\mathcal{F}(N,T,\rho_1, \rho_2)\\
	=&
	-\frac{\rho_{ 1}}{N}\sum_{ i = 1}^{T} \left(p(t)-\frac{m_{t}}{N}\right)\| f^{\mathcal{H}}_ { t } \|_{K} ^ { 2}  \notag\\
	&+\frac{2\rho _ {2}}{N}\left\| \sum _ { t = 1} ^ { T } \left( p(t) - \frac{m_t}{N}\right) f^{\mathcal{H}}_{t} \right\|_{K} \sum _ { t = 1} ^ { T } \left\| f^{\mathcal{H}}_ { t } -\sum _ { s = 1} ^ { T }\frac{m_s}{N} f^{\mathcal{H}}_{s} \right\|_{K}\notag\\
	\leq& -\frac{\rho_{ 1}}{N}\sum_{ i = 1}^{T} \left(p(t)-\frac{m_{t}}{N}\right)\| f^{\mathcal{H}}_ { t } \|_{K} ^ { 2}  \notag\\
	&+\frac{2\rho _ {2}}{N} \left(\sum_{t=1}^{T} \left\|p(t)-\frac{m_{t}}{N}\right\|_{K} \cdot \left\|f_{t}^\mathcal{H}\right\|_{K} \right) \\
	&\cdot	\left( \sum _ { t = 1} ^ { T } \left\| f^{\mathcal{H}}_ { t } \right\|_{K}+ \sum _ { s = 1} ^ { T } T \left\| f^{\mathcal{H}}_{s} \right\|_{K} \right)\notag
	\end{aligned}
	\end{eqnarray}
	where the terms $\frac{\rho_1}{N}$, $\frac{\rho_2}{N}$, and $p(t)-\frac{m_{t}}{N}$ determine the convergent rate of the frequency error.
	Since, for each task $t$, the random variable $m_{t}$ is a counting process with bounded and independent increments, therefore, by \textit{Lemma \ref{lemma 7}}, there holds
	\begin{equation*}
	p(t)-\frac{m_{t}}{N} =  \mathbb{E} \left(\frac{m_{t}}{N}\right) - \frac{m_{t}}{N} = \mathcal{O} \left(N^{-\frac{1}{2}+\epsilon}\right) \quad \text{a.s.}.
	\end{equation*}
	Therefore, $\mathcal{F}(T,N,\rho_1, \rho_2)\leq  \mathcal{O} \left(N^{-\frac{3}{2}+\epsilon}\right) ~ \text{a.s.}$, which shows the result.
\end{proof}





